%% file: main.tex
\documentclass[10pt]{article} 
\usepackage[accepted]{tmlr}

\usepackage{microtype}
\usepackage{graphicx}
\usepackage{subfigure}
\usepackage{booktabs} 

\usepackage{hyperref}
\usepackage{centernot}
\usepackage{url}

\usepackage{amsmath}
\usepackage{amssymb}
\usepackage{mathtools}
\usepackage{amsthm}
\usepackage{multirow}
\usepackage{algorithm}
\usepackage{algorithmic}

\theoremstyle{definition}
\newtheorem{definition}{Definition}[]
\theoremstyle{proposition}

\theoremstyle{corollary}
\newtheorem{corollary}{Corollary}[]
\theoremstyle{assumption}

\newtheorem{theorem}{Theorem}[]
\newtheorem*{theorem*}{Theorem}
\newtheorem{lemma}{Lemma}[]
\newtheorem*{lemma*}{Lemma}
\newtheorem{remark}{Remark}[]
\newtheorem*{remark*}{Remark}
\newtheorem{observation}{Observation}[]
\newtheorem*{observation*}{Observation}
\newtheorem*{proposition*}{Proposition}

\title{Causal Graph Learning via Distributional Invariance \\of Cause-Effect Relationship}

\author{\name Nang Hung Nguyen \email nanghung-nguyen@g.ecc.u-tokyo.ac.jp \\
      \addr The University of Tokyo, Japan
      \TMLRAND
      \name Phi Le Nguyen \email lenp@soict.hust.edu.vn \\
      \addr Institute for AI Innovation and Societal Impact (AI4LIFE), Vietnam\\ Hanoi University of Science and Technology, Vietnam
      \TMLRAND
      \name Thao Nguyen Truong \email nguyen.truong@aist.go.jp \\
      \addr National Institute of Advanced Industrial Science and Technology (AIST), Japan
      \TMLRAND
      \name Trong Nghia Hoang \email trongnghia.hoang@wsu.edu \\
      \addr 
      School of Electrical Engineering and Computer Science, \\
      Voiland College of Engineering and Architecture, \\
      Washington State University, Pullman, Washington, US
      \TMLRAND
      \name Masashi Sugiyama \email sugi@k.u-tokyo.ac.jp\\
      \addr RIKEN, Japan \\
      The University of Tokyo, Japan}

\def\month{01}  
\def\year{2026} 
\def\openreview{\url{https://openreview.net/forum?id=Ey38Q882Xe}} 

\newcommand{\CI}{\mathrel{\perp\mspace{-10mu}\perp}}
\newcommand{\nCI}{\centernot{\CI}}

\newcommand{\hung}[1]{\textcolor{black}{#1}}

\newcommand{\ourmethod}{\textbf{GLIDE}}
\newcommand{\revise}[1]{\textcolor{black}{#1}}

\begin{document}
\maketitle

\begin{abstract}
This paper introduces a new framework for recovering causal graphs from observational data, leveraging the \revise{observation} that the distribution of an effect, conditioned on its causes, remains invariant to changes in the prior distribution of those causes. This insight enables a direct test for potential causal relationships by checking the variance of their corresponding effect-cause conditional distributions across multiple downsampled subsets of the data. These subsets are selected to reflect different prior cause distributions, while preserving the effect-cause conditional relationships. Using this invariance test and exploiting an (empirical\footnote{\revise{Empirical sparsity refers to the observation that in most benchmark datasets used for causal discovery evaluation, the augmented bidirectional graph (see Theorem~\ref{theo:plausible_parent}) indicating whether two nodes are in each other's Markov blanket is $p$-degenerate (as defined in Definition \ref{def:degeneracy}) with a small $p$.}}) sparsity of most causal graphs, we develop an algorithm that efficiently uncovers causal relationships with quadratic complexity in the number of observational variables, reducing the processing time by up to $25\times$ compared to state-of-the-art methods. Our empirical experiments on a varied benchmark of large-scale datasets show superior or equivalent performance compared to existing works, while achieving enhanced scalability. 
\end{abstract}

\section{Introduction}
\label{sec:intro}
\input{intro}

\section{Related Works}
\label{sec:related}
\input{related}

\section{Problem Formulation and Background} 
\label{sec:background}

\input{background}

\begin{figure*}
\centering
\includegraphics[width=0.78\linewidth]{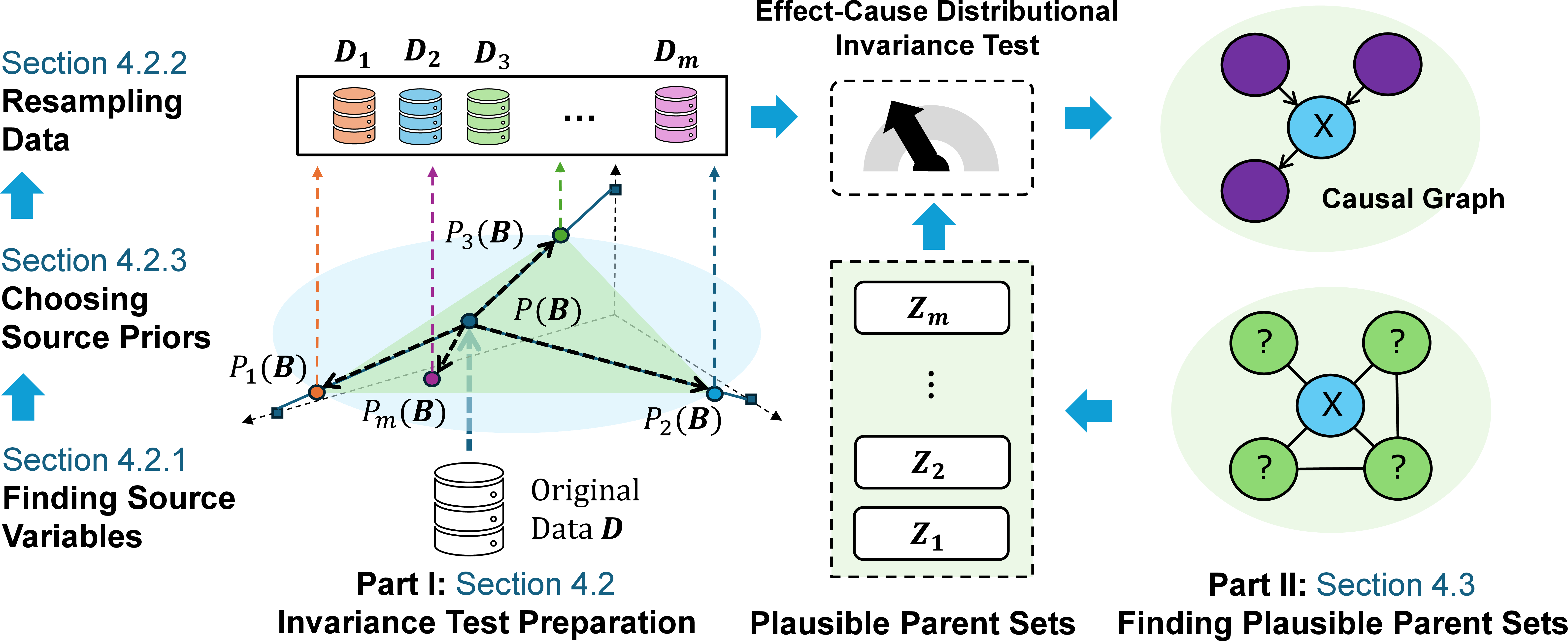}
\caption{Overall workflow of our proposed \ourmethod~ framework which comprises two main steps: (a) an algorithmic configuration --- the key effect-cause distributional invariance test that helps test potential parent-child relationships (Section~\ref{subsec:augment}); and (b) a graph search algorithm exploiting prior knowledge of each node's Markov blanket and an (empirically verified) sparsity of causal graphs to provably reduce the number of tests to recover the true causal graph (Section~\ref{subsec:plausible}).\vspace{-0.4cm}}
\label{fig:overview}
\end{figure*}

\section{Invariance of $P$(Effect$\mid$Cause)}
\label{sec:method}
\hung{First and foremost, we state our premise regarding the invariance of the causal relationships, which is well aligned with principles of causality discussed in~\citep{peters2016causal} and is commonly accepted as an intrinsic property in Bayesian structures:
\begin{observation}[Causal Invariance]
    \label{premise}
    \revise{Let $X\in \boldsymbol{X}$ be any variable and $\boldsymbol{B} \subset \boldsymbol{X}$ be the set of source variables\footnote{A source variable/node (a.k.a. root node, exogeneous node) is a node that has no parent nodes}. The causal conditional distribution $P(X\mid\mathrm{Pa}[X])$ remains invariant with respect to changes of the joint distribution $P(\boldsymbol{B})$ over the \underline{source variables} $\boldsymbol{B}$.}
\end{observation}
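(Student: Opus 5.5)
We would argue within the structural causal model (SCM) framework that the paper adopts in Section~\ref{sec:background}. Every variable $X\in\boldsymbol{X}$ is generated as $X = f_X(\mathrm{Pa}[X], N_X)$ by a fixed mechanism $f_X$, with exogenous noise $N_X$. The noises are jointly independent and, crucially, are not source variables. A change of $P(\boldsymbol{B})$ is then modelled as a soft intervention that replaces only the marginal law of the source nodes $\boldsymbol{B}$. This means substituting their exogenous draws, or the mechanisms $f_B$ for $B\in\boldsymbol{B}$, while leaving every $f_X$ with $X\notin\boldsymbol{B}$ and the law of every $N_X$ with $X\notin\boldsymbol{B}$ untouched. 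The goal is to show that $P(X=x\mid \mathrm{Pa}[X]=\mathbf{u})$ computed under the original and the modified model agree, for every $\mathbf{u}$ in the support of both.

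The key steps, in order, are as follows.

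First, we dispose of the case $X\in\boldsymbol{B}$. Here $\mathrm{Pa}[X]=\emptyset$, so the conditional is the marginal of $X$. We read the statement as concerning the non-source variables, which are the only ones whose conditionals carry cause-effect information.

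Second, for $X\notin\boldsymbol{B}$ we use the factorisation of the joint distribution implied by the DAG,
\begin{equation*}
P(\boldsymbol{X}) = P(\boldsymbol{B})\prod_{X\notin \boldsymbol{B}} P(X\mid \mathrm{Pa}[X]),
\end{equation*}
where each factor $P(X\mid\mathrm{Pa}[X]=\mathbf{u})$ equals the pushforward of $P(N_X)$ under $f_X(\mathbf{u},\cdot)$. To justify this, we note that $N_X$ is independent of $\mathrm{Pa}[X]$: by acyclicity, the parents are functions of $\boldsymbol{B}$ and of noises of non-descendants of $X$ only.

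Third, we observe that this pushforward depends only on $(f_X, P(N_X))$ and not on $P(\boldsymbol{B})$. Hence the factor is identical in both models wherever $\mathbf{u}$ has positive probability.

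The main obstacle is the independence $N_X \CI \mathrm{Pa}[X]$ after the change of $P(\boldsymbol{B})$. If the new source distribution were allowed to correlate with some $N_X$, for instance through a hidden confounder, the invariance would fail. So the step we would treat most carefully is formalising ``changes of $P(\boldsymbol{B})$'' as acting only on the source nodes' own exogenous variables, which preserves joint independence of all noises. Under causal sufficiency this is immediate. Without it, we would have to state it as an assumption, consistent with the paper presenting the statement as an observation or premise grounded in \citep{peters2016causal} rather than a theorem.

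A secondary subtlety is support. Conditionals are only defined on $\mathbf{u}$ with positive density, so invariance must be stated on the common support. This is exactly the caveat that the downsampling procedure must respect.
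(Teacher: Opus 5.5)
Your argument is correct, but it follows a different route from the paper. The paper never proves this statement; it posits it as a premise. The only justification it offers (the Remark after it, the definition $P_i(\boldsymbol{X}) = P_i(\boldsymbol{B})\cdot P(\boldsymbol{X}\setminus\boldsymbol{B}\mid\boldsymbol{B})$ in Theorem~\ref{thm:1}, and the proof of Lemma~\ref{lemma:base}) is purely distributional. By the local Markov factorization $P(\boldsymbol{X}\setminus\boldsymbol{B}\mid\boldsymbol{B}) = \prod_{X\notin\boldsymbol{B}}P(X\mid\mathrm{Pa}[X])$, a perturbation that swaps only the source factor leaves every CPD factor untouched. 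That route is shorter but essentially definitional, since the invariance is built into what counts as a ``change of $P(\boldsymbol{B})$''. It also tacitly uses the standard fact that each factor of the new product is still the conditional $P_i(X\mid\mathrm{Pa}[X])$ of the new joint, which follows by summing out the descendants of $X$ in reverse topological order. Your SCM route instead derives the invariance from modular mechanisms and $N_X\CI\mathrm{Pa}[X]$, and the pushforward representation lets you bypass that marginalization step. It also isolates the one condition whose failure breaks the invariance: independence of the new source law from the non-source noises. That is exactly what selecting samples on the values of $\boldsymbol{B}$ alone (Section~\ref{sec:re-sampling}) preserves. Two small corrections. First, restricting to $X\notin\boldsymbol{B}$ is not just a reading but necessary: for a source, the ``conditional'' is its own marginal, which does change. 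The paper's Remark and Theorem~\ref{thm:1} likewise restrict to non-sources. Second, the paper's priors are arbitrary points of the simplex over $\mathrm{Dom}(\boldsymbol{B})$; for instance, the hull vertices $\alpha_k P(\boldsymbol{B}) + (1-\alpha_k)\delta_k(\boldsymbol{B})$ are not product measures. So replacing the sources' exogenous draws does not preserve joint independence of \emph{all} noises. What survives, and what your argument actually needs, is that the source block is independent of $(N_Y)_{Y\notin\boldsymbol{B}}$ and that these noises are mutually independent.
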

\begin{remark}
\revise{A direct implication of Observation~\ref{premise} is that the conditional distribution of any variable conditioned on its parents remains invariant with respect to the change in the joint distribution $P(\boldsymbol{X})$ if the change is originated from $P(\boldsymbol{B})$ since $P(\boldsymbol{X}) = P(\boldsymbol{B})\cdot\prod_{X\notin\boldsymbol{B}} P(X\mid \mathrm{Pa}[X])$ (local Markov Conditions).}
\end{remark}
Relying on this logic, we design our proposed framework accordingly. To commence with, our overview is presented in Section~\ref{subsec:overview} where we devise an efficient test to identify the causal parents of any variable. However, utilizing such a method requires that we can simulate conditional distributions across different joint distributions $P(\boldsymbol{X})$. As we will show in Section~\ref{subsec:augment}, this can be done by sampling environments of distinct joint distributions of the source variables $P(\boldsymbol{B})$. For this, we design an approximation search method to locate the source variables, then we devise a sampling strategy that is both theoretically sound and efficient. Finally, we present a plausible parent selecting strategy in Section~\ref{subsec:plausible}. The bird's eye view of our workflow is depicted in Figure~\ref{fig:overview}.
}
\subsection{An Overview}
\label{subsec:overview}
The core idea of our causal discovery framework is based on the invariance of the (marginal) effect-cause conditional distribution $P(X \mid \mathrm{Pa}[X])$ induced from $P(\boldsymbol{X})$ across different choices of the prior $P(\boldsymbol{B})$ over the source variables $\boldsymbol{B} \triangleq \{B \mid B \in \boldsymbol{X} \ , \ \mathrm{Pa}[B] = \emptyset\} \subseteq \boldsymbol{X}$ of the true causal graph $G = (\boldsymbol{X}, \boldsymbol{E})$ --- see Theorem~\ref{thm:1} below (a detailed proof is provided in Appendix~\ref{supp:proof-premise}).
\begin{theorem}[Effect-Cause Distributional Invariance] 
\label{thm:1}
Let $P_1(\boldsymbol{B}), P_2(\boldsymbol{B}), \ldots, P_m(\boldsymbol{B})$ denote a set of $m$ different priors over $\boldsymbol{B}$. Let $P_i(\boldsymbol{X}) = P_i(\boldsymbol{B}) \cdot P(\boldsymbol{X} \setminus \boldsymbol{B} \mid \boldsymbol{B})$ denote the corresponding augmentation of the true data distribution $P(\boldsymbol{X})$ when we replace its marginal prior $P(\boldsymbol{B})$ with $P_i(\boldsymbol{B})$. For each variable $X \in \boldsymbol{X}\setminus\boldsymbol{B}$, its true causal parents $\mathrm{Pa}[X]$, \revise{its spouses $\mathrm{Sp}[X]$}, and plausible candidate set $\mathcal{Z} = \{\boldsymbol{Z}_1, \boldsymbol{Z}_2,\dots\}$, \revise{assuming that $\mathrm{Pa}[X]\cap \mathrm{Sp}[X] = \mathrm{Sp}[X] \cap \boldsymbol{B} = \emptyset$}, we have:
\begin{eqnarray}
    \revise{\exists \boldsymbol{Z}_j \in \mathcal{Z}: \mathbb{V}_{P_i(\boldsymbol{X})\ \sim\  \mathcal{P}}\Big[P_i\Big(X \mid \boldsymbol{Z}_j\Big)\Big]\ =\ 0 \quad\Leftrightarrow\quad \boldsymbol{Z}_j = \mathrm{Pa}[X]}
    \label{eq:premise}
\end{eqnarray}
and $P_i(\boldsymbol{X})$ is drawn from $\mathcal{P} \triangleq (P_1(\boldsymbol{X}), P_2(\boldsymbol{X}), \ldots, P_m(\boldsymbol{X}))$. Here, \revise{$\mathbb{V}_{P_{i}(\boldsymbol{X}) \sim \mathcal{P}}\left[P_{i}\left(X \mid \boldsymbol{Z}_j\right)\right]$ denotes the statistical variance of the induced $P_i(X | \boldsymbol{Z}_j)$ over the random choice of the joint distribution $P_i(\boldsymbol{X})$}.
\end{theorem}

Intuitively, we argue that the parent set of a variable $X$ is one of those when conditioned on, the probability of observing $X$ becomes unchanged\revise{, hence zero statistical variance with respect to distributional changes of the source variables}. \revise{With mild assumptions (see Appendix~\ref{supp:proof-premise}), we can assure the uniqueness of such $\boldsymbol{Z}_j$ in Equation~\ref{eq:premise}.}
This result reveals a principled test of whether a subset $\boldsymbol{Z}_j \revise{\in \mathcal{Z}} \subseteq \boldsymbol{X}$ is the parent set of $X \in \boldsymbol{X}$ according to the true causal graph $G$. 
The intuition is that if we have access to $D_1, D_2, \ldots D_m$ such that $D_i \sim P_i(\boldsymbol{X})$, we can test whether $\boldsymbol{Z}_j=\mathrm{Pa}[X]$ via checking whether the sample variance of the empirical conditional $P_i(X \mid \boldsymbol{Z}_j)$ is distinguishably small. 
{As we show in Theorem~\ref{thm:4} later, the \hung{aforementioned} $\{D_i\}_{i=1}^m$ can be simulated by sampling from the observational data $D$. Further details on the sampling method is discussed in Section~\ref{sec:re-sampling}. Given $D_1, D_2, \ldots D_m$ above, we perform the invariance test as formalized below:}

{\bf A. Effect-Cause Invariance Test.} Given $m$ augmented datasets $D_1, D_2, \ldots, D_m$ which are resampled from $D \sim P(\boldsymbol{X})$ such that $D_i \sim P_i(\boldsymbol{X})$, \revise{then $\boldsymbol{Z}_j\in \mathcal{Z}$ is the causal parent set of $X$, i.e., $\boldsymbol{Z}_j = \mathrm{Pa}[X]$, when}
\begin{align}
\frac{1}{m}\sum_{i=1}^m \Big\|P_i\Big(X \mid \boldsymbol{Z}_j\Big) - \overline{P}\Big(X \mid \boldsymbol{Z}_j\Big)\Big\|^2\ \simeq\ 0,\quad \text{where}\quad \overline{P}\Big(X \mid \boldsymbol{Z}_j\Big) \ =\ \frac{1}{m}\sum_{i=1}^m P_i\Big(X \mid \boldsymbol{Z}_j\Big) \ .\label{eq:empirical_invariance}
\end{align}
\hung{Empirically, we consider $\simeq 0$ equivalent to $< \varepsilon$ where $\varepsilon\sim 10^{-3}$.} This test will become more accurate with more source priors.
When $m$ is sufficiently large and $\{P_i(\boldsymbol{B})\}_{i=1}^m$ are sufficiently representative of the entire space of source priors, the test might not be perfect but remains highly accurate (see Section~\ref{sec:eval}). \revise{In the case multiple $\boldsymbol{Z}_j \in \mathcal{Z}$ satisfy Equation~\ref{eq:empirical_invariance}, the one with minimum variance is chosen.}

{\bf \revise{B.} High-Level Framework.} Supposing that we know how to generate $D_1, D_2, \ldots, D_m$ (see Section~\ref{subsec:augment}) for which $D_i \sim P_i(\boldsymbol{X})$ as required in Theorem~\ref{thm:1}, solving \revise{Equation~\ref{eq:empirical_invariance}} can be achieved via exhaustively checking all subsets $\boldsymbol{Z}$ as candidates for $\mathrm{Pa}[X]$. Repeating this for all $X$ allows us to recover the causal graph. This process is, however, impractical since its complexity is exponential in $d$.

Fortunately, this can be avoided using the local Markov condition {\bf (II)} (see Section~\ref{sec:background}) which implies that (i) $\boldsymbol{Z} \subseteq \mathbb{M}(\boldsymbol{X})$ if $\boldsymbol{Z} = \mathrm{Pa}[X]$ and (ii) there are at most $p = O(d)$ candidates for $\mathrm{Pa}[X]$ which can be provably identified via finding maximum cliques on augmented bidirectional graphs. Under an empirically verified assumption on causal graph sparsity, this can be achieved with a customized DFS procedure with $O(d^2)$ complexity (see Theorem~\ref{theo:plausible_parent}, Section~\ref{subsec:plausible}). This results in an $O(d^2)$ total complexity for our causal discovery framework as detailed below.

{\bf \revise{C.} Time Complexity.} As there are $d$ observational variables, our framework needs to make $d$ calls to the parent-finding routine in part ({\bf B}). As each routine will consider at most $p = O(d)$ plausible parent candidates, the per-call complexity is $O(md)$. This amounts to a total cost of $O(md^2)$ to recover the full causal graph. This can be enabled with (a) a practical $O(d^2)$ overhead to find the Markov blankets for all variables following the prior work of~\citet{edera2014grow}; and (b) another $O(d^2 + m|D| + m|\boldsymbol{B}|)$ overhead to construct $D_1, D_2\ldots, D_m$ for \revise{Equation~\ref{eq:empirical_invariance}} (Section~\ref{sec:practical}). The overall complexity (including the overhead) is therefore $O(md^2 + m|D| + m|\boldsymbol{B}|)$.

\hung{
{\bf \revise{D.} Identifiability.} Generally, our proposed method does not guarantee a unique solution given solely the observational dataset. However, assuming that the basis set found by Theorem~\ref{thm:2} matches the set of source variables, {\bf GLIDE} can recover the correct causal 
\revise{parent set for all nodes that satisfy Theorem~\ref{premise}'s assumptions}. In practice, we can only estimate the basis set (Theorem~\ref{thm:2a}) and \revise{there exist variables who do not satisfy assumptions in Theorem~\ref{premise}}. Regardless, as shown in Section~\ref{sec:eval}, our performance remains superior over state-of-the-art (SOTA) baselines.
}

To substantiate the above high-level framework, we need to (i) choose representative variants of the source priors to improve the test reliability (Section~\ref{subsec:augment}); and (ii) find all plausible sets of candidate for $\mathrm{Pa}[X]$ for all $X \in \boldsymbol{X}$ which are guaranteed to have sizes at most $O(d)$ (Section~\ref{subsec:plausible}). 

\subsection{Augmenting Source Prior}
\label{subsec:augment}
\input{data-aug}

\subsection{Finding Plausible Parent Sets}
\label{subsec:plausible}
\input{causal_identify}
\section{Experiment}
\label{sec:eval}
\input{experiment}

\section{Conclusion}
\input{conclusion}
\section*{Impact Statement}
Our work focuses on improving the scalability and reliability of the causal discovery procedure. The findings in this work offer a promising direction for further research and practical applications in causal inference, especially in scenarios involving large and complex datasets. As such, it could have significant broader impact on future research due to the ever-growing ubiquity of larger datasets. 
The source code and the datasets used for evaluation are publicly accessible, thus raising no ethical concerns.

\section*{Acknowledgments}
MS was supported by JST ASPIRE Grant Number JPMJAP2405. PLN was funded by Hanoi University of Science and Technology (HUST) under grant number  T2024-TĐ-002. In addition, this work utilized GPU compute resource at SDSC and ACES through allocation CIS230391 from the Advanced Cyberinfrastructure Coordination Ecosystem: Services and Support (ACCESS) program \citep{ACCESS-resource}, which is supported by U.S. National Science Foundation grants \#2138259, \#2138286, \#2138307, \#2137603, and \#2138296. The authors would also like to thank anonymous reviewers for their careful reviews and insightful comments.

\bibliography{reference}
\bibliographystyle{tmlr}

\newpage
\appendix
\section{PROOFS}
\label{app:proof}

\input{supp-proofs}

\section{ALGORITHMS \& ANALYSIS}
\label{app:al-an}
\input{supp-pseudo}
\input{supp-find-basis}
\input{supp-plausible-parents}

\section{ADDITIONAL EXPERIMENTS}
\input{supp-setup}

\input{supp-ablation}
\section{SUPPORT MATERIALS}
\input{supp-weak-intervention}

\input{discussion}

\section{REPRODUCIBILITY}
\input{supp-reproduce}

\end{document}

%% file: intro.tex
A core challenge in causal learning is finding a directed acyclic graph (DAG) that captures the cause-effect relationships between variables in an observational dataset~\citep{zanga2022survey}. A direct approach to uncover these causal links is through intervention which conducts more experiments to confirm whether changes in one set of variables will consequently affect the outcome distribution of another variable~\citep{peters2016causal,guo2024causal}. However, such interventional experiments can be prohibitively expensive. Furthermore, recovering the entire causal graph requires running intervention on an exponential number of candidate cause-effect relationships among subsets of variables, thus rendering this approach computationally costly and impractical~\citep{pearl2009causality}.

To sidestep such expensive interventions, numerous approximation approaches have been developed to instead find an equivalence class of DAGs~\citep{pearl2016causal,peters2017elements}, all of which are compatible with a set of statistical evidence or constraints derived from the observational data\footnote{Finding the true causal graph is not possible without running intervention since different graphs can entail the same set of statistical constraints induced by observational data.}. Most of these approaches are formulated as graph searches that proceed either by finding statistical evidence to eliminate incompatible graph candidates~\citep{spirtes1991algorithm,spirtes2001anytime} or by optimizing for a heuristic score defined on graphs~\citep{hauser2012gies,rolland2022score,montagna2023scalable}. Such approaches however are either (a) not scalable due to the expensive computation cost of running statistical tests on an exponentially large number of cause-effect relationship candidates (i.e., subsets of variables); or (b) less accurate due to the heuristic nature of the score function defined over the graph space. 

For examples, \citet{spirtes1991algorithm} and \citet{spirtes2001anytime} required performing local conditional independence tests between effect variables and their corresponding sets of causes across all candidate causal relationships, incurring a process that scales exponentially with the number of features/variables. In contrast, \citet{hauser2012gies}, \citet{rolland2022score}, and \citet{montagna2023scalable} proposed direct optimization of a global heuristic score on graphs, avoiding the need to solve such constraint satisfaction problems involving an exponentially large set of local constraints. There are also other approaches in this direction which further impose simplified assumptions on the functional structure of the causal relationship (e.g., linear relationship between effects and causes perturbed with Gaussian noises). Based on those assumptions, the problem of causal learning can be formulated as a continuous optimization task and can be solved by more effective solution techniques~\citep{kalainathan2022structural, lachapelle2019gradient,ng2022masked,ng2019graph}. However, these approaches often perform less robustly when the heuristic scores or modeling assumptions do not fit well with the nature of the observational data (see Section~\ref{sec:eval}). 

To mitigate the aforementioned limitations, we propose a new solution perspective based on a novel causal test. Such a method avoids imposing additional assumptions on the effect-cause data generation process while also achieving better performance with affordable computation costs. Our approach leverages the invariance of the effect-cause conditional distribution $P(\mathrm{effect} \mid \mathrm{cause})$ to changes in the prior cause distribution $P(\mathrm{cause})$. This inspires a principled test for causal relationships via estimating the variance of $P(X \mid \boldsymbol{Z})$ across synthetic data augmentations that reflect different cause distributions $P(\boldsymbol{Z})$. That is, $(\boldsymbol{Z} \rightarrow X)$ potentially represents a causal relationship if the data-induced $P(X \mid \boldsymbol{Z})$ does not change much\footnote{The variance of the true effect-cause conditional distribution $P(\mathrm{effect} \mid \mathrm{cause})$ with respect to changes in $P(\mathrm{cause})$ is theoretically zero but in practice, $P(\mathrm{effect} \mid \mathrm{cause})$ has to be estimated using observational data which causes (small) additional variance due to (slight) variations in its estimation across augmented datasets.} across different choices of $P(\boldsymbol{Z})$. This test can then be integrated with a systematic search that identifies candidates for the causal parents of all variables with quadratic complexity, achieving improved scalability and performance over previous methods. Accordingly, our proposal method relies on the contrapositive statement: if invariance does not occur, causality does not exist. 
\revise{Furthermore, with mild assumptions, we can recover $\mathrm{Pa}[X]$ among the set of plausible candidates by Theorem~\ref{premise}.} 
Our technical contributions that substantiate the above include:

{\bf 1.}~An invariance test that reliably determines $\boldsymbol{Z} = \mathrm{Pa}[X]$ for each effect variable $X$ and a candidate set of causes $\boldsymbol{Z}$ via checking the variance of $P(X \mid \boldsymbol{Z})$ against changes in $P(\boldsymbol{Z})$: In particular, the invariance test identifies and constructs the most informative data augmentations to reliably approximate the variance of $P(X \mid \boldsymbol{Z})$ across potential changes to $P(\boldsymbol{Z})$. This is achieved via a downsampling scheme of the observational data that modifies $P(\mathrm{cause})$ without changing the conditional $P(\mathrm{effect}\mid \mathrm{cause})$ for true $(\mathrm{cause}, \mathrm{effect})$ tuples (Section~\ref{subsec:augment}).

{\bf 2.}~A practical parent-finding algorithm that (i) adopts a previous approach~\citep{edera2014grow} to find the Markov blankets of all variables using observational data; and (ii) uses this information to construct an augmented bidirectional graph for each variable whose maximal cliques correspond to its most plausible candidate parent sets: Due to the sparsity of such augmented graphs, the number of maximal cliques (correspondingly, parent candidates -- Section~\ref{subsec:plausible}) is quadratic in the number of variables and can be enumerated by an effective depth-first search (DFS) algorithm. The developed invariance test (Section~\ref{subsec:augment}) can then be used to find the true parent set among the plausible candidates for each variable, thus recovering the true causal graph.

{\bf 3.}~An extensive empirical evaluation of our proposed causal discovery framework on a variety of diverse benchmark datasets including both synthetic and large-scale real-world datasets: The reported results consistently show that our framework performs better than or comparable to prior work in terms of causal discovery performance while achieving generally better scalability, with an average of up to $73.3\%$ reduction in spurious rate and (up to) $25\times$ reduction in processing time (Section~\ref{sec:eval}).

%% file: related.tex
Existing causal discovery methods that aim to recover causal graphs from observational data without using interventional data can be categorized in three main groups~\citep{glymour2019review}:

First, constraint-based methods aim to recover an equivalence class of causal graphs via deriving statistical evidence from the observational data to eliminate incompatible candidates as much as possible. Their main goal is to minimize the chance of mistaking correlation for causation and hence, maximizing the reliability of the output graph. For example, Peter-Clark (PC)~\citep{spirtes1991algorithm} and Fast Causal Inference (FCI)~\citep{spirtes2001anytime, spirtes2013causal} run all possible conditional independence tests between all potential $(\mathrm{effect}, \mathrm{cause})$ tuples to find the most reliable relationship candidates that pass all tests. In practice, while such approaches often produce reliable results, they do not scale well to high-dimensional datasets since the number of $(\mathrm{effect}, \mathrm{cause})$ candidate tuples often grows exponentially in the number of variables/features 
~\citep{spirtes2001causation}.

Alternatively, score-based methods instead use heuristic scores defined on graphs to reformulate causal learning as an optimization task which associate true causal graphs with those that maximize the score~\citep{heckerman1995learning, chickering2002ges,teyssier2012ordering,solus2021consistency}. Thus, unlike constraint-based methods which cast causal learning as a constraint satisfaction task that involves an exponentially large set of local constraints, scored-based methods recast it as a global optimization task, which often admits more scalable solutions. As a result, existing score-based methods~\citep{hauser2012gies, rolland2022score,montagna2023scalable} often scale better to larger datasets. However, the heuristic design of the score function imposes implicit assumptions on the causal structure of data which are often violated in practice. Consequently, the reliability (i.e., not mistaking correlation for causation) of methods in this group is relatively lower than those of constraint-based methods (Section~\ref{sec:synthetic-continuous}, Figures~\ref{fig:notears} and~\ref{fig:notears-extreme}). 


To reconcile the above conflicting goals of reliability and scalability, model-based methods adopt additional assumptions on the effect-cause generation model of the observed data (e.g., linear relationship perturbed with Gaussian noise~\citep{zheng2018dags}). This often allows a provable reformulation of the true causal structure as an optimal solution to a continuous optimization task that can be solved efficiently with numerous modern and scalable machine learning algorithms~\citep{scanagatta2015learning,zheng2018dags,zheng2020learning}. However, their performance is often not stable in scenarios where such assumptions on the data generation process do not hold, e.g., non-linear effect-cause relationships perturbed with non-Gaussian noises.

{\bf Existing Limitations.} Overall, existing methods are either limited by expensive processing costs (as seen in constraint-based approaches), unreliable performance due to the use of a heuristic score function and greedy nature of the optimization algorithms (as with score-based methods), or inapplicable to scenarios involving unknown data-generation models (as is the case with model-based approaches). To mitigate these limitations, we investigate an alternative approach to causal learning by exploiting the invariance of the effect-cause conditional distribution across different data augmentations that induce changes to the prior distribution of causes.~This inspires a highly scalable and effective causal discovery procedure, as detailed in Section~\ref{sec:method}.



%% file: background.tex
Let $D$ denote a dataset comprising $n$ observations $\boldsymbol{X}^{(1)}, \ldots, \boldsymbol{X}^{(n)}$ of a set $\boldsymbol{X} = (X_1, X_2, \ldots, X_d)$ of $d$ random variables. Supposing these observations are drawn independently from an unknown distribution $P(\boldsymbol{X}) = P(X_1, X_2, \ldots, X_d)$, we want to learn from $D$ a DAG $G = (\boldsymbol{X}, \boldsymbol{E})$ over $(X_1, X_2, \ldots, X_d)$ which is a causal model of $P(\boldsymbol{X})$ following Definition~\ref{def:causal-model}.

\begin{definition}[Causal Model \citep{peters2017elements}] 
\label{def:causal-model}
A direct acylic graph $G = (\boldsymbol{X}, \boldsymbol{E})$ is a causal model of $P(\boldsymbol{X})$ if every conditional independence derived from $P(\boldsymbol{X})$ can be derived from $G$, and vice versa. 
\end{definition}

A conditional independence $X \CI_P Y \mid \boldsymbol{Z}$ derived from $P(\boldsymbol{X})$ means $P(X \mid Y, \boldsymbol{Z}) = P(X \mid \boldsymbol{Z})$ where $P(X \mid Y, \boldsymbol{Z})$ and $P(X \mid \boldsymbol{Z})$ are marginal likelihoods derived from $P(\boldsymbol{X})$. On the other hand, a conditional independence $X \CI_G Y \mid \boldsymbol{Z}$ derived from $G$ means given $\boldsymbol{Z}$, $X$ and $Y$ are $d$-separated following the below definition of $d$-separation.


\begin{definition}[$D$-separation \citep{pearl2009causality}]
\label{def:d-separation}
Given two variables $X,Y \in \boldsymbol{X}$ and a set of variables $\boldsymbol{Z} \subseteq \boldsymbol{X}\setminus \{X,Y\}$, $X$ and $Y$ are $d$-separated given $\boldsymbol{Z}$ (i.e., $X \CI_G Y \mid \boldsymbol{Z}$) if any path between them contains either: (a) a fork $A\gets B \to C$ with $B \in \boldsymbol{Z}$, (b) a chain $A \to B \to C$ such that $B\in \boldsymbol{Z}$; and (c) a collider $A\to B \gets C$ such that $B$ or any of its descendants $\mathrm{Desc}[B]$ is not in $\boldsymbol{Z}$. \vspace{-2mm}
\end{definition}
Definition~\ref{def:causal-model} implies $G$ is a causal model of $P(\boldsymbol{X})$ when $X \CI_P Y \mid \boldsymbol{Z} \Leftrightarrow X \CI_G Y \mid \boldsymbol{Z}$. Thus, supposing $G$ is a causal model of $P(\boldsymbol{X})$, $X \CI_P \left(\boldsymbol{X}\setminus\mathrm{Desc}[X]\right) \mid \mathrm{Pa}[X]$ since $X \CI_G \left(\boldsymbol{X}\setminus\mathrm{Desc}[X]\right) \mid \mathrm{Pa}[X]$ due to $d$-separation -- see Definition~\ref{def:d-separation} -- where $\mathrm{Pa}[X]$ denotes the set of parents of $X$ in $G$. \vspace{1mm}

{\bf Local Markov Conditions.} The above means that each node $X$ is independent of its non-descendants $\boldsymbol{X}\setminus\mathrm{Desc}[X]$ given its parents $\mathrm{Pa}[X]$ according to $P(\boldsymbol{X})$, resulting in the following factorization: 
\textbf{(I)} $P\big(\boldsymbol{X}\big) =\prod_{i=1}^d P\big(X_i \mid \mathrm{Pa}\left[X_i\right]\big)$ which implies \textbf{(II)} $X \ \CI_P\  \big(\boldsymbol{X}\setminus\mathbb{M}(X)\big) \ \mid\  \mathbb{M}(X)$, 
where $\mathbb{M}(X)$ denotes the Markov blanket of a variable $X$ that consists of its immediate parents, its children, and its children's other parents. {\bf (II)} can also be verified via checking $d$-separation on $G$. 

{\bf Core Idea.} Supposing that Markov blanket in condition {\bf (II)} is known, an augmented bidirectional graph for each variable $X$ can be constructed such that each of its maximal cliques corresponds to a plausible candidate $\boldsymbol{Z}$ of the true causal parents $\mathrm{Pa}[X]$. We can leverage this representation to develop an effective DFS procedure that systematically enumerates through each candidate with a quadratic time complexity in the number of variables (Section~\ref{subsec:plausible}). Each candidate can be tested using the developed causal test via synthetic data augmentation (Section~\ref{subsec:augment}).

{For this, we adopt an existing algorithm that can recover the Markov blankets for all $X \in G$ from observational data $D$ with $O(d^2)$ complexity~\citep{edera2014grow}. This is guaranteed under the causal sufficient assumption~\citep{pearl2016causal} that there is no unobserved common causes (i.e., confounders) that affect the causal mechanism generating the observed data.}
 


%% file: data-aug.tex
To enable practical use of the parent-finding routine in \revise{Equation~\ref{eq:empirical_invariance}}, we need to (i) find the set $\boldsymbol{B}$ of sources, (ii) choose a representative set of source priors $P_1(\boldsymbol{B}), P_2(\boldsymbol{B})\ldots, P_m(\boldsymbol{B})$ so that the invariance test is reliable; and (iii) re-sample $D_i$ from $D$ so that $D_i \sim P_i(\boldsymbol{X}) = P_i(\boldsymbol{B}) \cdot P(\boldsymbol{X} \setminus \boldsymbol{B} \mid \boldsymbol{B})$. 

The above (iii) is essential because we do not have direct access to $P(\boldsymbol{X} \setminus \boldsymbol{B} \mid \boldsymbol{B})$. This means that we cannot compute $P_i(\boldsymbol{X})$ and induce $P_i(X \mid \boldsymbol{Z})$ to assess its variance directly. However, as $P(\boldsymbol{X} \setminus \boldsymbol{B} \mid \boldsymbol{B})$ can be simulated using the original data $D$, we can re-sample $D_i$ from $D$ so that $D_i \sim P_i(\boldsymbol{X})$, which then allows us to use $D_i$ to estimate $P_i(X \mid \boldsymbol{Z})$ and consequently, its variance.

For this, we will develop an algorithm to find the source variables $\boldsymbol{B}$ (Section~\ref{sec:find-source}). We will then derive a re-sampling procedure to obtain a downsampled dataset $D_i$ from $D$ such that $D_i \sim P_i(\boldsymbol{X})$ (Section~\ref{sec:re-sampling}). Last, we will detail a criterion to choose informative $P_i(\boldsymbol{B})$ (for the invariance test) based on the above re-sampling procedure and how to determine a set of representative source priors (Section~\ref{sec:source-priors}). 

\subsubsection{Finding Source Variables}
\label{sec:find-source}
To find the set of source variables (i.e., those with no parent in the true causal graph), we introduce below the concept of a basis of a DAG.

\begin{definition}
\label{def:basis}
A basis $\boldsymbol{B} \subseteq \boldsymbol{X}$ of a DAG $G = (\boldsymbol{X}, \boldsymbol{E})$ is a set of mutually $d$-separated (independent) variables (see Definition~\ref{def:d-separation}) such that for each $X \not\in \boldsymbol{B}$, there exists $X' \in \boldsymbol{B}$ such that $X' \nCI X$.
\end{definition}
As finding the true sources is not possible without intervention, we will use a basis set as a surrogate. This is because the basis has similar properties to the set of sources. First, similar to source variables, basis variables are mutually independent. Second, each source variable $X$ is either in the basis $\boldsymbol{B}$ or shares the same dependence set $\Phi(X) \equiv \Phi(X')$ --- as defined in Theorem~\ref{thm:2a} --- with a basis variable $X' \in \boldsymbol{B}$ (see Appendix~\ref{supp:basis}). Furthermore, Theorem~\ref{thm:2} confirms that the maximum size of a basis set is equal to the number of sources in the true causal graph.
This means that changing the prior over basis variables will have similar effect to changing priors over source variables. Hence, we can use the priors over basis variables instead of priors over sources to test the effect-cause invariance.  
\begin{theorem}
\label{thm:2}
The maximum size of a basis set of a DAG is \revise{upper-bounded by} the number of its sources. 
\end{theorem}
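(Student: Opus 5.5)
Our plan is to build an injective map from any basis $\boldsymbol{B}'$ into the set of sources $\boldsymbol{S} \triangleq \{X \in \boldsymbol{X} \mid \mathrm{Pa}[X]=\emptyset\}$ of $G$. The key object is the set of source ancestors $\mathrm{An}_S(X) \triangleq \{S \in \boldsymbol{S} \mid S \text{ is an ancestor of } X \text{ or } S = X\}$. Since $G$ is a finite DAG, following parent edges backwards from any $X$ must terminate at a node with no parents. Hence $\mathrm{An}_S(X) \neq \emptyset$ for every $X$.

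\textbf{Step 1.} We show that two marginally $d$-separated variables have disjoint source-ancestor sets. Take $X,Y \in \boldsymbol{B}'$ with $X \ne Y$, and suppose some $S \in \mathrm{An}_S(X) \cap \mathrm{An}_S(Y)$. Pick directed paths $S \to \cdots \to X$ and $S \to \cdots \to Y$. Let $W$ be the last node on the first path that also lies on the second. Splicing the two sub-paths at $W$ gives a path $X \leftarrow \cdots \leftarrow W \to \cdots \to Y$, which contains no collider. If $W$ equals $X$ or $Y$, it is simply a directed path between them. Such a path is not blocked by the empty conditioning set, so $X \nCI_G Y$. This contradicts the mutual $d$-separation required by Definition~\ref{def:basis}, so $\mathrm{An}_S(X) \cap \mathrm{An}_S(Y) = \emptyset$.

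\textbf{Step 2.} We conclude by counting. Choose for each $X \in \boldsymbol{B}'$ some $f(X) \in \mathrm{An}_S(X)$, which is possible because these sets are nonempty. By Step 1 the sets are pairwise disjoint, so $f$ is injective. Therefore $|\boldsymbol{B}'| \le |\boldsymbol{S}|$, and taking the maximum over all bases gives the theorem.

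The covering condition in Definition~\ref{def:basis} is never used; only mutual independence matters. For completeness, the bound is tight: $\boldsymbol{S}$ is itself a basis. Distinct sources are $d$-separated given $\emptyset$, because every path between them must contain a collider. Every non-source has a source ancestor and is therefore $d$-connected to it.

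\textbf{Main obstacle.} The only delicate point is the path-splicing argument in Step 1. We must make sure the spliced walk is a genuine simple path, which choosing $W$ as the last common node guarantees, and that its single change of direction occurs at $W$ as a fork, not a collider. We must also check that the paper's $d$-separation definition treats colliderless paths with an empty conditioning set as open. Under Definition~\ref{def:causal-model}, statistical independence coincides with graphical $d$-separation, so the argument carries over to the "independent" reading of the basis definition as well.
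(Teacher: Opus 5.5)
Your proof is correct, and it takes a genuinely different route from the paper's. The paper argues by induction on the number of nodes. It deletes a sink $X$ to obtain $G'$ and shows that a basis of $G'$ remains a basis of $G$. It then handles a basis of $G$ containing $X$ by asserting that $X$ must be isolated, so that $\boldsymbol{B}(G)\setminus\{X\}$ is a basis of $G'$. You instead build a direct injection from any basis into the sources, using the pairwise disjointness of source-ancestor sets. That disjointness rests only on the fact that a common ancestor yields a collider-free path and hence marginal $d$-connection. The paper uses the same fact later, in the proof of Lemma~\ref{prop:one-source}.

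Your route treats every basis uniformly with no case split. It also makes explicit that only pairwise marginal independence is needed, so the covering condition plays no role in the upper bound. Finally, it avoids the weak points of the paper's induction:
\begin{itemize}
\item The claim that a basis containing the removed sink forces the sink to be isolated is false in general. For $A \to X$, the set $\{X\}$ is a basis containing the non-isolated sink $X$, yet $\{X\}\setminus\{X\}=\emptyset$ is not a basis of $G'$.
\item For bases of $G$ not containing $X$, the induction needs the restriction direction rather than the extension direction the paper argues. One must show such a basis is a basis of $G'$, which holds because marginally $d$-connecting paths cannot pass through a sink.
\end{itemize}
A repaired induction is possible. It would need both basis conditions plus auxiliary facts about sink removal and about extending independent sets to bases. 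Your injection needs none of these, so it is both shorter and complete as a proof of the bound.
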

The proof of Theorem~\ref{thm:2} is deferred to Appendix \ref{supp:proof-sources}. \revise{A direct corollary of Theorem~\ref{thm:2} is that the maximum sized basis has the same cardinality as the source set of the graph. Theorem~\ref{thm:2a} provably provides a systematic method to identify such a maximum sized basis set with $O(d^2)$ computational complexity}. 
See Appendix~\ref{supp:proof-thm:2a}.
\begin{theorem}
\label{thm:2a}
Let $\boldsymbol{V} = \boldsymbol{X}$ and \revise{$\Phi(X) \triangleq \{Y \mid Y \in \boldsymbol{V} \ :\ Y \nCI X\}$}. A set $\boldsymbol{B}$ is constructed via iteratively (i) inserting into $\boldsymbol{B}$ variable $X\in\boldsymbol{V}$ with the lowest $|\Phi(X)|$; (ii) setting \revise{$\boldsymbol{V} \leftarrow \boldsymbol{V}\setminus\Phi(X)$}; and (iii) stopping when $\boldsymbol{V} = \emptyset$. 
Then, $\boldsymbol{B}$ forms a maximum-sized basis.
\end{theorem}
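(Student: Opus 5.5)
The plan is to prove two things about the greedily constructed $\boldsymbol{B}$: first that it is a basis in the sense of Definition~\ref{def:basis}, and then that $|\boldsymbol{B}|$ equals the number of sources. Theorem~\ref{thm:2} then gives maximality.

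\textbf{Basis property.} Mutual independence holds because every later pick is taken from the current $\boldsymbol{V}$. At each step we delete $\Phi(X)$ for the chosen $X$. So any variable still in $\boldsymbol{V}$ is independent of every previously chosen basis element.

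Coverage holds because $X\nCI X$, so $X\in\Phi(X)$. Every variable leaves $\boldsymbol{V}$ exactly when it lies in $\Phi(X)$ for some chosen $X\in\boldsymbol{B}$. Hence every $Y\notin\boldsymbol{B}$ is dependent on some element of $\boldsymbol{B}$. The procedure terminates, since each step removes at least the chosen variable.

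\textbf{Characterising $\Phi$.} I will work under the faithfulness implicit in Definition~\ref{def:causal-model}, so dependence is equivalent to $d$-connection given $\emptyset$. With an empty conditioning set, a $d$-connecting path is one without colliders. Such a path exists iff $X$ and $Y$ share a common ancestor, counting a node as its own ancestor. Since every ancestor chain ends at a source, this gives
\[ Y\in\Phi(X)\iff A(X)\cap A(Y)\neq\emptyset, \]
where $A(X)$ is the set of sources that are ancestors of $X$ (with $A(S)=\{S\}$ for a source $S$). Consequently $\Phi(X)=\bigcup_{S\in A(X)}\Phi(S)$.

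Sources are mutually independent, because $A(S)\cap A(S')=\emptyset$ for distinct sources. So $S'\notin\Phi(S)$ for $S\neq S'$.

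\textbf{Induction invariant.} The invariant is that after $k$ steps there are distinct sources $S_1,\dots,S_k$ such that
\[ \boldsymbol{V}=\{Y: A(Y)\cap\{S_1,\dots,S_k\}=\emptyset\}. \]
In particular, every $Y\in\boldsymbol{V}$ has all of $A(Y)$ inside $\boldsymbol{V}$. Here I read $|\Phi(X)|$ in step (i) as $|\Phi(X)\cap\boldsymbol{V}|$; the unrestricted reading can be handled similarly.

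\textbf{Main step: the minimiser has a single source ancestor.} Take $X\in\boldsymbol{V}$ with $|A(X)|\ge 2$, say $S,S'\in A(X)$. Then
\[ \Phi(X)\cap\boldsymbol{V}\supseteq\big(\Phi(S)\cup\Phi(S')\big)\cap\boldsymbol{V}. \]
We have $S'\in\boldsymbol{V}$ by the invariant and $S'\notin\Phi(S)$ by source independence. So this containment gives $|\Phi(X)\cap\boldsymbol{V}|>|\Phi(S)\cap\boldsymbol{V}|$, and $X$ cannot be the minimiser.

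Hence the chosen $X$ has $A(X)=\{S\}$ for a single source $S$, and $\Phi(X)=\Phi(S)$. Removing it deletes exactly the variables whose source set contains $S$, which preserves the invariant with $S_{k+1}=S$. No other source is deleted, since sources are independent.

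Each iteration therefore consumes exactly one source. The loop stops when $\boldsymbol{V}=\emptyset$, and $\boldsymbol{V}$ is empty only once every source has been consumed, because an unconsumed source stays in $\boldsymbol{V}$. So $|\boldsymbol{B}|$ equals the number of sources. Combined with the upper bound of Theorem~\ref{thm:2}, $\boldsymbol{B}$ is a maximum-sized basis.

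The step I expect to need the most care is this strict-inequality argument, together with ties. Ties only occur among nodes sharing the same single source $S$, and these all have identical $\Phi$, so any tie-breaking rule works.
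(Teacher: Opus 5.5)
Your proposal is correct and follows essentially the same route as the paper. Both reduce maximality (via Theorem~\ref{thm:2}) to showing that each greedy step consumes exactly one source, using the same strict containment $\Phi(X)\supseteq\Phi(S_1)\cup\Phi(S_2)$ with $S_2\notin\Phi(S_1)$ as in Lemma~\ref{prop:one-source}; your invariant on $\boldsymbol{V}$ and your explicit check of the basis property make rigorous what the paper leaves implicit.

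One harmless slip: ties can also occur between minimizers attached to different sources (e.g., two isolated sources), not only among nodes sharing the same source. Your main step already shows that every minimizer has a single source ancestor, so any tie-breaking rule indeed works.
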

\revise{Note that $X\in \Phi(X)$ by default since $X\nCI X,\ \forall X$. The intuition of Theorem~\ref{thm:2a} is that given a causal graph, the set of dependence nodes for a source does not include other sources while most non-source nodes are dependent on all their upstream (including all source nodes) and downstream nodes. Thus, the set of dependence nodes for a source node will not be larger than that of a non-source node.}
Computing $\Phi(X)$ requires checking $Y \nCI X$ which can be practically achieved using existing reliable pairwise independence tests provided by the causal-learn open library~\citep{zheng2024causal}. {The independence checking for $(Y\nCI X)$ is done by the $O(n)$ Fisher test \citep{fisher1921014}. As there are $O(d^2)$ such pairs, the total cost to compute all $\Phi(X)$ is $O(nd^2)$. Furthermore, $\Phi(X)$ does not need recomputing after step (ii) in Theorem~\ref{thm:2a}. Hence, computing $\Phi$ is a one-time overhead that is not expensive.} \hung{It is worth noting that  the non-uniqueness of the basis set has negligible empirical impact on the overall performance of our proposal. In fact, across multiple random seeds, data generating models, and topologies, the highest recorded variance in term of SHD is less than $6\%$ of the mean value across a variety of experimental scenarios.}

\subsubsection{Re-sampling Observational Data}
\label{sec:re-sampling}
Given a target $P_i(\boldsymbol{B})$, we want to find a resampled dataset $D_i$ from the original observation data $D$ such that $D_i \sim P_i(\boldsymbol{X})$. To guarantee this, $D_i$ must be a downsampled version of $D$ via sampling with no replacement to avoid introducing duplicates and hence, false causal biases into $D_i$.~\revise{Intuitively, because this operation only removes information rather than generating new samples, it cannot introduce any spurious patterns into the data.}~To elaborate, downsampling can be viewed as introducing an auxiliary selection variable for each observational snapshot of the underlying DAG.~Spurious dependence can arise only if this selection variable conditions on a collider of the form $U \rightarrow X \leftarrow V$ where $U$ and $V$ originate from different source ancestries, in which case the corresponding trail changes from inactive to active.~In our construction, however, downsampling uses $\boldsymbol{B} = \boldsymbol{b}$ as the sole selection criterion, so the selection variable depends only on basis variables.~Each basis variable is either a source or a node with single-source ancestry (see Lemma~\ref{prop:one-source}).~In the latter case, such a basis node can participate in a collider $U \rightarrow X \leftarrow V$ only when $U$ and $V$ share the same source ancestry; otherwise, $X$ would depend on multiple sources and, by Lemma~\ref{prop:one-source}, could not be selected into the basis.~Since $U$ and $V$ are already dependent via their common source, activating this collider does not introduce any new dependence.~On the other hand, among valid downsamples $D_i \sim P_i(\boldsymbol{X})$, we want to choose the one that has the minimal downsampled rate $|D|/|D_i|$ to preserve (as much as possible) observations of the effect-cause conditionals in $P(\boldsymbol{X}\setminus\boldsymbol{B} \mid \boldsymbol{B})$. Interestingly, it can be shown that given the target $P_i(\boldsymbol{B})$, the minimum downsampling rate can be computed and achieved via the following results.
\begin{theorem}
\label{thm:3}
Suppose that $D_i$ is the downsampled dataset with minimum downsampling rate that satisfies the condition $D_i \sim P_i(\boldsymbol{X})$. Then, it follows that\vspace{-1mm}
\begin{eqnarray}
|D|/|D_i| = \gamma_i^{-1}, \quad \text{where} \quad
\gamma_i = \min_{\boldsymbol{b}} \Big(P(\boldsymbol{B} = \boldsymbol{b})/P_i(\boldsymbol{B} = \boldsymbol{b})\Big)\ .\label{eq:downsampling_rate}
\end{eqnarray}    
\end{theorem}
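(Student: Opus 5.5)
The plan is to treat the downsampling as rejection sampling and bound the size of any valid downsample from above. Partition $D$ into strata $D^{(\boldsymbol{b})} = \{\boldsymbol{X}^{(k)} \in D : \boldsymbol{B}^{(k)} = \boldsymbol{b}\}$, whose sizes are $|D^{(\boldsymbol{b})}| \approx |D|\, P(\boldsymbol{B}=\boldsymbol{b})$. Inside a stratum the rows are draws from $P(\boldsymbol{X}\setminus\boldsymbol{B} \mid \boldsymbol{B}=\boldsymbol{b})$. For $D_i$ to follow $P_i(\boldsymbol{X}) = P_i(\boldsymbol{B})\, P(\boldsymbol{X}\setminus\boldsymbol{B}\mid\boldsymbol{B})$, two things must hold. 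First, within each stratum the kept rows must be selected in a way that does not depend on $\boldsymbol{X}\setminus\boldsymbol{B}$; this preserves the conditional. The justification is the selection-variable argument already given in the text: the selection depends only on $\boldsymbol{B}$. Second, the kept counts must satisfy $|D_i^{(\boldsymbol{b})}| = |D_i|\, P_i(\boldsymbol{B}=\boldsymbol{b})$ for every $\boldsymbol{b}$.

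\textbf{Upper bound.} Sampling is without replacement, so $|D_i^{(\boldsymbol{b})}| \le |D^{(\boldsymbol{b})}|$ for each stratum. Hence
\[
|D_i|\, P_i(\boldsymbol{B}=\boldsymbol{b}) \le |D|\, P(\boldsymbol{B}=\boldsymbol{b}) \quad \text{for all } \boldsymbol{b} \text{ with } P_i(\boldsymbol{B}=\boldsymbol{b})>0,
\]
which gives $|D_i|/|D| \le \min_{\boldsymbol{b}} P(\boldsymbol{B}=\boldsymbol{b})/P_i(\boldsymbol{B}=\boldsymbol{b}) = \gamma_i$. Every valid downsample therefore has rate $|D|/|D_i| \ge \gamma_i^{-1}$.

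\textbf{Achievability.} Keep each row with value $\boldsymbol{b}$ independently, or by uniform subsampling within the stratum, with acceptance probability
\[
a(\boldsymbol{b}) = \gamma_i\, P_i(\boldsymbol{B}=\boldsymbol{b})/P(\boldsymbol{B}=\boldsymbol{b}).
\]
By the definition of $\gamma_i$ this lies in $[0,1]$. The kept rows then have $\boldsymbol{B}$-marginal proportional to $P(\boldsymbol{b})\,a(\boldsymbol{b}) = \gamma_i P_i(\boldsymbol{b})$, which normalizes to $P_i(\boldsymbol{B})$. The conditional on $\boldsymbol{B}$ is untouched, since acceptance depends only on $\boldsymbol{b}$. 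The expected total kept fraction is $\sum_{\boldsymbol{b}} \gamma_i P_i(\boldsymbol{b}) = \gamma_i$, so the bound is attained and the minimum rate equals $\gamma_i^{-1}$.

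\textbf{Main obstacle.} The difficulty is being precise about what "$D_i \sim P_i$" means for a finite sample. I would argue at the population level: empirical stratum frequencies are replaced by $P(\boldsymbol{B}=\boldsymbol{b})$, and the inequalities are stated in expectation or in the limit $|D|\to\infty$. The other point to address is why no scheme can beat the bound by using information in $\boldsymbol{X}\setminus\boldsymbol{B}$. Any such selection is still capped by the stratum sizes, so the counting bound holds regardless. Strata with $P(\boldsymbol{B}=\boldsymbol{b})=0$ but $P_i(\boldsymbol{B}=\boldsymbol{b})>0$ make $\gamma_i=0$, meaning no valid downsample exists, and should be stated as a degenerate case.
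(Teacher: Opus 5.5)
Your proof is correct and follows the paper's: the upper bound is the same per-stratum counting argument, where $n_i(\boldsymbol{b}) \le n(\boldsymbol{b})$ with $n_i(\boldsymbol{b})/n_i = P_i(\boldsymbol{B}=\boldsymbol{b})$ and $n(\boldsymbol{b})/n = P(\boldsymbol{B}=\boldsymbol{b})$, minimized over $\boldsymbol{b}$, gives $|D_i|/|D| \le \gamma_i$. The paper's proof of this theorem stops at that inequality and defers attainment to Theorem~\ref{thm:4}, which samples $P_i(\boldsymbol{B}=\boldsymbol{b})\,|D|\,\gamma_i$ points per stratum; your within-stratum uniform subsampling is the same construction, and including it is what actually yields the stated equality $|D|/|D_i| = \gamma_i^{-1}$.
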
\vspace{-2mm}
Given the (computable) optimal downsampling rate in Theorem~\ref{thm:3}, the corresponding downsampling procedure that achieves it can be derived via Theorem~\ref{thm:4} below.
\begin{theorem}
\label{thm:4}
Let $\gamma_i$ be defined in Theorem~\ref{thm:3}. Suppose that $D_i$ is created via sampling without replacement $P_i(\boldsymbol{B} = \boldsymbol{b}) \cdot |D| \cdot \gamma_i$ points from $D$ where $\boldsymbol{B} = \boldsymbol{b}$. Then, $|D|/|D|_i = 1/\gamma_i$ and $D_i \sim P_i(\boldsymbol{X})$.
\end{theorem}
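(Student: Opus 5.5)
The plan is to prove the two claims of Theorem~\ref{thm:4} separately: first the size identity $|D|/|D_i| = 1/\gamma_i$ by counting, and then $D_i \sim P_i(\boldsymbol{X})$ by conditioning on the strata $\boldsymbol{B}=\boldsymbol{b}$.

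\textbf{Size of $D_i$.} Summing the prescribed counts over all configurations $\boldsymbol{b}$ gives
\begin{align*}
|D_i| = \sum_{\boldsymbol{b}} P_i(\boldsymbol{B}=\boldsymbol{b})\,|D|\,\gamma_i = |D|\,\gamma_i ,
\end{align*}
because $P_i(\boldsymbol{B})$ sums to one. This is the identity $|D|/|D_i|=1/\gamma_i$. I must also check that the procedure is feasible, i.e.\ each stratum holds enough points. Stratum $\boldsymbol{b}$ contains $P(\boldsymbol{B}=\boldsymbol{b})|D|$ points in the population limit, or the empirical count under the empirical $P$. By the definition of $\gamma_i$ in Theorem~\ref{thm:3}, $\gamma_i \le P(\boldsymbol{B}=\boldsymbol{b})/P_i(\boldsymbol{B}=\boldsymbol{b})$ for every $\boldsymbol{b}$. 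Hence the requested count $P_i(\boldsymbol{b})|D|\gamma_i$ is at most $P(\boldsymbol{b})|D|$, so sampling without replacement is well defined. Configurations with $P_i(\boldsymbol{b})=0$ contribute zero points.

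\textbf{Distribution of $D_i$.}
\begin{itemize}
\item \emph{Marginal of $\boldsymbol{B}$.} The fraction of $D_i$ with $\boldsymbol{B}=\boldsymbol{b}$ is $P_i(\boldsymbol{b})|D|\gamma_i / (|D|\gamma_i) = P_i(\boldsymbol{b})$. So the empirical marginal of $\boldsymbol{B}$ in $D_i$ is exactly $P_i(\boldsymbol{B})$.
\item \emph{Conditional given $\boldsymbol{B}$.} Within stratum $\boldsymbol{b}$, the points of $D$ are i.i.d.\ draws from $P(\boldsymbol{X}\setminus\boldsymbol{B}\mid \boldsymbol{B}=\boldsymbol{b})$, since the original observations are i.i.d.\ from $P(\boldsymbol{X})$. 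A uniformly random subset chosen without replacement depends only on the $\boldsymbol{B}$-label, not on the remaining coordinates. The selected points are therefore still exchangeable draws from that same conditional. Formally, the selection indicator $S$ is a function of $\boldsymbol{B}$ and independent randomness, so $S \CI (\boldsymbol{X}\setminus\boldsymbol{B}) \mid \boldsymbol{B}$, which gives $P(\boldsymbol{X}\setminus\boldsymbol{B}\mid\boldsymbol{B},S=1)=P(\boldsymbol{X}\setminus\boldsymbol{B}\mid\boldsymbol{B})$.
\item \emph{Joint.} Multiplying the two pieces gives $P_i(\boldsymbol{B})\,P(\boldsymbol{X}\setminus\boldsymbol{B}\mid\boldsymbol{B}) = P_i(\boldsymbol{X})$, as defined in Theorem~\ref{thm:1}.
\end{itemize}

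\textbf{Main obstacle.} The delicate step is the conditional-invariance claim in the second bullet. I would phrase it carefully as selection depending only on $\boldsymbol{B}$, and I would not invoke any $d$-separation argument about colliders. This is essentially the discussion before Theorem~\ref{thm:3}, where the selection variable depends only on basis variables. When $\boldsymbol{B}$ is the estimated basis rather than the true sources, the same argument still yields the factorization relative to $\boldsymbol{B}$. The statement is a claim about $P_i(\boldsymbol{X}) = P_i(\boldsymbol{B})P(\boldsymbol{X}\setminus\boldsymbol{B}\mid\boldsymbol{B})$ and makes no use of source-ness, so this suffices.

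A secondary technicality is integrality: the counts $P_i(\boldsymbol{b})|D|\gamma_i$ need not be integers. I would treat them in the population or proportional sense, or note that rounding introduces only $O(1)$ per-stratum error. Finally, I would remark that no smaller downsampling rate is achievable, so this construction attains the optimum of Theorem~\ref{thm:3}. If the proportions of $\boldsymbol{B}$ in $D_i$ must equal $P_i$, the minimizing stratum $\boldsymbol{b}^\ast$ caps $|D_i|$ at $P(\boldsymbol{b}^\ast)|D|/P_i(\boldsymbol{b}^\ast) = \gamma_i|D|$, and the construction reaches this cap.
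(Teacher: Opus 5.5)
Your proposal is correct and follows the same route as the paper's proof: sum the prescribed per-stratum counts to get $|D_i| = \gamma_i |D|$, then observe that the fraction of $D_i$ with $\boldsymbol{B}=\boldsymbol{b}$ is exactly $P_i(\boldsymbol{B}=\boldsymbol{b})$. Your extra steps supply justifications that the paper leaves implicit when it concludes $D_i \sim P_i(\boldsymbol{X})$ directly from the matched $\boldsymbol{B}$-proportions: the feasibility check $P_i(\boldsymbol{B}=\boldsymbol{b})\,|D|\,\gamma_i \le P(\boldsymbol{B}=\boldsymbol{b})\,|D|$, and the argument that selection depending only on $\boldsymbol{B}$ leaves $P(\boldsymbol{X}\setminus\boldsymbol{B}\mid\boldsymbol{B})$ unchanged.
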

The proofs of Theorems~\ref{thm:3} and~\ref{thm:4} are deferred to Appendices~\ref{supp:proof-thm:4} and~\ref{supp:proof-thm:5}. 
{To elaborate more on the sampling of $D_i$ from $D$, we determine the size of $D_i$ via Theorem~\ref{thm:3}. This allows us to compute (for each candidate value $\boldsymbol{b}$ of the source variables $\boldsymbol{B}$) how many samples (where $\boldsymbol{B} = \boldsymbol{b}$) we need to draw (without replacement) from $D$. Those samples (across all observed candidate values $\boldsymbol{B} = \boldsymbol{b}$) constitute $D_i$. Then, Theorem~\ref{thm:4} guarantees that such $D_i$ is statistically equivalent to a direct sample from $P_i$.} 
In a practical implementation, we associate $P(\boldsymbol{B} = \boldsymbol{b})$ with its empirical estimate $P(\boldsymbol{B} = \boldsymbol{b}) \approx|D[\boldsymbol{B} = \boldsymbol{b}]| / |D|$. 
For continuous data, we categorize the input data into fixed-size bins.\vspace{-2mm}


\subsubsection{Choosing Source Priors}
\label{sec:source-priors}
We can now leverage the insight of Theorem~\ref{thm:3} to choose the most informative source priors to enhance the reliability of the invariance test in \revise{Equation~\ref{eq:empirical_invariance}}. Intuitively, we want $D_i \ne D$ so $\gamma_i$ should not be too large. Otherwise, as $\gamma_i \rightarrow 1$, $D_i \rightarrow D$, and $P_i(\boldsymbol{B}) \rightarrow P(\boldsymbol{B})$ which cannot be used to test the effect-cause invariance.
On the other hand, if $\gamma_i$ is too small, $D_i$ might drop too much information from $D$ which might obscure some effect-cause relationship. 

As such, we want to choose $P_i(\boldsymbol{B})$ such that its inverse downsampling rate $\gamma_i$ (see Theorem~\ref{thm:3}) is above a certain threshold $\gamma_\mathrm{o}$ where $\gamma_\mathrm{o} \in (0, 1)$ is an adjustable parameter. Our ablation studies in Table~\ref{tab:ablation-params} investigate the impact of $\gamma_\mathrm{o}$ on the overall causal discovery performance. The question is now how to sample representative $P_i(\boldsymbol{B})$ from the subspace of source priors whose (inverse) optimal downsampling rate $\gamma_i \geq \gamma_\mathrm{o}$. To answer this question, we establish Theorem~\ref{thm:5} below which characterizes its convex hull (see Appendix~\ref{app:b6}).
\begin{theorem}
\label{thm:5}
Let $r \triangleq |\mathrm{Dom}(\boldsymbol{B})|$ denote the number of (categorical) candidate values of $\boldsymbol{B}$. The subspace of $P_i(\boldsymbol{B})$ that satisfies $\gamma_i \geq \gamma_\mathrm{o}$ is a convex subspace $C_r(\gamma_\mathrm{o})$ of the $r$-dimensional simplex $\Delta_r$ over $\mathrm{Dom}(\boldsymbol{B})$ which cuts $\Delta_r$ at $r$ points $P^{(1)}(\boldsymbol{B}), P^{(2)}(\boldsymbol{B}) \ldots, P^{(r)}(\boldsymbol{B})$ representing its convex hull:\vspace{-2mm}
\begin{eqnarray}
P^{(k)}(\boldsymbol{B}) &=& \alpha_k \cdot P(\boldsymbol{B}) + (1 - \alpha_k) \cdot \delta_k(\boldsymbol{B}) \quad \text{where}\quad \alpha_k \ \ =\ \  (1 - q_k \gamma^{-1}_o) / (1 - q_k)\ , 
\label{eq:cut}
\end{eqnarray}
$q_k = P\big(\boldsymbol{B} = \boldsymbol{b}^{(k)}\big)$, and $\delta_k(\boldsymbol{B})$ is a point mass function that assigns $1$ when $\boldsymbol{B} = \boldsymbol{b}^{(k)}$ and $0$ otherwise. 
Here, $\boldsymbol{b}^{(k)}$ is the $k$-th candidate value in $\mathrm{Dom}(\boldsymbol{B})$. 
\end{theorem}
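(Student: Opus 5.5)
By Theorem~\ref{thm:3}, $\gamma_i = \min_{\boldsymbol{b}} P(\boldsymbol{B}=\boldsymbol{b})/P_i(\boldsymbol{B}=\boldsymbol{b})$, so the plan is to turn the condition $\gamma_i \ge \gamma_\mathrm{o}$ into a system of linear inequalities on the vector $P_i(\boldsymbol{B})\in\Delta_r$ and read off the geometry. Write $p_k = P_i(\boldsymbol{B}=\boldsymbol{b}^{(k)})$ and $q_k = P(\boldsymbol{B}=\boldsymbol{b}^{(k)})$. A minimum over $k$ is at least $\gamma_\mathrm{o}$ exactly when every term is, so $\gamma_i\ge\gamma_\mathrm{o}$ is equivalent to $q_k/p_k \ge \gamma_\mathrm{o}$ for all $k$, i.e. to $p_k \le q_k\gamma_\mathrm{o}^{-1}$ for all $k$. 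Terms with $p_k=0$ impose no constraint, which matches the linear inequality. Hence
\begin{equation*}
C_r(\gamma_\mathrm{o}) \;=\; \Delta_r \cap \bigcap_{k=1}^r \{p : p_k \le q_k \gamma_\mathrm{o}^{-1}\}.
\end{equation*}
This set is an intersection of a simplex with half-spaces, so it is convex, namely a polytope. It is nonempty because $P(\boldsymbol{B})$ itself satisfies every constraint, since $\gamma_\mathrm{o}<1$.

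Next I will identify the $r$ points $P^{(k)}$. For each $k$, consider the segment in $\Delta_r$ from $P(\boldsymbol{B})$ toward the vertex $\delta_k$, parametrized as $\alpha P + (1-\alpha)\delta_k$. Along this segment the $k$-th coordinate is $\alpha q_k + 1-\alpha$, and the other coordinates are $\alpha q_j \le q_j < q_j\gamma_\mathrm{o}^{-1}$, so they are never binding. The segment therefore leaves $C_r(\gamma_\mathrm{o})$ precisely when $\alpha q_k + 1-\alpha = q_k\gamma_\mathrm{o}^{-1}$. Solving gives $\alpha_k = (1-q_k\gamma_\mathrm{o}^{-1})/(1-q_k)$, which is Equation~\ref{eq:cut}. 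I will also check that $\alpha_k \in [0,1]$ whenever $q_k\gamma_\mathrm{o}^{-1}\le 1$, and separately treat the degenerate case $q_k\gamma_\mathrm{o}^{-1}>1$, where the $k$-th constraint is inactive on $\Delta_r$. There I would clip $\alpha_k=0$, so that $P^{(k)}=\delta_k$. These $P^{(k)}$ are the points at which the constraint hyperplane $p_k = q_k\gamma_\mathrm{o}^{-1}$ cuts the ray from $P$ to the vertex $\delta_k$.

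The main obstacle is the claim that these $r$ points represent the convex hull, since in general the polytope has more vertices than $r$. I would argue it in the sense the paper needs: $C_r(\gamma_\mathrm{o})$ is star-shaped and, being convex, contains $\mathrm{conv}\{P, P^{(1)},\dots,P^{(r)}\}$. Each $P^{(k)}$ is an extreme point of $C_r(\gamma_\mathrm{o})$ in the direction $\delta_k$, because it maximizes $p_k$ along that ray. For a fuller justification I would try showing that every $p\in C_r(\gamma_\mathrm{o})$ can be written as $P + \sum_k t_k(\delta_k - P)$ with $t_k\ge 0$ and bounded by the cut parameters. If that decomposition fails, I would state the result as $\mathrm{conv}\{P^{(k)}\}\subseteq C_r(\gamma_\mathrm{o})$, with the $P^{(k)}$ being the boundary points along the vertex directions. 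That weaker statement is what is used to sample representative priors.
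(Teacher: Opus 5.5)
Your first two steps are correct and follow the paper's proof. Describing $C_r(\gamma_{\mathrm{o}})$ as $\Delta_r$ cut by the half-spaces $p_k\le q_k\gamma_{\mathrm{o}}^{-1}$ is a tidier form of the paper's convex-combination check. Locating $P^{(k)}$ where the $k$-th constraint binds on the segment $[P,\delta_k]$ is exactly its Part~B; its $\min(1/\alpha_k,\cdot)$ step is your remark that coordinates $j\neq k$ never bind. Your clipping when $q_k\gamma_{\mathrm{o}}^{-1}>1$ is a real improvement, since there the stated $\alpha_k$ is negative and $P^{(k)}$ leaves $\Delta_r$.

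Where you hedge, you should settle the question instead: the claim that $P^{(1)},\dots,P^{(r)}$ represent the convex hull of $C_r(\gamma_{\mathrm{o}})$ is false for $r\ge 3$, so no decomposition argument can succeed.

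\emph{Counterexample.} Take $r=3$, $q_k=1/3$, $\gamma_{\mathrm{o}}=1/2$. Then $C_3(\gamma_{\mathrm{o}})=\{p\in\Delta_3: p_k\le 2/3\}$ is a hexagon whose vertices are the permutations of $(2/3,1/3,0)$. Meanwhile $\alpha_k=1/2$ gives $P^{(1)}=(2/3,1/6,1/6)$ and its permutations, which are the midpoints of the three cut edges. Every point of their hull has all coordinates at least $1/6$, so the admissible prior $(2/3,1/3,0)$, which has $\gamma_i=1/2$, is excluded.

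\emph{Your decomposition.} A point $p=P+\sum_k t_k(\delta_k-P)$ lies in $\mathrm{conv}\{P,P^{(1)},\dots,P^{(r)}\}$ iff some representation has $t\ge0$ and $\sum_k t_k/(1-\alpha_k)\le 1$. This bounds a weighted sum, not each $t_k$ separately. In the example every admissible $t$ dominates $p$, so that sum is at least $2$.

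\emph{Extreme points.} Your assertion that each $P^{(k)}$ is an extreme point is also wrong. For $r\ge3$ and $0<\alpha_k<1$, $P^{(k)}$ has $r-1\ge 2$ coordinates strictly between $0$ and their caps. It therefore lies in the relative interior of the facet $\{p_k=q_k\gamma_{\mathrm{o}}^{-1}\}$, as a non-unique maximizer of $p_k$. The true vertices are the feasible points with at least $r-1$ coordinates equal to $0$ or to $q_k\gamma_{\mathrm{o}}^{-1}$. No $P^{(k)}$ is among them, so $\mathrm{conv}\{P^{(1)},\dots,P^{(r)}\}\subsetneq C_r(\gamma_{\mathrm{o}})$ whenever $r\ge3$ and all $\alpha_k\in(0,1)$.

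The paper's proof does not close this gap either. After the same ray computation it only asserts that the hull comes from intersections with the ``edges of the simplex,'' and the segments $[P,\delta_k]$ are not such edges. The provable content is therefore exactly your fallback: convexity, the formula for $\alpha_k$ with clipping, and the one-sided inclusion. That guarantees $P_i(\boldsymbol{B})=\sum_k a_kP^{(k)}(\boldsymbol{B})$ with $\boldsymbol{a}\in\Delta_r$ yields admissible priors, but only from a proper subset of $C_r(\gamma_{\mathrm{o}})$.
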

Each source prior $P_i(\boldsymbol{B})$ with $\gamma_i \geq \gamma_\mathrm{o}$ belongs to this convex set and can be represented as a linear combination of the above points:\vspace{-2mm}
\begin{eqnarray}
P_i(\boldsymbol{B}) &=& \sum_{k=1}^{r} a_k \cdot P^{(k)}(\boldsymbol{B}),\vspace{-3mm}  \label{eq:linear}
\end{eqnarray}
where $\sum_{k=1}^r a_k \ =\ 1\ \text{and}\ a_k \geq 0$.
Hence, $P_i(\boldsymbol{B})$ can be sampled via drawing $\boldsymbol{a} = (a_1, a_2, \ldots, a_r)$ from the simplex $\Delta_r$ and using Equation~\ref{eq:linear}. \vspace{-2mm}

\subsubsection{Practical Implementation}
\label{sec:practical}
The parent-finding procedure involves three main steps: finding basis variables, sampling source priors, and re-sampling observational data. First, finding the basis variables has a time complexity of $O(d^2)$. Next, we sample $\boldsymbol{a}$ from a Dirichlet distribution to compute $P_i(\boldsymbol{B})$, with $10^4$ samples clustered using $K$-means. The $K = m$ centroids are selected as source priors, with a complexity of $O(mr) = O(mc^{|\boldsymbol{B}|})$ where $c$ is the maximum number of candidate values of a single variable. To avoid exponential computation costs in $|\boldsymbol{B}|$, we resample for each basis variable individually, reducing the complexity to $O(mc|\boldsymbol{B}|)$. Finally, generating the augmented dataset for each $P_i(\boldsymbol{B})$ incurs a cost of $O(m|D|)$ via Theorem~\ref{thm:4}. This amounts to the $O(d^2 + mc|\boldsymbol{B}| + m|D|)$ total time complexity. 
\revise{As for hyper-parameters, we recommend choosing $m$ as high as the incurred additional runtime is acceptable. As for $\gamma$, we find $\gamma = 0.5$ is an empirically reasonable selection.~As a rule of thumb, $\gamma$ should be high enough so that estimations induced from downsampled data are adequately accurate, but also should be small enough to promote diversity between downsampled datasets.}\vspace{-2mm}

%% file: causal_identify.tex


To ensure that the parent-finding routine is scalable, we customize a DFS algorithm which leverages prior work~\citep{edera2014grow} in Markov blanket identification to provably find $O(d)$ candidate parent sets for each effect variable $X$. The true causal graph can then be recovered via solving Equation~\ref{eq:premise} for each $X$ with respect to the discrete set of $O(d)$ plausible parents found above. 

As previous work in Markov blanket identification incurs an $O(d^2)$ cost~\citep{edera2014grow}, the total cost of our parent finding phase is also $O(d^2)$. This helps avoid the brute-force search through all subsets of variables as candidates for $\mathrm{Pa}[X]$ whose complexity is otherwise exponential in $d$~\citep{peters2016causal}. To achieve this, we leverage the following result which relates candidate sets of causal parents to maximal cliques on an augmented bidirectional graph.

\begin{theorem}[Plausible Parent Sets]
\label{theo:plausible_parent}
\revise{Let $\mathbb{M}(X)$ denote the Markov blanket of $X$ and $\mathbb{M}^*(X) \triangleq \mathbb{M}(X) \setminus \mathrm{Sp}[X]$ denote the set of non-spouse variables within $\mathbb{M}(X)$}. For each variable $X$, let $G'(X) = (\boldsymbol{V}, \boldsymbol{E}')$ denote a bidirectional graph where \revise{$\boldsymbol{V} = \mathbb{M}^*(X)$} and $(U, V) \in \boldsymbol{E}$ iff $V \in \mathbb{M}(U)$ and $U \in \mathbb{M}(V)$. Then, \revise{$\mathrm{Pa}[X] \subseteq \mathbb{M}^*(X)$} \hung{belongs to at least} a clique in $G'(X)$. 
\end{theorem}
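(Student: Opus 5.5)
We need to show two things: that $\mathrm{Pa}[X] \subseteq \mathbb{M}^*(X)$, and that $\mathrm{Pa}[X]$ is contained in a clique of $G'(X)$, i.e. every two distinct parents are adjacent in $G'(X)$. A subset of vertices that is pairwise adjacent is itself a clique, and it extends to a maximal clique. So, once the containment holds, the statement reduces to proving pairwise mutual Markov-blanket membership among parents.

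\textbf{Step 1 (containment).} By the definition recalled in Section~\ref{sec:background}, $\mathbb{M}(X)$ consists of the parents, the children, and the children's other parents of $X$. Hence $\mathrm{Pa}[X] \subseteq \mathbb{M}(X)$. The assumption $\mathrm{Pa}[X]\cap \mathrm{Sp}[X]=\emptyset$ carried over from Theorem~\ref{thm:1} removes no parent when we pass to $\mathbb{M}^*(X) = \mathbb{M}(X)\setminus \mathrm{Sp}[X]$. This gives $\mathrm{Pa}[X]\subseteq \mathbb{M}^*(X) = \boldsymbol{V}$. I would state this assumption explicitly, since without it the containment can fail: a parent $U$ that is also the co-parent of a common child would be removed.

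\textbf{Step 2 (pairwise adjacency).} Take distinct $U, V \in \mathrm{Pa}[X]$. In $G$ we have $U \to X \leftarrow V$, so $X$ is a common child of $U$ and $V$. By the definition of the Markov blanket, $V$ is then a co-parent of a child of $U$, so $V \in \mathbb{M}(U)$; symmetrically $U \in \mathbb{M}(V)$. The edge condition of $G'(X)$ (reading $\boldsymbol{E}$ in the statement as $\boldsymbol{E}'$) therefore holds, and $(U,V)\in\boldsymbol{E}'$.

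This argument uses only the graphical definition of $\mathbb{M}$. If we want the Markov blanket understood as the minimal set satisfying condition \textbf{(II)}, as recovered from data by \citet{edera2014grow}, we invoke the faithfulness built into Definition~\ref{def:causal-model}. Under faithfulness the probabilistic Markov boundary coincides with the graphical one, because $U$ and $V$ are d-connected given any set containing the collider $X$. This identification is the only delicate point, and the main obstacle is simply making sure the two notions of Markov blanket agree; faithfulness handles it.

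\textbf{Step 3 (conclusion).} Steps 1 and 2 show that $\mathrm{Pa}[X]$ is a pairwise-adjacent vertex set in $G'(X)$, hence a clique. It is therefore contained in at least one maximal clique of $G'(X)$. The degenerate cases are immediate: if $|\mathrm{Pa}[X]|\le 1$, the set is trivially a clique, or the empty clique for a source node.
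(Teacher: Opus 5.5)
Your proof is correct and follows the paper's argument: any two parents share $X$ as a common child, so each lies in the other's Markov blanket and they are adjacent in $G'(X)$, which is exactly the paper's one-line proof. Your explicit handling of the containment $\mathrm{Pa}[X]\subseteq\mathbb{M}^*(X)$ is an addition; the paper simply presupposes it, and you are right that it needs $\mathrm{Pa}[X]\cap\mathrm{Sp}[X]=\emptyset$. Your faithfulness remark reconciling the graphical and probabilistic Markov blankets is likewise a careful refinement, not a different route.
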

Here, a clique is defined to be a set of nodes in the bidirectional graph $G'$ such that there is an edge between any two nodes. See Appendix~\ref{supp:proof-plausible-parent} for a detailed proof.
\revise{Notice that $\mathbb{M}^*(X)$ can be derived directly from $\mathbb{M}(X)$ by using conditional independence tests. Indeed, since $\forall Y \in \mathrm{Sp}[X]: Y\CI X \mid \mathrm{Pa}[X]$ and $\mathrm{Pa}[X] \subset \mathbb{M}(X)$, we are guaranteed to detect and remove all such $Y \in \mathbb{M}(X)$, hence $\mathbb{M}^*(X)$. Finding $\mathbb{M}^*(X)$ for any $X\in \boldsymbol{X}$ costs a computational complexity of $O(2^M)$ where $M\ll d$ is the maximum number of elements in a Markov blanket}.  
Leveraging Theorem~\ref{theo:plausible_parent}, we can find all plausible parent sets of $X$ as subsets in \revise{$\mathbb{M}^*(X)$}, each corresponding to a clique in $G'(X)$. Restricting the set of plausible parent sets to that of maximal cliques in $G'$, we can reduce the task of finding plausible parent sets to finding maximal cliques in a bidirectional graph. For a sparse graph $G'(X)$ with a low degeneracy constant $p$ (see Definition~\ref{def:degeneracy}), this can be achieved effectively via a customized version of the DFS-based Bron-Kerbosch~\citep{bron1973algorithm} algorithm (see Appendix~\ref{supp:plausible_parents}) which finds all maximal cliques with $O(dp3^{p/3})$ time complexity. 

\begin{definition}[Degeneracy \citep{kirousis1996linkage}]
\label{def:degeneracy}
A bidirectional graph is $p$-degenerate if every subgraph has at least one node with degree less than or equal to $p$. The graph's degeneracy is the smallest $p$ for which it is $p$-degenerate.
\end{definition}

Considering $p$ as a small constant compared to $d$, the cost of finding all plausible parent sets for each variable $X$ is effectively $O(d)$. Hence, the total cost of finding all plausible parent sets for all variables is $O(d^2)$. 
Furthermore, it is also established in~\citet{bron1973algorithm} that the (worst-case) number of maximum cliques is $O\big((d-p)\cdot 3^{p/3}\big)$.
Our empirical studies in Appendix~\ref{ablation:degeneracy} show that $p \leq 13$ across all benchmark datasets even for the largest graphs, confirming that the set of maximal cliques in $G'(X)$ is practically $O(d)$ with a constant factor smaller than $3^{13/3} < 117$. 

%% file: experiment.tex
This section evaluates and compares the performance of our proposed method, Causal \underline{G}raph \underline{L}earning v\underline{i}a \underline{D}istributional Invariance of Cause-\underline{E}ffect Relationship (\ourmethod), against existing state-of-the-art (SOTA) baselines in causal graph learning. 

{\bf Baselines.}~Our empirical evaluations are conducted with respect to a variety of data models
and a diverse suite of both classical and recent baselines (Appendix~\ref{supp:exp_settings} Table~\ref{tab:baselines}), including PC~\citep{spirtes1991algorithm}, GIES~\citep{hauser2012gies}, FCI~\citep{spirtes2001anytime}, NOTEARS~\citep{zheng2018dags}, MLP-NOTEARS~\citep{zheng2020learning}, DAS~\citep{montagna2023scalable}, and SCORE~\citep{rolland2022score}. Each baseline is configured with its best hyper-parameters (see Appendix~\ref{supp:exp_settings}).

{\bf Datasets.}~Our experiments are based on both synthetic and real-world datasets. For synthetic experiments (Sections~\ref{sec:synthetic-continuous} and Appendix~\ref{sec:synthetic-categorical}), we follow the commonly used protocols in previous work to generate observationals data based on the Erdos-Renyi~\citep{zheng2018dags, heinze2018invariant}, bipartite and scale-free~\citep{zheng2020learning} classes of causal graphs. Due to limited space, we only present the empirical results on synthetic data generated from the Erdos-Renyi class of graphs. Other experiments with the bipartite and scale-free classes of graphs are deferred to Appendix~\ref{subsec:ablation_studies}. {We also provide evaluation on the SACHS dataset (Section~\ref{sec:sachs}) provided by \cite{sachs-data}. This dataset has been widely used in many existing literatures on causal discovery \cite{zheng2018dags, zheng2020learning}. 
The SACHS dataset ($11$ variables, $17$ edges) contains records of continuous measurements of expression levels of proteins and phospholipids in human immune system cells and is widely accepted by the biology community.} 
Additionally, we also conduct experiments on real-world graphs (see Appendix~\ref{sec:real-world}) using the datasets provided in the bnlearn package~\citep{scutari2009learning}, which includes $7$ small to large real-world graphs. Notably, the Munin dataset features a large-scale graph comprising $1041$ variables.

{\bf Evaluation Metrics.} We use the structural Hamming distance (SHD), spurious rate (percentage), and running time (minutes) to measure both the effectiveness and scalability of our proposed method. The SHD and running time metrics are commonly used in the evaluation of existing causal discovery methods~\citep{zheng2018dags,montagna2023scalable}. The spurious rate measures the ratio of the number of false causal relationships among all causal relationships discovered by each algorithm. This helps compare the reliability of causal prediction across baselines. All performance metrics are reported with their mean and confidence interval $95\%$ averaged over $10$ independent runs.

\subsection{Synthetic Continuous Data}
\label{sec:synthetic-continuous}
\begin{figure*}[tb]
    \centering
    \includegraphics[width=0.85\linewidth]{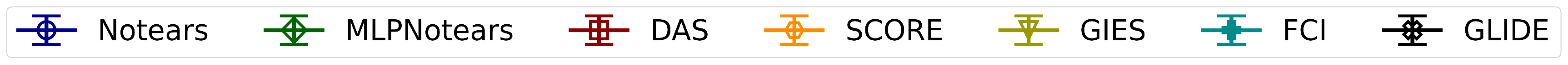}
    \begin{minipage}{0.45\linewidth} 
        \includegraphics[width=0.46\linewidth]{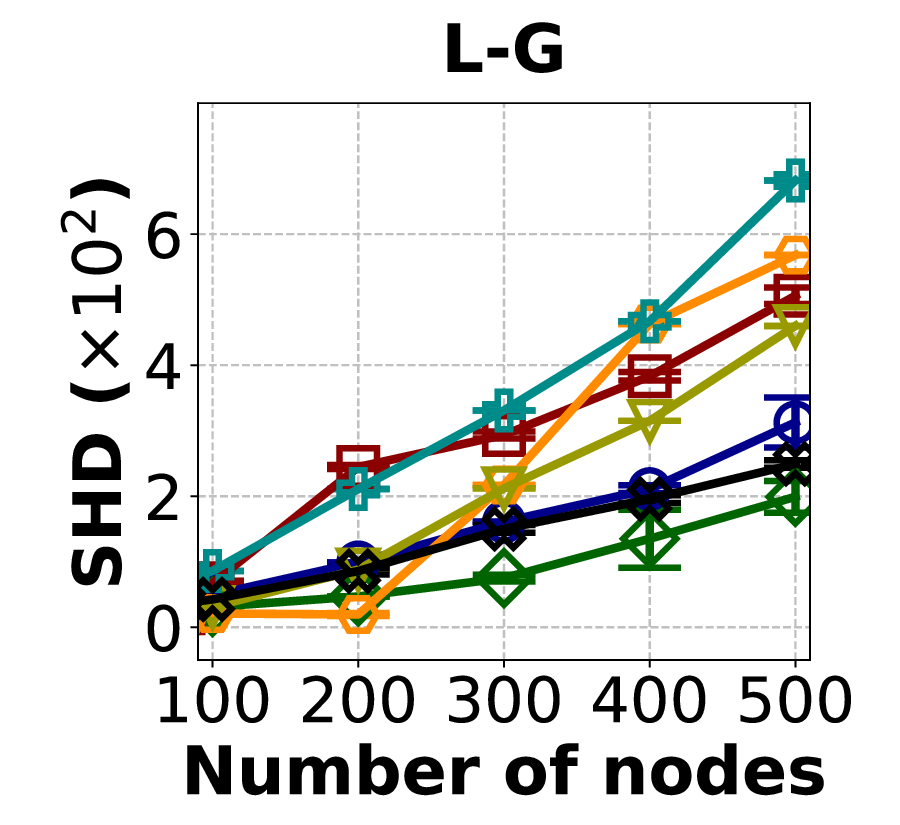}
        \hfill
        \includegraphics[width=0.46\linewidth]{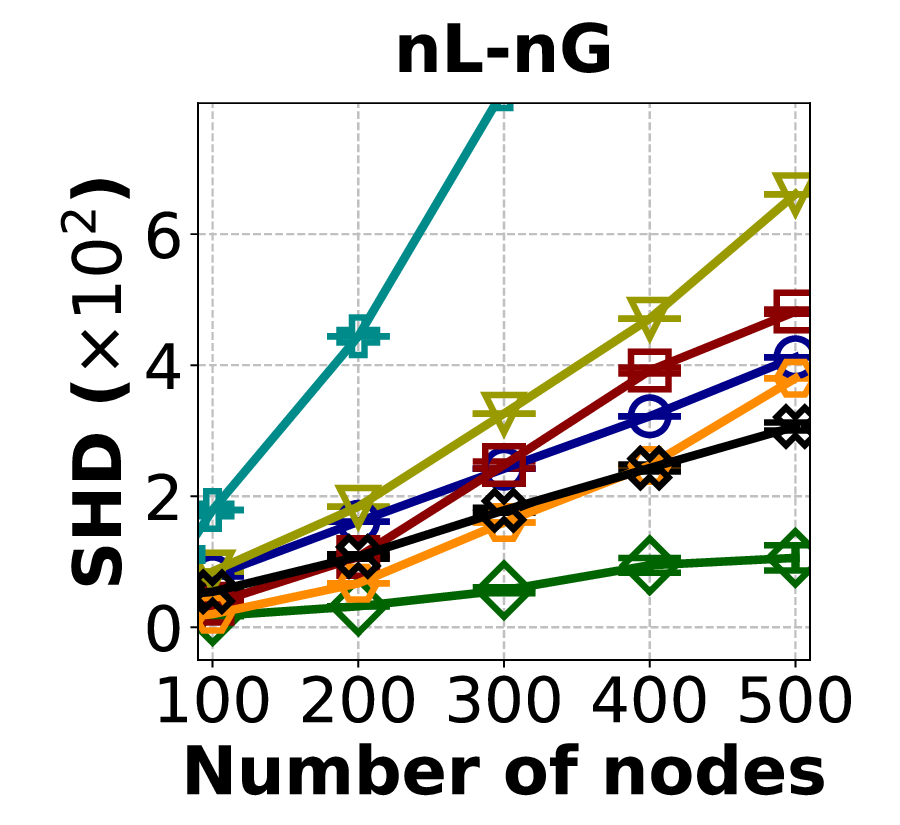}
        \hfill
        \includegraphics[width=0.46\linewidth]{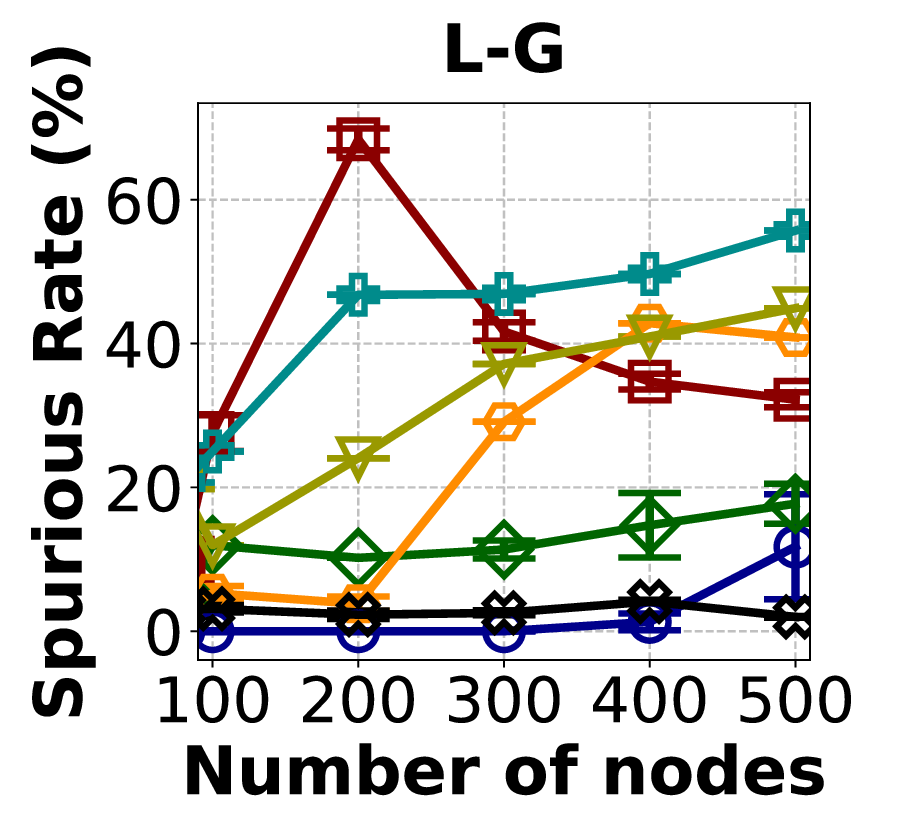}
        \hfill
        \includegraphics[width=0.46\linewidth]{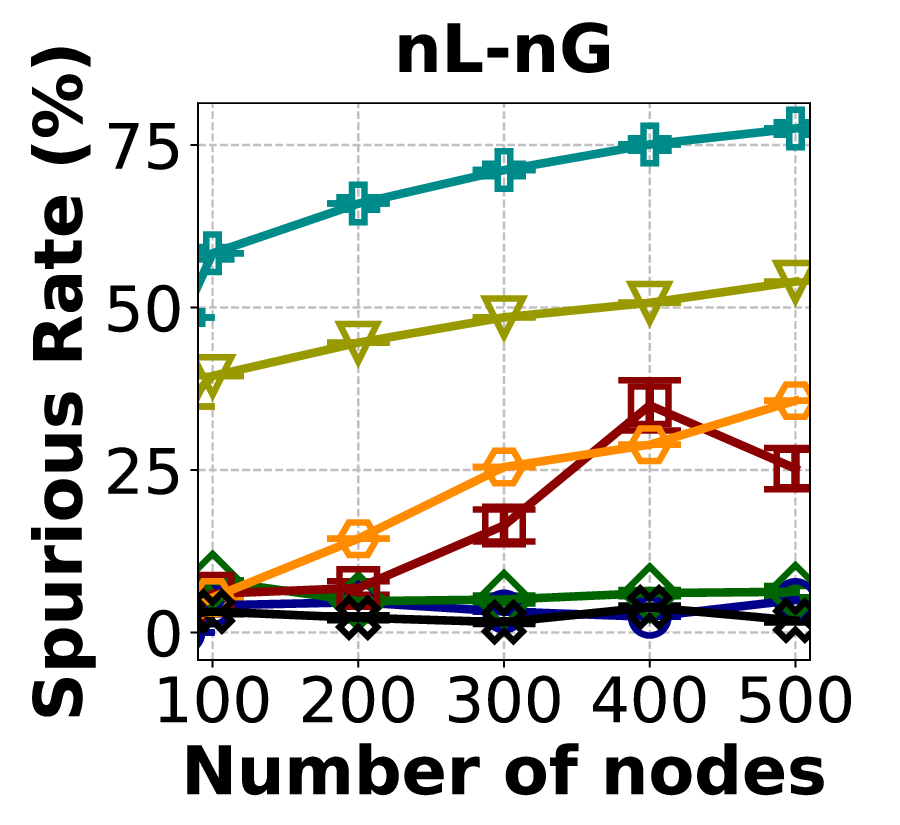}
        \hfill
        \includegraphics[width=0.46\linewidth]{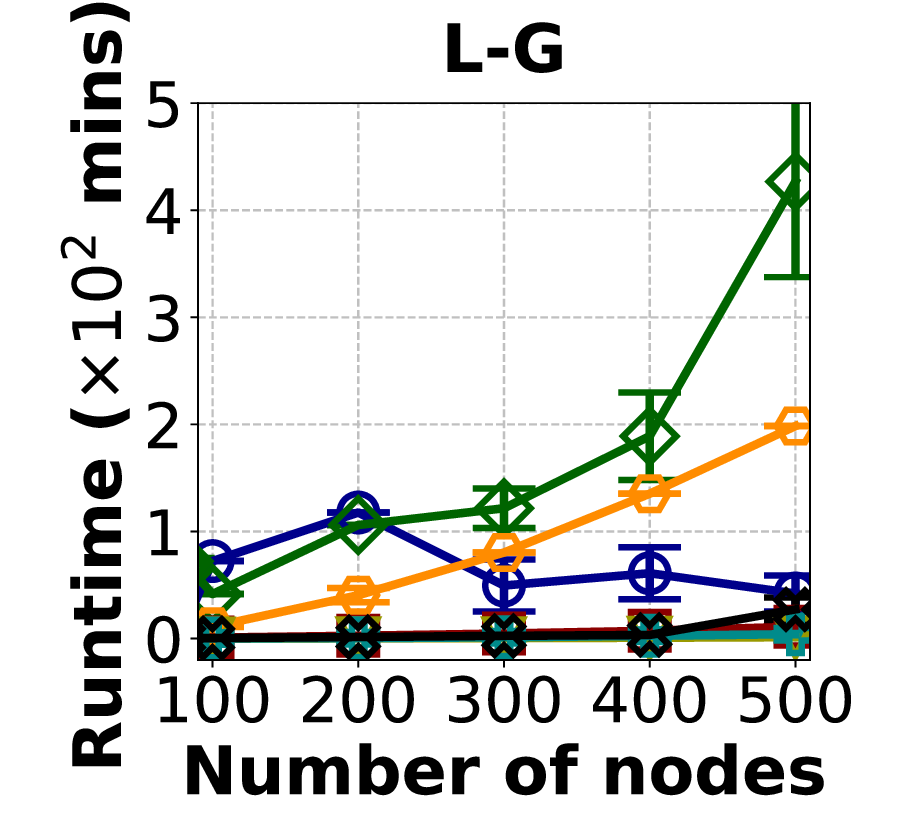}
        \hfill
        \includegraphics[width=0.46\linewidth]{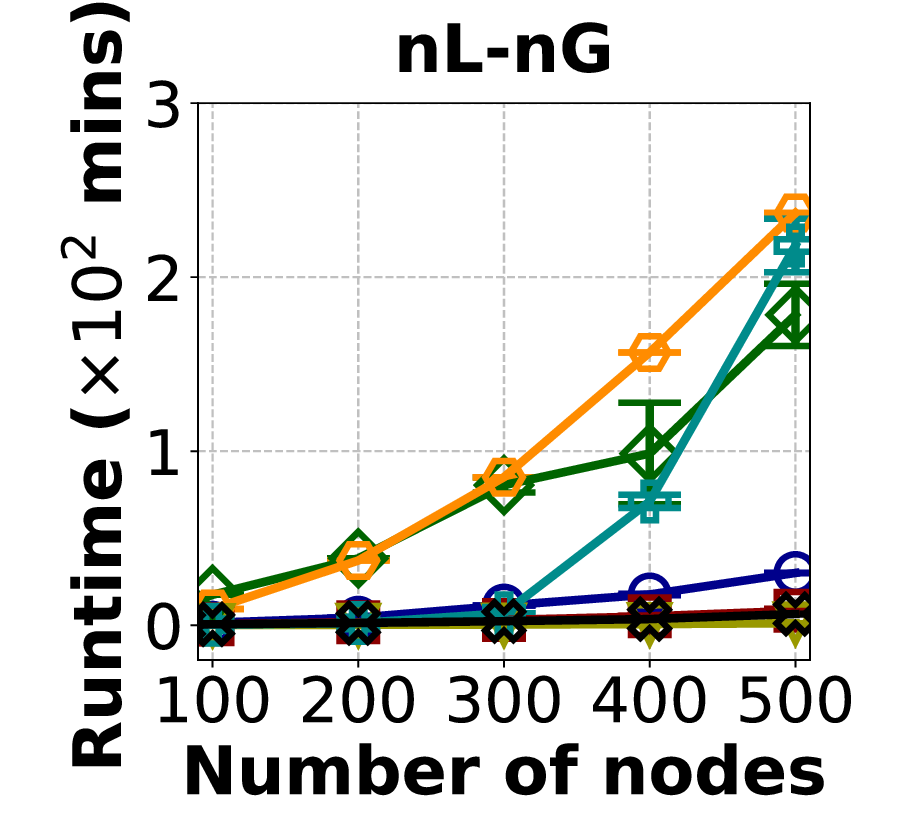}
        \vspace{-0.25cm}
         \caption{Baseline performance in normal setting (continuous data). Lower metrics are better.}
         \label{fig:notears}
    \end{minipage}
    \hspace{0.5cm}
    \begin{minipage}{0.45\linewidth} 
        \includegraphics[width=0.46\linewidth]{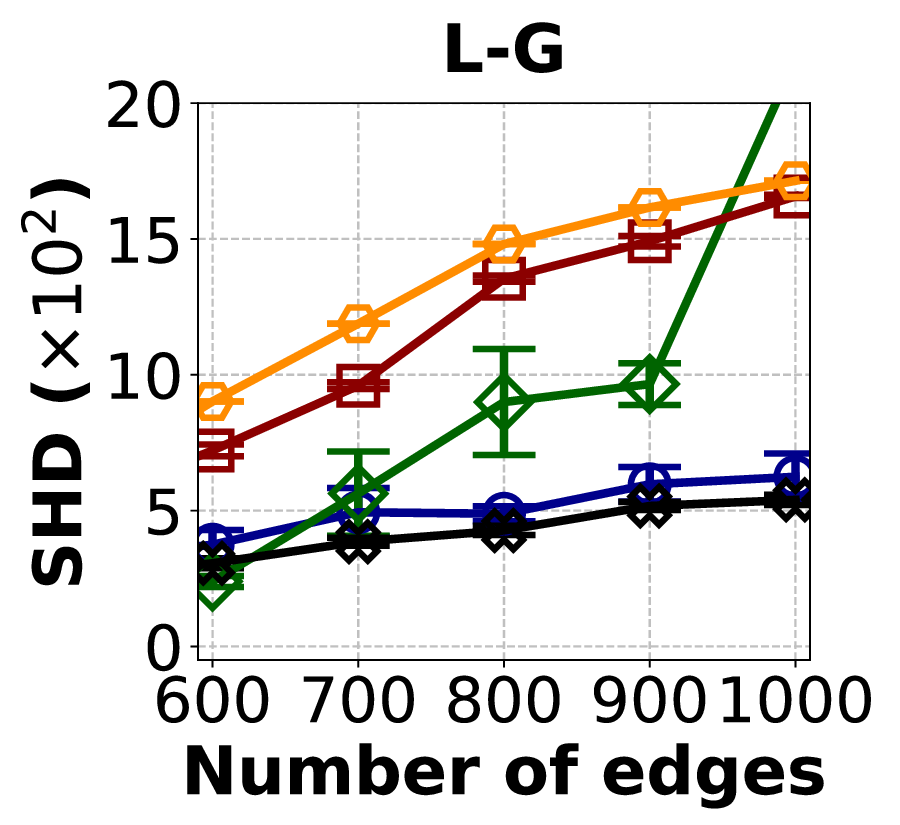}
        \hfill
        \includegraphics[width=0.46\linewidth]{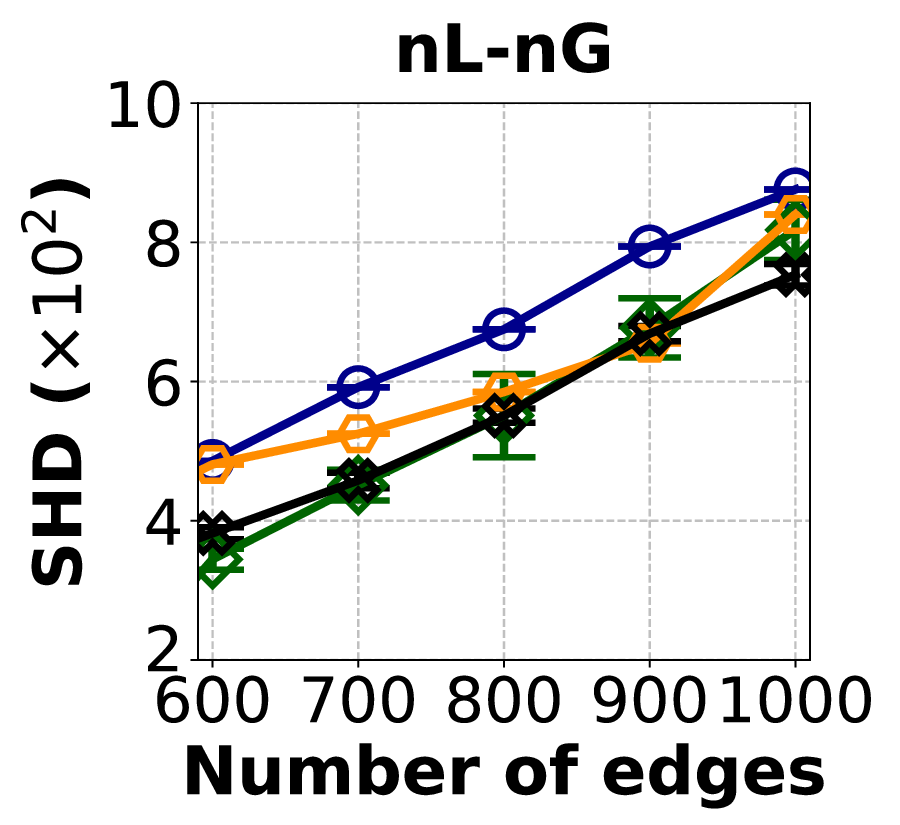}
        \hfill
        \includegraphics[width=0.46\linewidth]{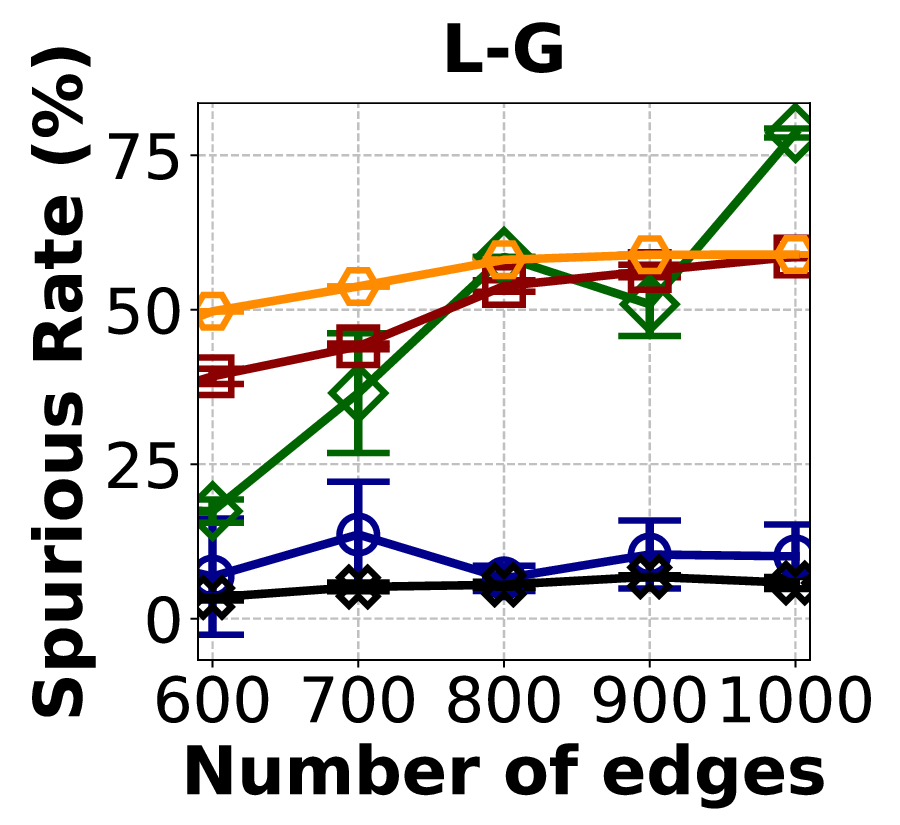}
        \hfill
        \includegraphics[width=0.46\linewidth]{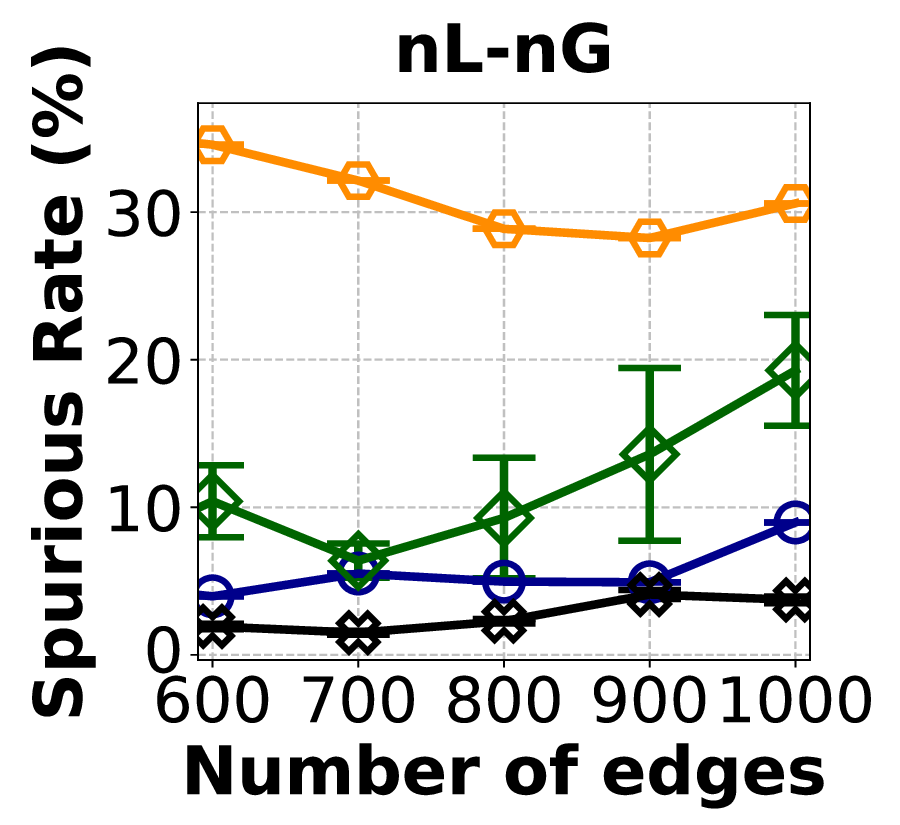}
        \hfill
        \includegraphics[width=0.46\linewidth]{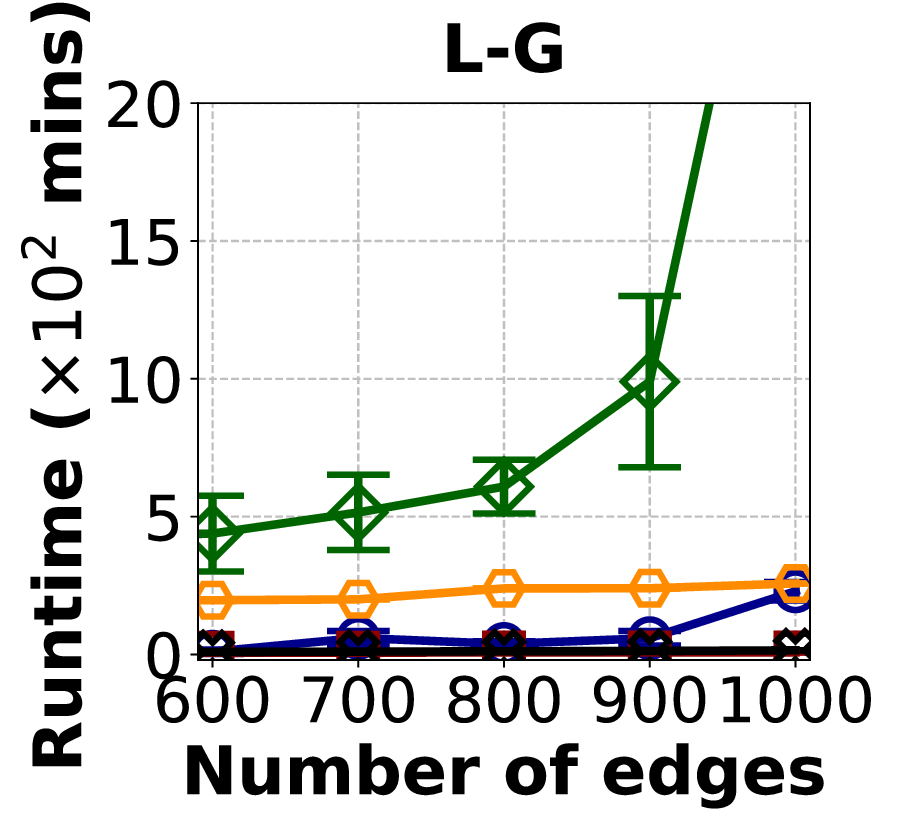}
        \hfill
        \includegraphics[width=0.46\linewidth]{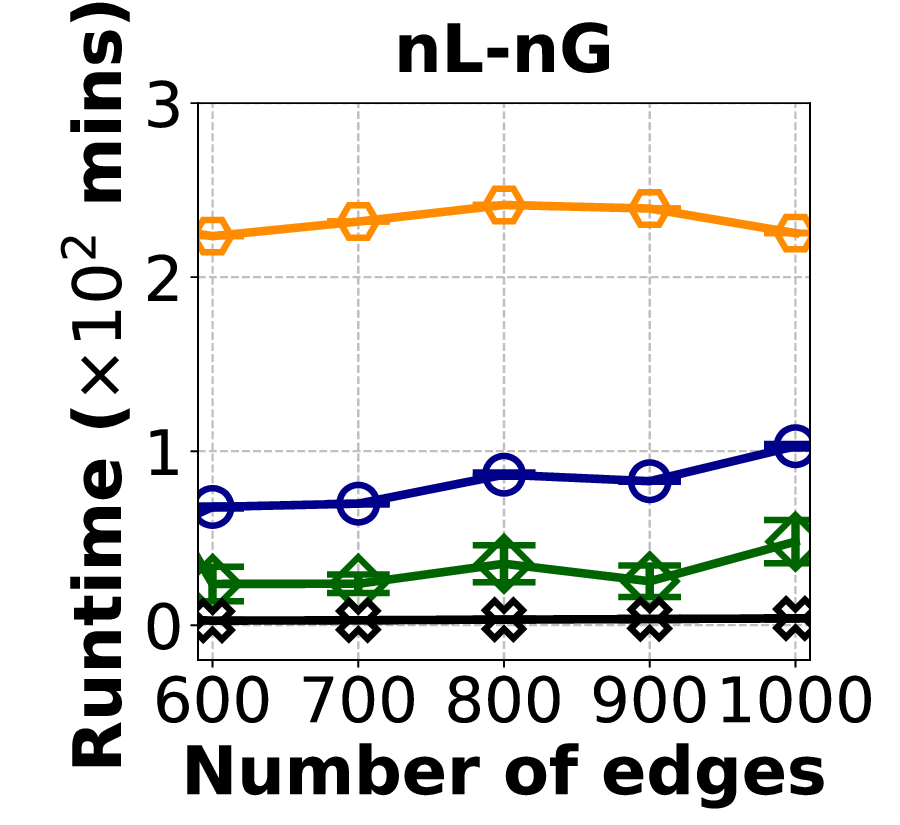}
        \vspace{-0.25cm}
         \caption{Baseline performance in extreme setting (continuous data). Lower metrics are better.}
         \label{fig:notears-extreme}
    \end{minipage}
    \label{fig:continuous}
    \vspace{-5mm}
\end{figure*}

We consider two experiment settings. First, in the normal setting, the graph's number of edges is the same as its number of nodes, which varies from $100$ to $500$. Second, in the extreme setting, we fix the number of nodes at $500$ and increase the number of edges from $600$ to $1000$. In each setting, we further consider two data-generation models for the effect-cause relationship: (1) linear relationship perturbed with Gaussian noise (annotated as {\bf L-G}); and (2) non-linear and non-Gaussian relationship (annotated as {\bf nL-nG}). Such data generation mechanism is implemented using the public code in \citep{zheng2018dags,zheng2020learning}. In each experiment, we first generate a random Erdos-Renyi DAGs and then simulate (synthetic) observational data from it using the above mechanisms. Our reported results and observations in each setting are detailed below.

\textbf{Normal Setting.} Figure~\ref{fig:notears} reports the performance of our method~\ourmethod~in comparison to those of other baselines. In the {\bf L-G} cases, \ourmethod, NOTEARS, and MLP-NOTEARS outperform the rest of the baselines significantly in terms of both SHD and spurious rate. Notably, \ourmethod~achieves a remarkable $64.17\%$ reduction rate on SHD over FCI. Furthermore, \ourmethod~incurs much less computational cost than both NOTEARS and MLP-NOTEARS (see the runtime plots) while achieving comparable or better performance. For example, \ourmethod~runs $96.54\times$ and $15.66\times$ faster than both NOTEARS and MLP-NOTEARS in in $100$- and $500$-node graphs while being second to MLP-NOTEARS in terms of SHD (with a small gap of $16.2\%$) and achieving best spurious rate in the largest graph setting with $500$ nodes ($4.2\%$ vs the second best of $11.75\%$ of NOTEARS and third best of $13.19\%$ of MLP-NOTEARS). In the {\bf nL-nG} cases, we also have a similar observations where \ourmethod~is again the second best in SHD and best in spurious rate (in the largest graph settings) while being much more scalable than the best and second best baselines in SHD and spurious rate, respectively. Our reported results also confirm our intuition earlier that score-based methods, despite being more scalable than constraint-based approaches, tend to perform much less robustly in large graph settings where the score function becomes less accurate. For example, both SCORE and DAS reach over $30\%$ of 
spurious rate
in $500$-node graphs.  

\textbf{Extreme Setting.} Figure~\ref{fig:notears-extreme} reports the performance of \ourmethod~in comparison to other baselines in this setting. In the {\bf L-G} cases, it is observed that \ourmethod~achieves the best performance in both SHD and spurious rate while also achieving the fastest running time in most graph settings. In terms of SHD, the performance gaps between \ourmethod~and the second best (NOTEARS) and worst baselines (SCORE) are $11.74\%$ and $108.5\%$, respectively. In terms of spurious rate, \ourmethod~also improves over the baselines with substantial performance gaps ranging from $5.46\%$ and $48.61\%$ against the second best and worst baselines, respectively. In the {\bf nL-nG} cases, \ourmethod~and MLP-NOTEARS perform comparably as best baselines in SHD but \ourmethod~outperforms it by a gap of $8.23\%$ in term of spurious rate. Furthermore, against the most high-performing baselines in this case, NOTEARS and MLP-NOTEARS, \ourmethod~achieve $9.6\times$ and $25.52\times$ faster processing time, respectively.\vspace{-2mm}
{\subsection{Real-World Data}
\label{sec:sachs}
The results show that \textbf{GLIDE} achieves the lowest SHD ($8.7\pm0.48$) and spurious rate ($0\%$),  significantly outperforms the second-best baseline, SCORE, whose SHD is $13.2\pm 0.66$ and spurious rate of $22\pm 4\%$. It is worth noting that \textbf{GLIDE} only takes $10.46\pm 0.71$ seconds to accomplish this task, which is three times faster than SCORE (at $30.2\pm 3.98$ seconds). 
FCI and DAS respectively stand at the third and forth place with the average SHD being $17$ and $22.27\pm 1.65$, and the corresponding spurious rates are $44\%$ and $55\%\pm 4\%$. \vspace{-2mm}
}


%% file: conclusion.tex
This paper presents a new perspective of causal learning via a new invariance test for causality that inspires a reliable and scalable algorithm for recovering causal graphs from observational data. 
Our approach explores a key insight that the effect-cause conditional distribution remains invariant under changes in the prior cause distribution, leading to a parent-finding procedure for each variable via synthetic data-augmentation.
This procedure is further coupled with an effective search algorithm that exploits prior knowledge of each effect variable's Markov blanket and the sparsity of the causal graphs to significantly reduces the overall complexity. 
This, in turn, results in a significant reduction in complexity and marked improvements both in speed and accuracy over existing SOTA methods on various benchmark datasets. 



%% file: supp-proofs.tex
\subsection{\revise{Proof of Theorem~\ref{thm:1}}}
\label{supp:proof-premise}

First, we establish the following \textbf{auxiliary result} which will be leveraged to establish our main result.

\begin{lemma}
    \label{lemma:base}
    $\forall \boldsymbol{Z} \subseteq \boldsymbol{X}\setminus\boldsymbol{B}: P_i(\boldsymbol{Z}\mid \boldsymbol{B}) = P(\boldsymbol{Z}\mid \boldsymbol{B})$.~Intuitively, this means the distribution of any set of non-source variables $\boldsymbol{Z}$ conditioned on source variables $\boldsymbol{B}$ remains invariant, i.e., $P_i(\boldsymbol{Z}\mid \boldsymbol{B}) = P(\boldsymbol{Z}\mid \boldsymbol{B})$, when we perturb the source marginals, i.e., replacing $P(\boldsymbol{B})$ with $P_i(\boldsymbol{B})$.~As a direct consequence, we have $P_i(\boldsymbol{Z}) = \int_{\boldsymbol{B}}P_i(\boldsymbol{Z}\mid \boldsymbol{B})P_i(\boldsymbol{B})d\boldsymbol{B} = \int_{\boldsymbol{B}}P(\boldsymbol{Z}\mid \boldsymbol{B})P_i(\boldsymbol{B})d\boldsymbol{B}$.
\end{lemma}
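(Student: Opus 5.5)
The plan is to read the lemma straight off the definition of the augmented distribution $P_i(\boldsymbol{X})$ given in Theorem~\ref{thm:1}, namely $P_i(\boldsymbol{X}) = P_i(\boldsymbol{B})\cdot P(\boldsymbol{X}\setminus\boldsymbol{B}\mid\boldsymbol{B})$. We first establish the claim for the full set of non-source variables $\boldsymbol{Z}=\boldsymbol{X}\setminus\boldsymbol{B}$. We then extend it to an arbitrary subset by marginalization.

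\emph{Full set.} Marginalizing $\boldsymbol{X}\setminus\boldsymbol{B}$ out of $P_i(\boldsymbol{X})$ gives $P_i(\boldsymbol{B}) = P_i(\boldsymbol{B})\int P(\boldsymbol{X}\setminus\boldsymbol{B}\mid\boldsymbol{B})\,d(\boldsymbol{X}\setminus\boldsymbol{B}) = P_i(\boldsymbol{B})$. So $P_i(\boldsymbol{B})$ really is the $\boldsymbol{B}$-marginal of $P_i(\boldsymbol{X})$. For every $\boldsymbol{b}$ with $P_i(\boldsymbol{B}=\boldsymbol{b})>0$ we then obtain
\[
P_i(\boldsymbol{X}\setminus\boldsymbol{B}\mid\boldsymbol{B}=\boldsymbol{b}) = \frac{P_i(\boldsymbol{X}\setminus\boldsymbol{B},\boldsymbol{b})}{P_i(\boldsymbol{b})} = P(\boldsymbol{X}\setminus\boldsymbol{B}\mid\boldsymbol{b}).
\]

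\emph{Arbitrary subset.} For $\boldsymbol{Z}\subseteq\boldsymbol{X}\setminus\boldsymbol{B}$, let $\boldsymbol{W} = (\boldsymbol{X}\setminus\boldsymbol{B})\setminus\boldsymbol{Z}$. Integrate the identity above over $\boldsymbol{W}$:
\[
P_i(\boldsymbol{Z}\mid\boldsymbol{b}) = \int P_i(\boldsymbol{Z},\boldsymbol{W}\mid\boldsymbol{b})\,d\boldsymbol{W} = \int P(\boldsymbol{Z},\boldsymbol{W}\mid\boldsymbol{b})\,d\boldsymbol{W} = P(\boldsymbol{Z}\mid\boldsymbol{b}).
\]
The stated consequence then follows from the law of total probability under $P_i$: $P_i(\boldsymbol{Z}) = \int P_i(\boldsymbol{Z}\mid\boldsymbol{B})P_i(\boldsymbol{B})\,d\boldsymbol{B}$, into which we substitute the invariance just proved. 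Optionally, the same conclusion can be tied to the factorization \textbf{(I)}: $P(\boldsymbol{X}\setminus\boldsymbol{B}\mid\boldsymbol{B}) = \prod_{X\notin\boldsymbol{B}}P(X\mid\mathrm{Pa}[X])$, which is exactly the product of mechanisms that Observation~\ref{premise} keeps fixed.

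\emph{Main obstacle.} The only delicate point is the support of the new prior. The conditional $P(\cdot\mid\boldsymbol{b})$ is only well defined (and estimable) where $P(\boldsymbol{B}=\boldsymbol{b})>0$. If $P_i$ places mass on values of $\boldsymbol{b}$ outside this support, then $P_i(\boldsymbol{X})$ is ill defined. I would therefore state the absolute-continuity assumption $P_i(\boldsymbol{B})\ll P(\boldsymbol{B})$ explicitly. This assumption is automatically satisfied by the priors used later, since $\gamma_i>0$ in Theorem~\ref{thm:3} forces it. On $P_i$-null sets of $\boldsymbol{b}$ the equality is understood almost everywhere, or by the convention of fixing the version of the conditional to be $P(\cdot\mid\boldsymbol{b})$.
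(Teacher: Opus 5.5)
Your proposal is correct and follows essentially the same route as the paper: the full conditional $P(\boldsymbol{X}\setminus\boldsymbol{B}\mid\boldsymbol{B})$ is unchanged in $P_i(\boldsymbol{X})$, and the claim for an arbitrary $\boldsymbol{Z}$ follows by integrating out $(\boldsymbol{X}\setminus\boldsymbol{B})\setminus\boldsymbol{Z}$; the paper justifies the unchanged conditional by identifying it with the fixed product of mechanisms $\prod_{X\notin\boldsymbol{B}}P(X\mid\mathrm{Pa}[X])$. Your explicit check that $P_i(\boldsymbol{B})$ is the $\boldsymbol{B}$-marginal of $P_i(\boldsymbol{X})$, and your caveat $P_i(\boldsymbol{B})\ll P(\boldsymbol{B})$ (which $\gamma_i>0$ guarantees), are sound refinements that the paper leaves implicit.
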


\textbf{Proof of Lemma~\ref{lemma:base}}: We have $P(\boldsymbol{X}) = P(\boldsymbol{B})P(\boldsymbol{X}\setminus\boldsymbol{B}\mid\boldsymbol{B})$ where $P(\boldsymbol{X}\setminus\boldsymbol{B}\mid\boldsymbol{B}) = \prod_{X \in \boldsymbol{X}\setminus\boldsymbol{B}}P(X\mid\mathrm{Pa}(X))$ is constant. Therefore:
$P_i(\boldsymbol{Z}\mid \boldsymbol{B}) = \int_{\boldsymbol{X}\setminus(\boldsymbol{B}\cup\boldsymbol{Z})}P_i(\boldsymbol{X}\setminus\boldsymbol{B}\mid \boldsymbol{B})d\boldsymbol{B} = \int_{\boldsymbol{X}\setminus(\boldsymbol{B}\cup\boldsymbol{Z})}P(\boldsymbol{X}\setminus\boldsymbol{B}\mid \boldsymbol{B})d\boldsymbol{B} = P(\boldsymbol{Z}\mid \boldsymbol{B})$.\qed

{\bf Key result:} 
\revise{For each node $X$, its candidate parent set is therefore a power set of $\mathrm{Pa}(X) \cup \mathrm{Ch}(X)$ (see Theorem~\ref{theo:plausible_parent})}. Our key result then establishes that over the random choice $P_i(B)$ as prior over sources $B$: (1) the variance over the induced conditional $P_i(X \mid \mathrm{Pa(X)})$ is zero, $\mathbb{V}_{P_i(\boldsymbol{B})}[P_i(X \mid \mathrm{Pa}(X))] = 0$; and (2) the variance over $P_i(X \mid \boldsymbol{Z})$ where $\boldsymbol{Z} \subseteq \mathrm{Pa}(X) \cup \mathrm{Ch}(X)$ and $\boldsymbol{Z} \neq \mathrm{Pa}(X)$ is not zero. This means we can use $\mathbb{V}_{P_i(\boldsymbol{B})}[P_i(X \mid \boldsymbol{Z})]$ where $\boldsymbol{Z} \subseteq \mathrm{Pa}(X) \cup \mathrm{Ch}(X)$ to determine the true causal parents of $X$. The proof is as following:

Since we only perform our invariance test on non-source variables $X$, we consider the following tuple $(X,\boldsymbol{Z})$ where $\boldsymbol{Z} = \boldsymbol{Z}_1 \cup \boldsymbol{Z}_2$, $\boldsymbol{Z}_1 \cap \boldsymbol{Z}_2 =\emptyset$, $\boldsymbol{Z}_1 \cap \boldsymbol{B} = \emptyset$, and $\boldsymbol{Z}_2 \subseteq \boldsymbol{B}$. Then we have:
\begin{eqnarray}
    \label{eq:star}
    P_i(X\mid\boldsymbol{Z}) = P_i(X\mid \boldsymbol{Z}_1, \boldsymbol{Z}_2) = 
\frac{P_i(X, \boldsymbol{Z}_1, \boldsymbol{Z}_2)}{P_i(\boldsymbol{Z}_1, \boldsymbol{Z}_2)} = \frac{\int_{\boldsymbol{B}\setminus\boldsymbol{Z}_2}P(X, \boldsymbol{Z}_1\mid \boldsymbol{B})P_i(\boldsymbol{B})d(\boldsymbol{B}\setminus\boldsymbol{Z}_2)}{\int_{\boldsymbol{B}\setminus\boldsymbol{Z}_2}P(\boldsymbol{Z}_1\mid \boldsymbol{B})P_i(\boldsymbol{B})d(\boldsymbol{B}\setminus\boldsymbol{Z}_2)}
\end{eqnarray}

Here, the last equality follows from Lemma~\ref{lemma:base}. This means if $\mathbb{V}_{P_i(\boldsymbol{B})}[P_i(X\mid\boldsymbol{Z})] = 0$, then $\frac{P_i(X, \boldsymbol{Z}_1, \boldsymbol{Z}_2)}{P_i(\boldsymbol{Z}_1, \boldsymbol{Z}_2)}$ must be a constant function $\alpha(X,\boldsymbol{Z}_1, \boldsymbol{Z}_2)$ with respect to $P_i(\boldsymbol{B})$. Given its expansion above, we must have:
\begin{eqnarray}
    P_i(X, \boldsymbol{Z}_1 \mid \boldsymbol{B})=\alpha(X,\boldsymbol{Z})P_i(\boldsymbol{Z}_1\mid \boldsymbol{B}) \Rightarrow P_i(X \mid \boldsymbol{Z}_1, \boldsymbol{B}) = P_i(X \mid \boldsymbol{Z}_1, \boldsymbol{Z}_2, \boldsymbol{B} \setminus \boldsymbol{Z}_2) = \alpha(X,\boldsymbol{Z}_1, \boldsymbol{Z}_2),
\end{eqnarray}
where $\alpha(X,\boldsymbol{Z}_1, \boldsymbol{Z}_2)$ is a scalar function independent of the choice $P_i(\boldsymbol{B})$ for source prior. Equivalently, this means knowing $\boldsymbol{B} \setminus \boldsymbol{Z}_2$ does not change the condition of $X$ on $\boldsymbol{Z} = (\boldsymbol{Z}_1, \boldsymbol{Z}_2)$. Following the definition of conditional independence, we have $X \perp (\boldsymbol{B} \setminus \boldsymbol{Z}_2) \mid (\boldsymbol{Z}_1, \boldsymbol{Z}_2)$. Hence,
\begin{eqnarray}
    \label{eq:+}
    \mathbb{V}_{P_i(\boldsymbol{B})}[P_i(X\mid \boldsymbol{Z})] = 0 \Rightarrow X \perp (\boldsymbol{B} \setminus \boldsymbol{Z}_2) \mid (\boldsymbol{Z}_1, \boldsymbol{Z}_2)
\end{eqnarray}

On the other hand, if $X \perp (\boldsymbol{B} \setminus \boldsymbol{Z}_2) \mid \boldsymbol{Z}_1, \boldsymbol{Z}_2)$ holds, $P(X \mid (\boldsymbol{B} \setminus \boldsymbol{Z}_2), \boldsymbol{Z}_1, \boldsymbol{Z}_2) = P(X \mid \boldsymbol{Z}_1, \boldsymbol{Z}_2)$. Plugging this into Equation~\ref{eq:star} results in:
\begin{eqnarray*}
P_i(X\mid\boldsymbol{Z}) &=& P_i(X\mid \boldsymbol{Z}_1, \boldsymbol{Z}_2) \\
&=&\frac{\int_{\boldsymbol{B}\setminus\boldsymbol{Z}_2}P(X, \boldsymbol{Z}_1\mid \boldsymbol{B})P_i(\boldsymbol{B})d(\boldsymbol{B}\setminus\boldsymbol{Z}_2)}{\int_{\boldsymbol{B}\setminus\boldsymbol{Z}_2}P(\boldsymbol{Z}_1\mid \boldsymbol{B})P_i(\boldsymbol{B})d(\boldsymbol{B}\setminus\boldsymbol{Z}_2)} \qquad \triangleleft\text{Apply Lemma~\ref{lemma:base} for } P(X,\boldsymbol{Z}_1\mid \boldsymbol{B})\\
&=&\frac{\int_{\boldsymbol{B}\setminus\boldsymbol{Z}_2}P(X\mid\boldsymbol{Z}_1,\boldsymbol{Z}_2,\boldsymbol{B}\setminus\boldsymbol{Z}_2)P(\boldsymbol{Z}_1\mid \boldsymbol{B})P_i(\boldsymbol{B})d(\boldsymbol{B}\setminus\boldsymbol{Z}_2)}{\int_{\boldsymbol{B}\setminus\boldsymbol{Z}_2}P(\boldsymbol{Z}_1\mid \boldsymbol{B})P_i(\boldsymbol{B})d(\boldsymbol{B}\setminus\boldsymbol{Z}_2)}\\
&=&\frac{\int_{\boldsymbol{B}\setminus\boldsymbol{Z}_2}P(X\mid\boldsymbol{Z}_1,\boldsymbol{Z}_2)P(\boldsymbol{Z}_1\mid \boldsymbol{B})P_i(\boldsymbol{B})d(\boldsymbol{B}\setminus\boldsymbol{Z}_2)}{\int_{\boldsymbol{B}\setminus\boldsymbol{Z}_2}P(\boldsymbol{Z}_1\mid \boldsymbol{B})P_i(\boldsymbol{B})d(\boldsymbol{B}\setminus\boldsymbol{Z}_2)} \qquad \triangleleft\text{Conditional independence holds}\\
&=&P(X\mid\boldsymbol{Z}_1,\boldsymbol{Z}_2)\frac{\int_{\boldsymbol{B}\setminus\boldsymbol{Z}_2}P(\boldsymbol{Z}_1\mid \boldsymbol{B})P_i(\boldsymbol{B})d(\boldsymbol{B}\setminus\boldsymbol{Z}_2)}{\int_{\boldsymbol{B}\setminus\boldsymbol{Z}_2}P(\boldsymbol{Z}_1\mid \boldsymbol{B})P_i(\boldsymbol{B})d(\boldsymbol{B}\setminus\boldsymbol{Z}_2)}\\
&=&P(X\mid \boldsymbol{Z}_1, \boldsymbol{Z}_2)
\end{eqnarray*}
which does not depend on $P_i(\boldsymbol{B})$. This means:
\begin{eqnarray}
    \label{eq:++}
    X \perp (\boldsymbol{B} \setminus \boldsymbol{Z}_2) \mid (\boldsymbol{Z}_1, \boldsymbol{Z}_2) \Rightarrow \mathbb{V}_{P_i(\boldsymbol{B})}[P(X\mid \boldsymbol{Z})] = 0.
\end{eqnarray}
Combining Equations~\ref{eq:+} and \ref{eq:++}, we obtain a {\bf bidirectional equivalence}:
\begin{eqnarray}
    \mathbb{V}_{P_i(\boldsymbol{B})}[P_i(X\mid \boldsymbol{Z}_1, \boldsymbol{Z}_2)] = 0 \Leftrightarrow X \perp (\boldsymbol{B} \setminus \boldsymbol{Z}_2) \mid \boldsymbol{Z}_1, \boldsymbol{Z}_2. \label{eq:+++}
\end{eqnarray}
Assuming that $\mathrm{Pa}(X) \cap \mathrm{Spouse}(X) = \emptyset$ and $\mathrm{Spouse}(X) \cap \boldsymbol{B} = \emptyset$, the above {\bf bidirectional equivalence} in Equation~\ref{eq:+++} has these direct consequences:
\begin{itemize}
    \item[a.] $\boldsymbol{Z}_1 \cup \boldsymbol{Z}_2 = \mathrm{Pa}(X) \Rightarrow X \perp (\boldsymbol{B} \setminus \boldsymbol{Z}_2) \mid (\boldsymbol{Z}_1 \cup \boldsymbol{Z}_2) \Rightarrow \mathbb{V}_{P_i(\boldsymbol{B})}[P_i(X\mid \boldsymbol{Z})] = 0$ 
    \item[b.] $\boldsymbol{Z}_1 \cup \boldsymbol{Z}_2 = \mathrm{Pa}(X) \cup \{Y\}$ where $Y\in\mathrm{Ch}(X)$, then when conditioned on $Y$, there exists an active (backdoor) path $\boldsymbol{B} \to \cdots \to \mathrm{Spouse}(X) \to Y \gets X$. Therefore, $X \perp (\boldsymbol{B} \setminus \boldsymbol{Z}_2) \mid (\boldsymbol{Z}_1 \cup \boldsymbol{Z}_2)$ does not hold. Hence, $\mathbb{V}_{P_i(\boldsymbol{B})}[P_i(X\mid \boldsymbol{Z})] > 0$ because otherwise, $\mathbb{V}_{P_i(\boldsymbol{B})}[P_i(X\mid \boldsymbol{Z})] = 0$ implies $X \perp (\boldsymbol{B} \setminus \boldsymbol{Z}_2) \mid (\boldsymbol{Z}_1 \cup \boldsymbol{Z}_2)$ which contradicts the above backdoor consequence.
    \item[c.] $\mathrm{Pa}(X) \not\subseteq \boldsymbol{Z}_1 \cup \boldsymbol{Z}_2$ also implies the existence of an active backdoor connecting $\boldsymbol{B}$ and $X$. Using the same argument as in (b), we know this case also implies $\mathbb{V}_{P_i(\boldsymbol{B})}[P_i(X\mid \boldsymbol{Z})] > 0$. 
\end{itemize}

Overall, (b) and (c) collectively ensure that as long as $\boldsymbol{Z}$ is not $\mathrm{Pa}(X)$, $\mathbb{V}_{P_i(\boldsymbol{B})}[P_i(X\mid \boldsymbol{Z})] > 0$. Otherwise, (a) ensures that $\mathbb{V}_{P_i(\boldsymbol{B})}[P_i(X\mid \boldsymbol{Z})] = 0$. \qed 

We also note that the structural assumptions $\mathrm{Pa}(X) \cap \mathrm{Spouse}(X) = \emptyset$ and $\mathrm{Spouse}(X) \cap \boldsymbol{B} = \emptyset$ hold with relatively high probability. For example, regarding real-world graphs (provided by bnlearn~\citep{scutari2009learning}) we presented, the percentage of nodes satisfying these assumptions is over $75\%$.

\subsection{Proof of Theorem~\ref{thm:2}}
\label{supp:proof-sources}
Let $\boldsymbol{S}(G)\subseteq \boldsymbol{X}$ and $\boldsymbol{B}(G) \subseteq \boldsymbol{X}$ denote respectively the set of sources and an arbitrary basis set of the DAG $G =(\boldsymbol{X},\boldsymbol{E})$. We will prove the following inductive statement:

\hspace{15mm}\textit{$P(n)$: ``Given a DAG $G$ of $n$ $(n \geq 1)$ nodes, we have $|\boldsymbol{B}(G)| \leq |\boldsymbol{S}(G)|$"}

To see this, note that in the base case $n = 1$, $G$ has single node which is obviously both a source and basis node. Hence, $|\boldsymbol{B}(G)| = |\boldsymbol{S}(G)|$ and $P(1)$ is true.

Now, suppose $P(n)$ is true, we will complete induction by showing that $P(n+1)$ is also true. To show this, let $X$ be a terminal node in $G$ that has no children. Removing $X$ from $G$ thus results in another DAG $G'$ with $n$ nodes. 

Let $\boldsymbol{B}(G')$ denote an arbitrary basis set of $G'$. By definition, nodes in $\boldsymbol{B}(G')$ are mutually independent and for any nodes $X'\notin \boldsymbol{X}\setminus(\boldsymbol{B}(G')\cup \{X\})$, there exists a node $Z$ in $\boldsymbol{B}(G')$ such that $X' \nCI Z$ following the definition of $d$-separation. 

Choosing $X' \in \mathrm{Pa}[X]$, there must exist $Z \in \boldsymbol{B}(G')$ such that $Z$ is connected to $X'$ via $d$-separation. Since $X'$ is a parent of $X$, $Z$ is also connected to $X$ via $d$-separation. This means each node in $G$ is either a part of $\boldsymbol{B}(G')$ or connected to another node in $\boldsymbol{B}(G')$ via $d$-separation. Hence, $\boldsymbol{B}(G')$ is also a basis set of $G$. In this case, $|\boldsymbol{B}(G')| = |\boldsymbol{S}(G')| \leq |\boldsymbol{S}(G)|$ since $P(n)$ is true. 

Otherwise, suppose $\boldsymbol{B}(G)$ is a basis set of $G$ that is not in $G'$. In this case, $\boldsymbol{B}(G)$ must contain $X$ and by definition of basis (see Definition~\ref{def:basis}) and its choice, $X$ must be an isolated node with no parents and children. This also means $\boldsymbol{B}(G)\setminus\{X\}$ is a basis set of $G'$. Otherwise, there exists a node in $G$ that is not connected to any nodes in the basis set $\boldsymbol{B}(G)$, resulting in a contradiction. As such, we have $|\boldsymbol{B}(G)| = |\boldsymbol{B}(G)\setminus\{X\}| + 1 \leq |\boldsymbol{S}(G')| + 1 = |\boldsymbol{S}(G)|$ where the first inequality is true due to $P(n)$ and the last equality is true since $X$ is an isolated node which is also a source node. 

As such, $|\boldsymbol{B}(G)| \leq |\boldsymbol{S}(G)|$ meaning $P(n+1)$ is true if $P(n)$ is true. Induction completes.\qed

\revise{It is worth noting that for every graph $G$, there exists a basis set $\boldsymbol{B}$ such that $|\boldsymbol{B}(G)| = |\boldsymbol{S}(G)|$. The trivial solution is $\boldsymbol{B} \equiv \boldsymbol{S}$ since the set of sources $\boldsymbol{S}$ satisfies Definition~\ref{def:basis} of a basis set (i.e., source variables are mutually independent and every other variable is dependent on at least one source).}

\subsection{Proof of Theorem~\ref{thm:2a}}
\label{supp:proof-thm:2a}
To prove that the procedure in Theorem~\ref{thm:2a} produces a basis set with maximum size, it suffices to prove that each step in this procedure removes exactly one source from the graph (see Lemma~\ref{prop:one-source}). In this case, the returned basis set has the same size as the source set, which is also the maximum basis set according to Theorem~\ref{thm:2}.

\begin{lemma}
\label{prop:one-source}
If $X$ has the smallest number of dependent nodes, i.e., $X \triangleq \arg\min_{Y\in \boldsymbol{V}} |\Phi(Y)|$, then 
$X$ is dependent on exactly one source in $G = (\boldsymbol{V}, \boldsymbol{E})$.
\end{lemma}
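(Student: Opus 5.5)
The plan is to turn marginal dependence into a statement about source ancestry, and then compare $|\Phi(X)|$ with $|\Phi(s)|$ for a source $s$ of $X$. Throughout, $G$ is a causal model of $P(\boldsymbol{X})$ in the sense of Definition~\ref{def:causal-model}, so $\nCI$ means $d$-connection given the empty set.

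For each node $Y$, let $\boldsymbol{S}(Y)$ denote the set of sources that are ancestors of $Y$, with $Y$ counted as its own ancestor. This set is nonempty because $G$ is acyclic.

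\textbf{Step 1: characterize $\Phi$.} By Definition~\ref{def:d-separation} with $\boldsymbol{Z}=\emptyset$, every collider on a path blocks it. So $Y \nCI Z$ holds iff there is a collider-free path between them, i.e.\ a trek. Equivalently, $Y$ and $Z$ have a common ancestor, where one may be an ancestor of the other.

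\begin{itemize}
\item If $Y$ and $Z$ have a common ancestor, that ancestor has a source among its own ancestors, and this source lies in $\boldsymbol{S}(Y)\cap\boldsymbol{S}(Z)$.
\item Conversely, any shared source is itself a common ancestor.
\end{itemize}

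Hence
\[
\Phi(Y)=\{Z\in\boldsymbol{V} : \boldsymbol{S}(Z)\cap\boldsymbol{S}(Y)\neq\emptyset\}.
\]
For a source $s$ this simplifies to $\Phi(s)=\{Z\in\boldsymbol{V} : s\in\boldsymbol{S}(Z)\}$.

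\textbf{Step 2: the strict comparison.} Suppose, for contradiction, that the minimizer $X$ has two distinct sources $s_1,s_2\in\boldsymbol{S}(X)$.

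\begin{itemize}
\item Every $Z\in\Phi(s_1)$ has $s_1\in\boldsymbol{S}(Z)\cap\boldsymbol{S}(X)$, so $\Phi(s_1)\subseteq\Phi(X)$.
\item $s_2\in\Phi(X)$, since $s_2$ is an ancestor of $X$.
\item $s_2\notin\Phi(s_1)$, because a source has no ancestors other than itself, so $\boldsymbol{S}(s_2)=\{s_2\}$.
\end{itemize}

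Therefore $|\Phi(s_1)|<|\Phi(X)|$, contradicting the minimality of $X$, provided $s_1\in\boldsymbol{V}$. It follows that $|\boldsymbol{S}(X)|=1$, i.e.\ $X$ depends on exactly one source.

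\textbf{Step 3: the main obstacle, justifying $s_1\in\boldsymbol{V}$ at every iteration.} Since the lemma is applied inside the iterative procedure of Theorem~\ref{thm:2a}, $\boldsymbol{V}$ shrinks and $\Phi$ is evaluated on the remaining nodes. I would handle this by induction on the iterations, with the following hypothesis.

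\emph{Inductive hypothesis:} after $k$ steps, the chosen nodes $X_1,\dots,X_k$ have single sources $t_1,\dots,t_k$, and the removed set is exactly $\{Z : \boldsymbol{S}(Z)\cap\{t_1,\dots,t_k\}\neq\emptyset\}$.

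The second part follows from the Step 1 formula: since $\boldsymbol{S}(X_j)=\{t_j\}$, we get $\Phi(X_j)=\{Z : t_j\in\boldsymbol{S}(Z)\}$.

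Now take a remaining node $X$. By the hypothesis, $\boldsymbol{S}(X)$ avoids every $t_j$. So its sources $s_1,s_2$ have $\boldsymbol{S}(s_i)=\{s_i\}$ disjoint from $\{t_j\}$, and they are still in $\boldsymbol{V}$. The characterization of $\Phi$ restricted to $\boldsymbol{V}$ is unchanged, because dependence is a property of $P$ and not of the current subset. So Step 2 applies verbatim on $\boldsymbol{V}$.

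This closes the induction, and gives the lemma at every step of the procedure.
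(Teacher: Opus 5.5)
Your proof is correct and follows the same approach as the paper: both show $\Phi(S_1)\subseteq\Phi(X)$ for a source $S_1$ of $X$, and both use a second source $S_2\in\Phi(X)\setminus\Phi(S_1)$ to get $|\Phi(X)|>|\Phi(S_1)|$, contradicting minimality. Your common-source-ancestor characterization of $\Phi$ and the induction in Step~3, which guarantees that the sources $s_1,s_2$ of any remaining node are themselves still in $\boldsymbol{V}$, make rigorous two points that the paper's proof leaves implicit when the lemma is applied iteratively in Theorem~\ref{thm:2a}.
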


\revise{{\bf Proof of Lemma~\ref{prop:one-source}:} Let statement $(P)$ be ``$X\in \boldsymbol{V}$ has the smallest number of dependent nodes in $G$'', and $(Q)$ be ``$X$ is dependent on exactly one source in $G$''. To prove $(P)\Rightarrow (Q)$, we will prove that its negation  $(P)\wedge \neg (Q)$ is false. Specifically, $(P)\wedge \neg (Q)$ reads ``$X$ has the smallest number of dependent nodes and $X$ is dependent on two or more sources in $G$''.}

\revise{To see this, when $\neg (Q)$ is true, there must exist two source variables $S_1$ and $S_2$ which are dependent on $X$ (note that $S_1$ and $S_2$ are independent by definition).~This means there must exist an active path from $S_1$ to $X$, and another from $S_2$ to $X$.~Consequently, $\forall Y \in \Phi(S_1)$, there exists an active path from $S_1$ to $Y$ and hence, $X$ and $Y$ are connected by an active path via a common cause trail at $S_1$ which implies $Y\in \Phi(X)$.~Likewise, $Y \in \Phi(S_2)$ also implies $Y\in \Phi(X)$.~This means $|\Phi(X)| \geq |\Phi(S_1) \cup \Phi(S_2)|$.~Finally, note that (1) $S_2\notin \Phi(S_1)$ as both are sources; and (2) $S_2 \in \Phi(S_2)$ by definition since $S_2 \nCI S_2$.~Given this, $|\Phi(S_1) \cup \Phi(S_2)| \ge |\Phi(S_1)| + 1 > |\Phi(S_1)|$.~Combining this with the earlier established fact that  $|\Phi(X)| \geq |\Phi(S_1) \cup \Phi(S_2)|$ leads to $|\Phi(X)| >|\Phi(S_1)|$ which is a contradiction to $(P)$.~Thus $(P)\wedge \neg (Q)$ is false.} \qed

\subsection{Proof of Theorem~\ref{thm:3}}
\label{supp:proof-thm:4}

Let $D_i$ be any downsampled dataset of $D$. Consider a candidate value $\boldsymbol{b}$ of {the source} \revise{a} variable $\boldsymbol{B}$ that occurs $D$. Let $n(\boldsymbol{b})$ and $n_i(\boldsymbol{b})$  denote the corresponding numbers of data points in $D$ and $D_i$ that have $\boldsymbol{B} = \boldsymbol{b}$. Since $D_i$ is downsampled from $D$, $n_i(\boldsymbol{b}) \leq n(\boldsymbol{b})$. Hence,
let $n \triangleq |D|$ and $n_i \triangleq |D_i|$, 
\begin{eqnarray}
P(\boldsymbol{B} = \boldsymbol{b}) / P_i(\boldsymbol{B} = \boldsymbol{b}) &\geq& (n(\boldsymbol{b}) / n) / (n_i(\boldsymbol{b}) / n_i) \nonumber\\
&=&(n(\boldsymbol{b}) / n_i(\boldsymbol{b})) \cdot (n_i / n) \ \ \geq \ \ n_i / n \ \ =\ \ |D_i| /|D| \label{eq:3.1}
\end{eqnarray}
which follows from the facts that $P(\boldsymbol{B} = \boldsymbol{b}) =  n(\boldsymbol{b}) / n$, $P_i(\boldsymbol{B} = \boldsymbol{b}) =  n_i(\boldsymbol{b}) / n_i$, and $n(\boldsymbol{b}) \geq n_i(\boldsymbol{b})$ due to the downsampled nature of $D_i$. Taking the minimum over all candidate values $\boldsymbol{b}$ of $\boldsymbol{B}$, 
\begin{eqnarray}
\gamma_i &\triangleq& \min_{\boldsymbol{b}} \Big(P(\boldsymbol{B} = \boldsymbol{b}) / P_i(\boldsymbol{B} = \boldsymbol{b})\Big) \ \ \geq\ \ |D_i| / |D| \label{eq:3.2}
\end{eqnarray}
As a result, for all downsampled dataset $D_i$ of $D$, the downsampling rate $|D|/|D_i| \geq \gamma_i$ where $\gamma_i$ is defined in terms of the original and target marginal over the source variable $\boldsymbol{B}$ as stated in Eq.~\eqref{eq:3.2}. Hence, if $D_i$ is the downsampled data with minimum downsampling rate $|D| /|D_i| \geq \gamma_i^{-1}$. 
\subsection{Proof of Theorem~\ref{thm:4}}
\label{supp:proof-thm:5}
Since $D_i$ is created via sampling with no replacement $P_i(\boldsymbol{B} = \boldsymbol{b}) \cdot |D| \cdot \gamma_i$ data points from $D$ where $\boldsymbol{B} = \boldsymbol{b}$, we know that $n_i(\boldsymbol{b}) = P_i(\boldsymbol{B} = \boldsymbol{b}) \cdot |D| \cdot \gamma_i = P_i(\boldsymbol{B} = \boldsymbol{b}) \cdot n \cdot \gamma_i$. Thus,
\begin{eqnarray}
n_i &\triangleq& \left(\sum_{\boldsymbol{b}} n_i(\boldsymbol{b})\right) \ \ =\ \ \left(\sum_{\boldsymbol{b}} P_i\big(\boldsymbol{B} = \boldsymbol{b}\big) \cdot |D| \cdot \gamma_i\right) \nonumber\\
&=& P_i\big(\boldsymbol{B} = \boldsymbol{b}\big) \cdot n \cdot \gamma_i \ \ =\ \ n \cdot \gamma_i \cdot \left(\sum_{\boldsymbol{b}} P_i\big(\boldsymbol{B} = \boldsymbol{b}\big) \right) \ \ =\ \ n \cdot \gamma_i \ .\label{eq:4.1}
\end{eqnarray}
As a result, $|D_i| / |D| = n_i / n = \gamma_i$ and $n_i(\boldsymbol{b}) / n_i = (P_i(\boldsymbol{B} = \boldsymbol{b}) \cdot n \cdot \gamma_i) / (n \cdot \gamma_i) = P_i(\boldsymbol{b})$. Thus, $D_i \sim P_i(\boldsymbol{X})$ and the downsampling rate $|D| / |D_i|$ achieves minimum as expected.


\subsection{Proof of Theorem~\ref{thm:5}}
\label{app:b6}
{\bf A. Convexity.} We will first prove that the subspace of $P_i(\boldsymbol{B})$ that satisfies $\gamma_i \geq \gamma_\mathrm{o}$ is convex. To see this, consider $P_i^1(\boldsymbol{B})$ and $P_i^2(\boldsymbol{B})$ whose (inverse) downsampling rates (see Theorem~\ref{thm:3}) $\gamma_i^1$ and $\gamma_i^2$ are both larger than $\gamma_\mathrm{o}$. This means both $P_i^1(\boldsymbol{B})$ and $P_i^2(\boldsymbol{B})$ belong to the aforementioned subspace.

Now, let $\alpha \in (0, 1)$ and $P_i^\alpha(\boldsymbol{B}) = \alpha \cdot P_i^1(\boldsymbol{B}) \ +\ (1 - \alpha) \cdot P_i^2(\boldsymbol{B})$. The (inverse) downsampling rate of $P^{\alpha}_i(\boldsymbol{B})$ is defined as 
\begin{eqnarray}
\hspace{-20mm}\gamma_i^\alpha &\triangleq& \min_{\boldsymbol{b}} \left(\frac{P(\boldsymbol{B} = \boldsymbol{b})}{P_i(\boldsymbol{B} = \boldsymbol{b})}\right) \ \ = \ \  \min_{\boldsymbol{b}} \left(\frac{P(\boldsymbol{B} = \boldsymbol{b})}{\alpha \cdot P_i^1(\boldsymbol{B} = \boldsymbol{b}) \ +\ (1 - \alpha) \cdot P_i^2(\boldsymbol{B} = \boldsymbol{b})}\right) \label{eq:b60}\\
&\geq& \min_{\boldsymbol{b}} \left(\frac{P(\boldsymbol{B} = \boldsymbol{b})}{\alpha \cdot \gamma_\mathrm{o}^{-1} \cdot P(\boldsymbol{B} = \boldsymbol{b}) + (1 - \alpha) \cdot \gamma_\mathrm{o}^{-1} \cdot P(\boldsymbol{B} = \boldsymbol{b})}\right) \ \ =\ \ \frac{1}{\gamma_\mathrm{o}^{-1}} \ \ =\ \ \gamma_\mathrm{o} \ , \label{eq:b61}
\end{eqnarray}
where the inequality follows from the facts that (1) $P(\boldsymbol{B} = \boldsymbol{b}) / P_i^1(\boldsymbol{B} = \boldsymbol{b}) \geq \gamma_i^1 \geq \gamma_\mathrm{o}$; and $P(\boldsymbol{B} = \boldsymbol{b}) / P_i^1(\boldsymbol{B} = \boldsymbol{b}) \geq \gamma_i^2 \geq \gamma_\mathrm{o}$ which follows from the (inverse) downsampling rate's definition in Theorem~\ref{thm:3}. This means $P_i^1(\boldsymbol{B} = \boldsymbol{b}) \leq \gamma_\mathrm{o}^{-1} \cdot P(\boldsymbol{B} = \boldsymbol{b})$ and $P_i^2(\boldsymbol{B} = \boldsymbol{b}) \leq \gamma_\mathrm{o}^{-1} \cdot P(\boldsymbol{B} = \boldsymbol{b})$, which can plugged into Eq.~\eqref{eq:b60} to arrive at Eq.~\eqref{eq:b61}. This in turn implies $P_i^\alpha(\boldsymbol{B})$ belongs to the subspace of source priors with the induced (inverse) downsampling rate above $\gamma_\mathrm{o}$. Thus, following the definition of convexity, we know that this subspace is convex.

{\bf B. Convex Hull.} Note that any source prior $P_i(\boldsymbol{B})$ is a point in an $r$-dimensional simplex where $r$ is the number of candidate values of the source variables $\boldsymbol{B}$. The convex hull of the aforementioned convex set can then be determined by finding its intersection with the edges of the simplex. This comprises $r$ points of the following form: 
\begin{eqnarray}
P^{(k)}(\boldsymbol{B}) &=& \alpha_k \cdot P(\boldsymbol{B}) \quad+\quad (1 - \alpha_k) \cdot \delta_k(\boldsymbol{B}) \ ,\label{eq:b62}    
\end{eqnarray}
where $\alpha_k$ is selected such that $\gamma_i^{(k)} \triangleq \min_{\boldsymbol{b}} (P(\boldsymbol{B} = \boldsymbol{b}) / P^{(k)}(\boldsymbol{B} = \boldsymbol{b})) \ \ =\ \  \gamma_\mathrm{o}$. Solving for $\alpha_k$ results in the following equation:
\begin{eqnarray}
\hspace{-12mm}\gamma_\mathrm{o} \hspace{-2mm}&=&\hspace{-2mm} \min \left(\frac{1}{\alpha_k}, \frac{P\Big(\boldsymbol{B} = \boldsymbol{b}^{(k)}\Big)}{\alpha_k \cdot P\Big(\boldsymbol{B} = \boldsymbol{b}^{(k)}\Big) + \Big(1 - \alpha_k\Big)}\right) = \frac{P\Big(\boldsymbol{B} = \boldsymbol{b}^{(k)}\Big)}{\alpha_k \cdot P\Big(\boldsymbol{B} = \boldsymbol{b}^{(k)}\Big) + \Big(1 - \alpha_k\Big)} \ , \label{eq:b63} 
\end{eqnarray}
where $\boldsymbol{b}^{(k)}$ denote the $k$-th candidate value of $\boldsymbol{B}$. The second equality in the above holds since $\alpha_k \in (0,1)$ which means $\alpha_k \cdot P(\boldsymbol{B} = \boldsymbol{b}) \leq \alpha_k \cdot P(\boldsymbol{B} =\boldsymbol{b}) + 1 - \alpha_k$ or equivalently, $1/\alpha_k \geq P(\boldsymbol{B}= \boldsymbol{b}) / (\alpha_k \cdot P(\boldsymbol{B} = \boldsymbol{b}) + 1 - \alpha_k)$. This allows us to get rid of the $\min$ operator in Eq.~\eqref{eq:b63} and consequently, compute $\alpha_k$ in closed-form:
\begin{eqnarray}
\alpha_k &=& \left(1 - \gamma_\mathrm{o}^{-1} \cdot P\Big(\boldsymbol{B} = \boldsymbol{b}^{(k)}\Big)\right) / \left(1 - P\Big(\boldsymbol{B} = \boldsymbol{b}^{(k)}\Big)\right) \ .  \label{eq:b64}
\end{eqnarray}
This completes our derivation for the convex hull in Theorem~\ref{thm:5}.\qed

\subsection{Proof of Theorem~\ref{theo:plausible_parent}}
\label{supp:proof-plausible-parent}
Suppose $U, V \in \mathrm{Pa}[X]$, it must follow that $U \in \mathbb{M}(V)$ and $V \in \mathbb{M}(U)$ since $U$ and $V$ are spouses. This means $(U, V) \in \boldsymbol{E}$. Hence, there is a bidirectional edge in $G'(X)$ between any two parents of $X$ which means there exists at least a clique in $G'(X)$ containing $\mathrm{Pa}[X]$.\qed


%% file: supp-pseudo.tex
\subsection{Pseudo-code for \textbf{GLIDE}}
\begin{algorithm}[h]
\caption{\ourmethod}
\label{algo:glide}
\textbf{Input:} Data $D$, the set of all variables $\boldsymbol{X}$, down-sampling ratio threshold $\gamma$, number of down-sampling sub-datasets $m$.\\
\textbf{Output:} DAG $G$.
\begin{algorithmic}[1]
\STATE Find the basis set $\boldsymbol{B}$ using Algorithm~\ref{algo:find-basis}.
\STATE Compute $m$ priors ${P_i (\boldsymbol{B})}$ by Theorem~\ref{thm:5}.
\STATE Sample $m$ sub-datasets $\{D_i \sim P_i (\boldsymbol{B})\}$ from $D$ (Section~\ref{sec:re-sampling}).
\STATE Use the GSMB algorithm to recover the Markov blanket for each node $\mathbb{M}(X), \forall X \in \boldsymbol{X}$.
\STATE Find the plausible parent set $\mathcal{Z}$ for each variable $X$ using Algorithm~\ref{algo:build-tree}.
\STATE Find causal parent $\mathrm{Pa}[X]$ for each $X\in \boldsymbol{X}$ by Theorem~\ref{premise} using $\{D_i\}_{i=1}^m$, its plausible parent set $\mathcal{Z}$.
\STATE Build $G$ using $\left(X, \mathrm{Pa}[X]\right)$ for each $X\in \boldsymbol{X}$.
\STATE \textbf{Return} $G$.
\end{algorithmic}
\end{algorithm}

%% file: supp-find-basis.tex
\subsection{Find the Basis of a graph}
\label{app:a1}
This section provides visualization and pseudocode of the basis finding procedure in Theorem~\ref{thm:2a}. 

{\bf A. Pseudocode.} First, the pseudocode of the basis finding algorithm is detailed below.

\begin{algorithm}[t]
\caption{Maximum-Sized Basis Search}
\label{algo:find-basis}
\textbf{Input:} Data $D$ and the set of all variable indices $\boldsymbol{V}$.\\
\textbf{Output:} Maximum-Sized Basis $\boldsymbol{B}$.
\begin{algorithmic}[1]
\STATE Compute $[\boldsymbol{\Phi}]_{ij} = \mathbb{I}(X_i \nCI X_j)$ using existing statistical independence test.
\WHILE{$|\boldsymbol{V}| >$ 0}
\STATE Choose $t = \arg\min_{i \in \boldsymbol{V}} \sum_{j} [\boldsymbol{\Phi}]_{ij}$
\STATE Update $\boldsymbol{B} = \boldsymbol{B} \cup \{X_t\}$
\STATE Update $\boldsymbol{V} \gets \boldsymbol{V}\setminus \big\{i: [\boldsymbol{\Phi}]_{ti} = 1\big\}$
\ENDWHILE
\STATE \textbf{Return} $\boldsymbol{B}$
\end{algorithmic}
\end{algorithm}

\label{supp:basis}
\begin{figure}[t]
\centering
\includegraphics[width=0.8\linewidth]{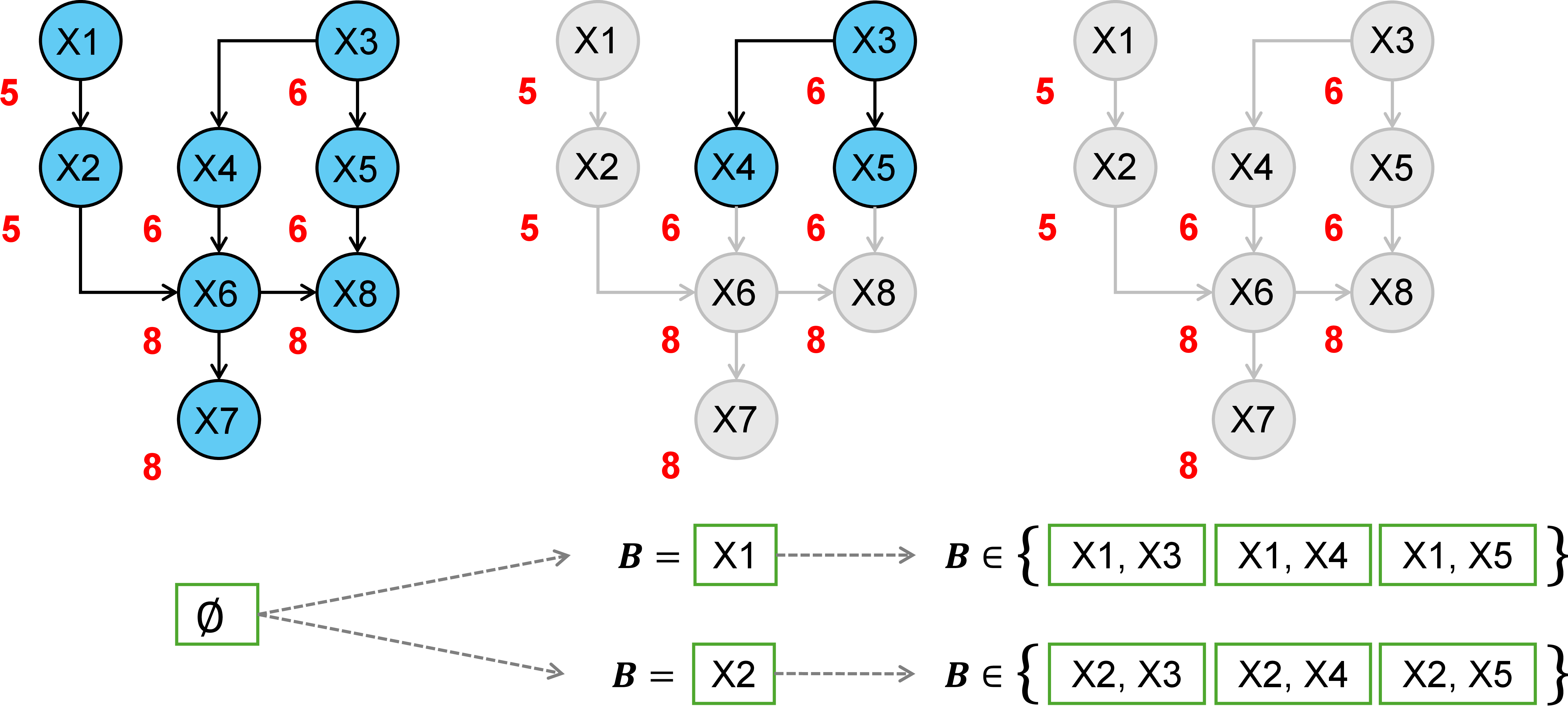}
\caption{A step-by-step visualization of Algorithm~\ref{algo:find-basis}. From left to right: each step finds the node with minimum number of dependents and remove it along with its dependent set from the graph. Each node is accompanied by a number in red that indicates the number of variables being dependent on that node. The removed nodes are colored gray WHILE active nodes are blue. The example graph is taken from the ASIA dataset \citep{scutari2009learning}. }\vspace{-2mm}
\label{fig:find-basis-demo}
\end{figure}

This algorithm is guaranteed to find the maximum-sized of basis variables since each of its iteration will remove exactly one source node from $\boldsymbol{V}$ as proved in Appendix~\ref{supp:proof-thm:2a}. Thus, the number of nodes added to the basis is also the number of source nodes. Such basis is guaranteed to has maximum size following Theorem~\ref{thm:2}. A visualization of this algorithm is provided in Figure~\ref{fig:find-basis-demo}.

{\bf B. Time Complexity.} Algorithm~\ref{algo:find-basis} consists of (i) computing the dependence network matrix $\boldsymbol{\Phi}$, and (ii) selecting the node with minimum number of dependents from $\boldsymbol{V}$ per iteration. The complexity for step (i) is $O(d^2)$ WHILE the complexity for step (ii) is $O(d)$ per iteration. As there are at most $d$ iterations, the total cost of Algorithm~\ref{algo:find-basis} is therefore $O(d^2)$.


%% file: supp-plausible-parents.tex
\subsection{Find Plausible Parent Sets}
\label{supp:plausible_parents}
\begin{algorithm}[h]
  \caption{Finding Plausible Parent Sets}
  \label{algo:build-tree}
    \textbf{Inputs:} root node $r_o$ containing the virtual variable $X_o$, Markov blanket $\mathbb{M}(X)$ of all variable $X$.\\
    \textbf{Outputs:} a plausible-parent-set tree, the path from each intermediate search node of variable $X$ to each leaf node represents a plausible candidate for $\mathrm{Pa}[X]$
  \begin{algorithmic}[1]
    \STATE Initialize an empty list of leaves $\boldsymbol{L} \gets \emptyset$;
    \STATE $\mathrm{search}(r_o) \leftarrow$ the set of all variables $\boldsymbol{X}$; 
    \STATE \textbf{Procedure} {Recursive-build}({$r$})
      \FOR{$\ell$ in $\boldsymbol{L}$}
        \IF{$\mathrm{path}(r_o \rightarrow r)$ $\cup$ $\mathrm{search}(r)$ $\in$ $\mathrm{path}(r_o \rightarrow \ell)$}
            \STATE \textbf{Return}
        \ENDIF
      \ENDFOR
      \STATE $\boldsymbol{S} \gets$ sort $X_i \in \mathrm{search}(r)$ in the decreasing order of $|\mathrm{search}(r)\cap \mathbb{M}(X_i)|$
      \STATE $\boldsymbol{V} \gets \emptyset\quad\quad$ \#initialize the set of visited nodes
      \IF{$|\boldsymbol{S}| > 0$}
        \WHILE{$|\boldsymbol{S}| > 0$}
            \STATE $X \gets \boldsymbol{S}$.pop()
            \STATE $q \gets \mathrm{node}(X)$ 
            \STATE $\mathrm{search}(q) \gets (\mathbb{M}(X) \cap  \mathrm{search}(r)) \setminus \boldsymbol{V}$
            \STATE $\mathrm{path}(q) \gets \mathrm{path}(r)) \cup \{X\}$
            \STATE Recursive-Build($q$)
            \STATE $\boldsymbol{V} \gets \boldsymbol{V} \cup \{X\}$
        \ENDWHILE
      \ELSE
        \STATE $\boldsymbol{L} \gets \boldsymbol{L} \cup \{r\}$
      \ENDIF
    \STATE \textbf{End Procedure}
  \end{algorithmic}
\end{algorithm}

Theorem~\ref{theo:plausible_parent} implies that if two nodes in the Markov blanket of $X$ do not belong to each other's Markov blanket, then they can not both be parents of $X$. According to this logic, we wish to find all sets of nodes whose Markov blanket contains one another. To do this, we develop a tree-based recursive algorithm that works as follows. Given the Markov blanket of all nodes, we perform Algorithm~\ref{algo:build-tree}, which is a modification of the Bron-Kerbosch algorithm~\citep{bron1973algorithm}. Instead of running Bron-Kerbosch for each node $X$ and its Markov blanket $\mathbb{M}(X)$, we add a virtual node that directly connects to all other nodes in the graph and start building the set of plausible parent sets for all nodes recursively from this virtual node.

Note that Algorithm~\ref{algo:build-tree} does not create a sub-tree (branch) if the search space of the root of that sub-tree is guaranteed to re-produce the content of a previously visited leaf and generate no new information -- see lines $4$-$7$. Otherwise, a plausible parent set of $X$ is the set of variables in the path from the search node containing $X$ to a leaf node, which is computed following the recursion of Algorithm~\ref{algo:build-tree}. A step-by-step visualization of this algorithm is illustrated in Figure~\ref{fig:build-tree}.

\begin{figure}[H]
    \centering
    \subfigure[Markov blanket]{\includegraphics[width=0.22\linewidth]{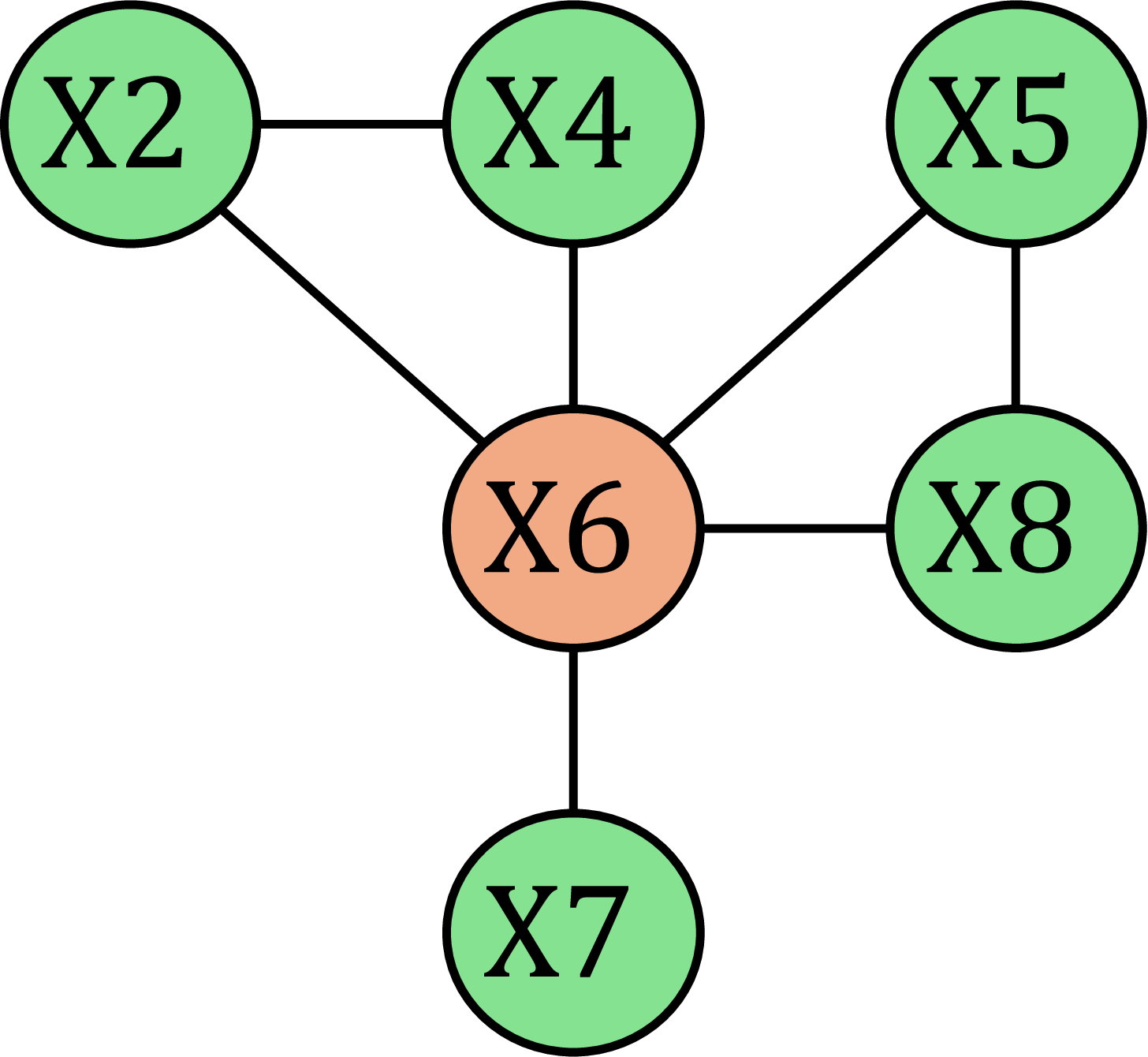}}
    \hfill
    \subfigure[Plausible parents tree of $X_6$]{\includegraphics[width=0.3\linewidth]{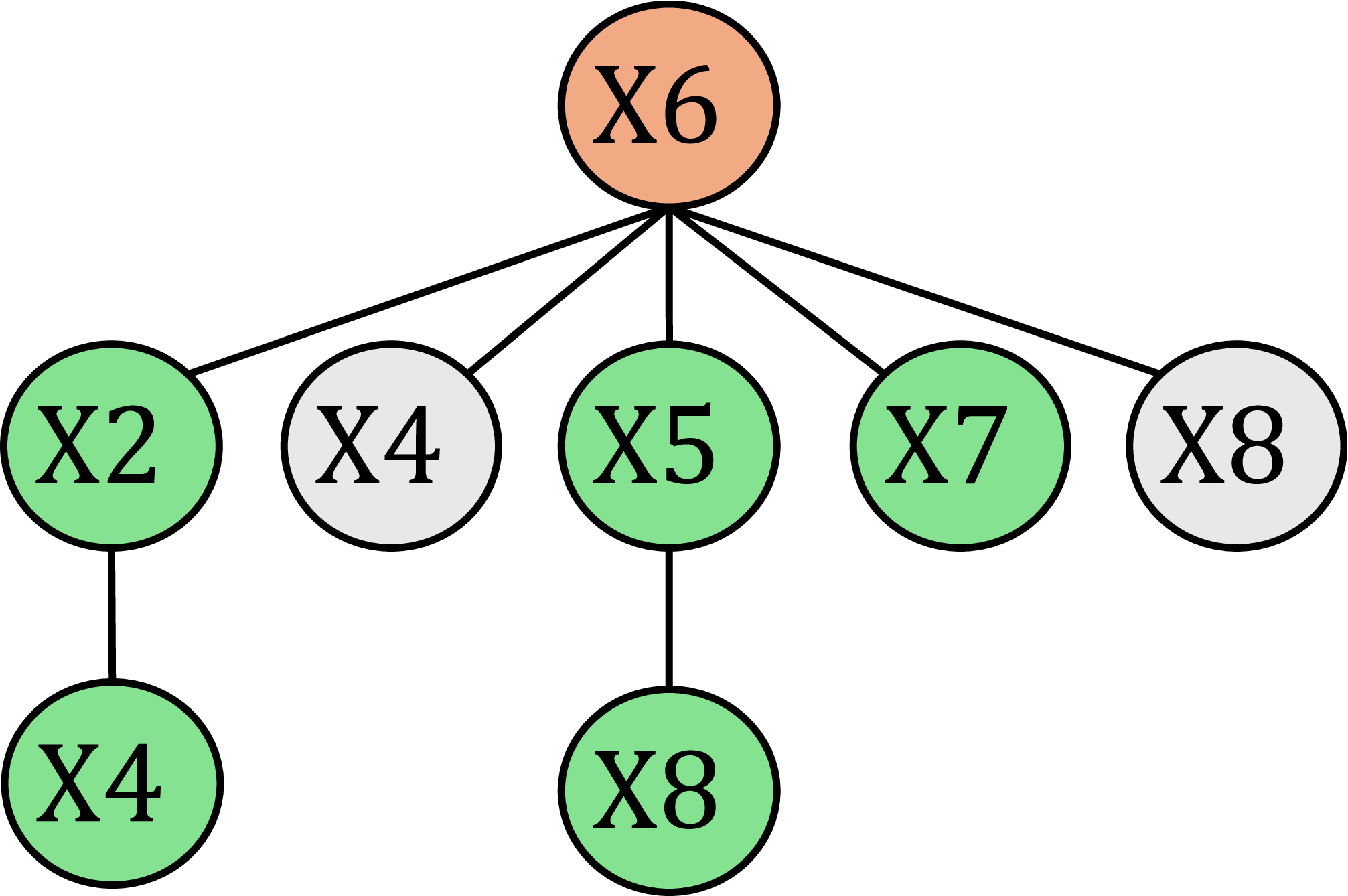}}
    \hfill
    \subfigure[The use of the imaginary $X_0$]{\includegraphics[width=0.37\linewidth]{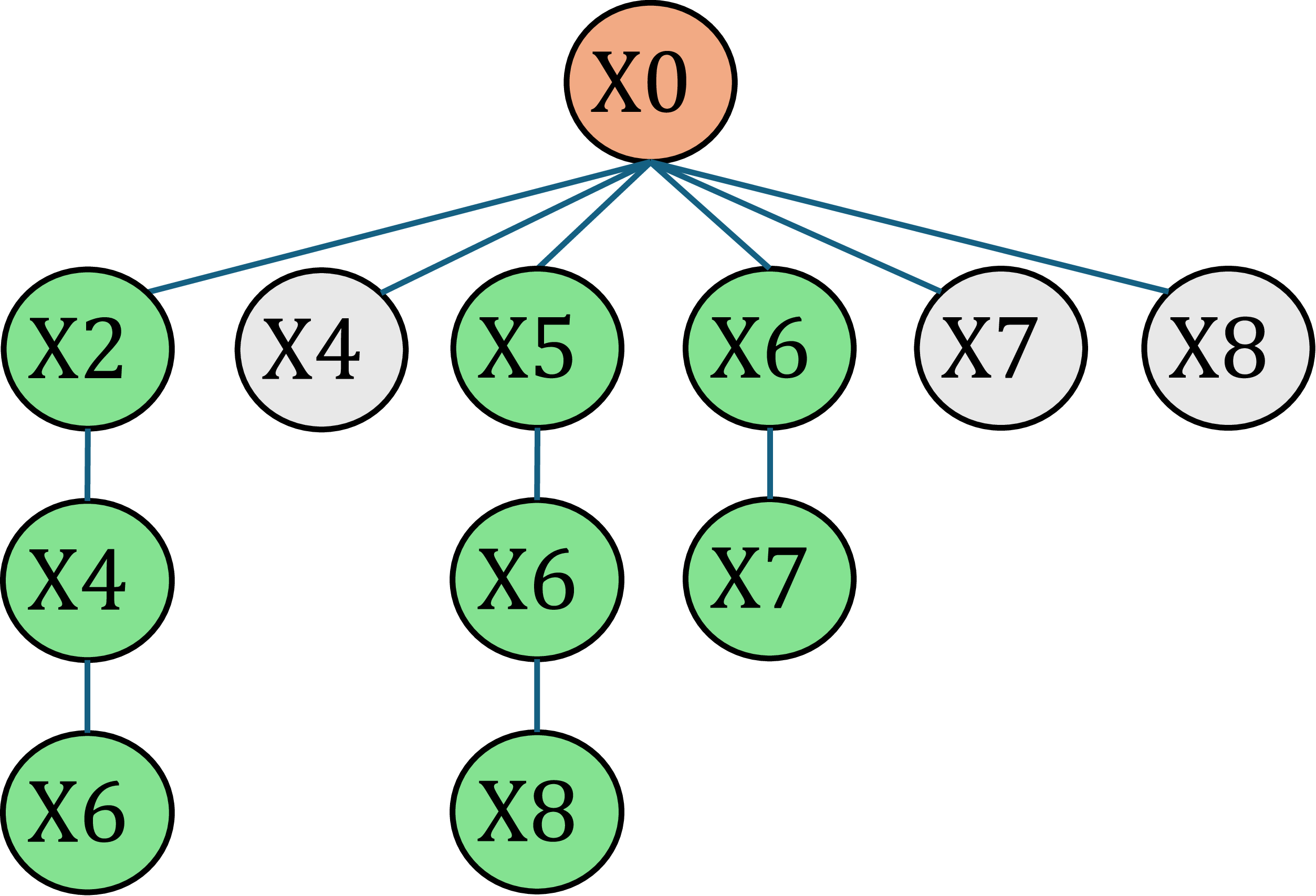}}
    \caption{Step-by-step visualization of Algorithm~\ref{algo:build-tree} showing (a) a Markov blanket of a search node, (b) a plausible parent tree resulting from a search at a particular node, and (c) a search tree starting from a virtual variable which are connected to all other (real) variables. The grey-colored nodes correspond to branches which are terminated following lines $4$-$7$ in Algorithm~\ref{algo:build-tree}. For example, in (b) we see that continuing to explore the branch $X_6 - X_4$ will re-produce the content which is already produced in a previous $X_6 - X_2 - X_4$ branch. Hence, it was terminated.}
    \label{fig:build-tree}
\end{figure}

%% file: supp-setup.tex
\subsection{Experiment settings}
\label{supp:exp_settings}
\input{tab/baseline}

\textbf{Baselines \& Hyper-parameters for baselines.}
In this section, we present the experimental settings for each baseline used in the empirical report in the Section V in the manuscript and Appendix sections~\ref{sec:synthetic-categorical}, \ref{sec:real-world}. 
Firstly, the code for PC is from the Bnlearn package\citep{scutari2009learning}, provided in Python. We reported results of the PC-stable, which is an upgraded version of PC. On the other hand, the implementation of GIES and FCI are provided from the cdt package \citep{kalainathan2020causal} and the causal-learn package \citep{zheng2024causal}, respectively. These algorithms require no tuned parameters as input and therefore can be used as recommended from the package from which they are provided. Notears and MLP-Notears are open-source on Github, and are provided by \cite{zheng2018dags}. For our experiments, we reuse the parameters recommendation in the original papers: Notears ($\lambda_1 = 0.1, w_{th} = 0.3$) and MLP-Notears ($\lambda_1 = \lambda_2 = 0.01, w_{th} = 0.3$). It is worth noting that we do not modify the original architecture of the network used in MLP-Notears. Lastly, DAS and SCORE are also open-source and are available from \citep{montagna2023scalable}. Both of these baselines share the same operational parameters as follows: $\eta_G = \eta_H = 0.001, K=10, \mathrm{pns} = 10$, $\mathrm{threshold}=0.05$ and $\mathrm{cam\-cutoff}=0.001$. 

\textbf{Hyper-parameters for our proposed method GLIDE.} Our proposal has the following hyper-parameters:
(1) The number of prior distributions $m$ used for the invariance test; and (2) $\gamma$ factor which controls the ratio of the data generated from weak interventions.
Both of these parameters are concerned with the data sampling procedure. Other than these two parameters, there are two more thresholds which control the level of tolerable variance we consider as invariant and the confidence interval for the Conditional Independence Tests (CITs). These thresholds are set at $10^{-3}$ and $0.05$, respectively, across all experiments reported in this study. We recommend tuning the former threshold depends on the nature of the data whereas the latter should remain unchanged.

%% file: tab/baseline.tex
\begin{table*}[t]
\centering
\caption{{\bf Baselines and Data Models.} The table indicates the applicability of each baseline to each data models. Applicability is determined based on the baseline's empirical effectiveness on recovering the causal graph from observational data under the corresponding data model. Note that except for our versatile method, most other baselines are not applicable to all data models (i.e., producing very poor performance that is not meaningful for comparison).}
\vspace{2mm}
\label{tab:baselines}
    \resizebox{\linewidth}{!}{
\begin{tabular}{|l|lllllll|l|}
\toprule
\multicolumn{1}{|c|}{\textbf{Data Models}} &
  PC &
  GIES &
  FCI &
  Notears &
  MLP-Notears&
  DAS &
  SCORE &
  \ourmethod (Ours) \\\midrule
\begin{tabular}[c]{@{}l@{}}Linear, Gaussian (\textbf{L-G})\end{tabular} &
   &
  \checkmark &
  \checkmark &
  \checkmark &
  \checkmark &
  \checkmark &
  \checkmark &
  \checkmark \\
  \midrule
\begin{tabular}[c]{@{}l@{}}Non-linear, non-Gaussian (\textbf{nL-nG})\end{tabular} &
   &
  \checkmark &
  \checkmark &
  \checkmark &
  \checkmark &
  \checkmark &
  \checkmark &
  \checkmark \\
  \midrule
\begin{tabular}[c]{@{}l@{}}Categorical, Synthetic\end{tabular} &
  \checkmark &
  \checkmark &
   &
   &
   &
   &
   &
  \checkmark \\
  \midrule
\begin{tabular}[c]{@{}l@{}}Categorical, Real-world\end{tabular} &
  \checkmark &
  \checkmark &
   &
   &
   &
   &
   &
  \checkmark\\
  \bottomrule
\end{tabular}
}
\vspace{-4mm}
\end{table*}

%% file: supp-ablation.tex
\subsection{Main result: Synthetic categorical data}
\label{sec:synthetic-categorical}

\input{tab/realworld-data}
\begin{figure*}[t]
\centering
\begin{minipage}{\linewidth}
\centering
\includegraphics[width=0.3\linewidth]{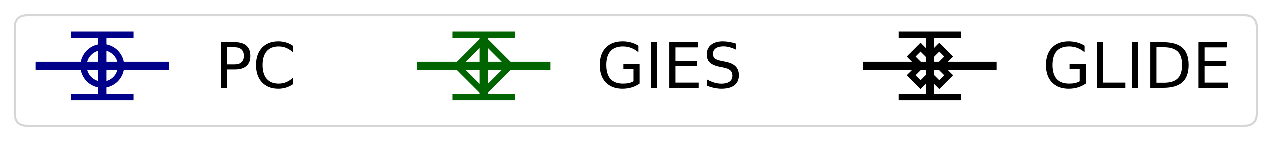}
\end{minipage}
\includegraphics[width=0.16\linewidth]{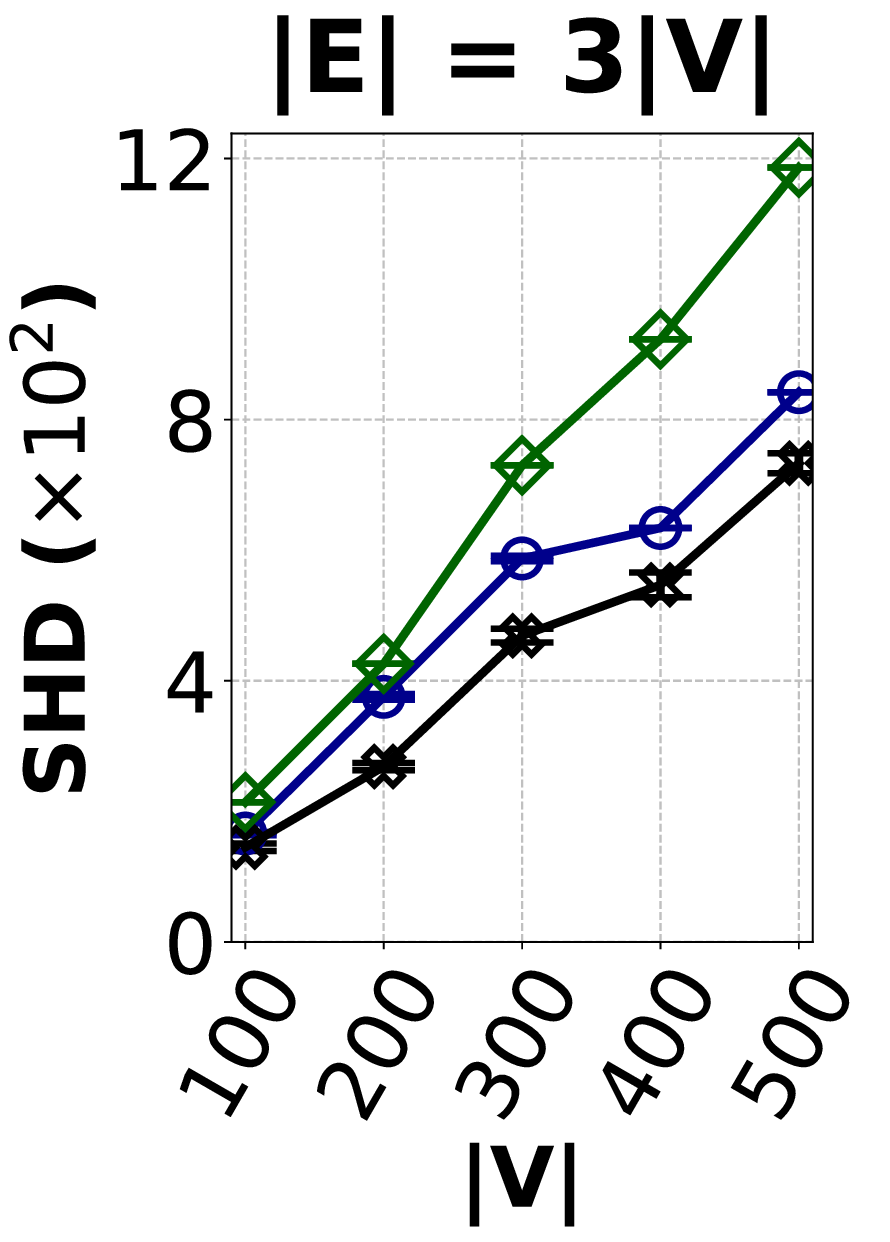}
\includegraphics[width=0.16\linewidth]{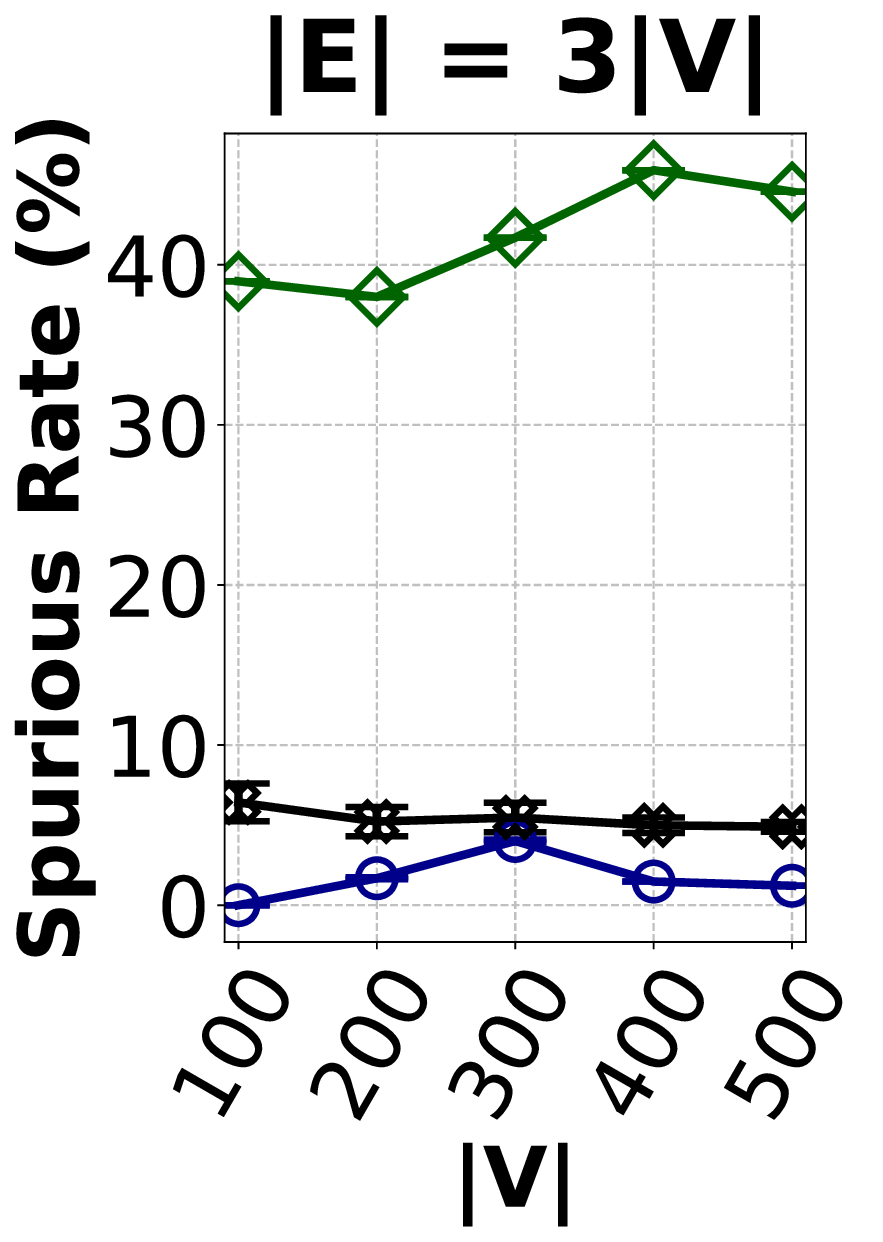}
\includegraphics[width=0.15\linewidth]{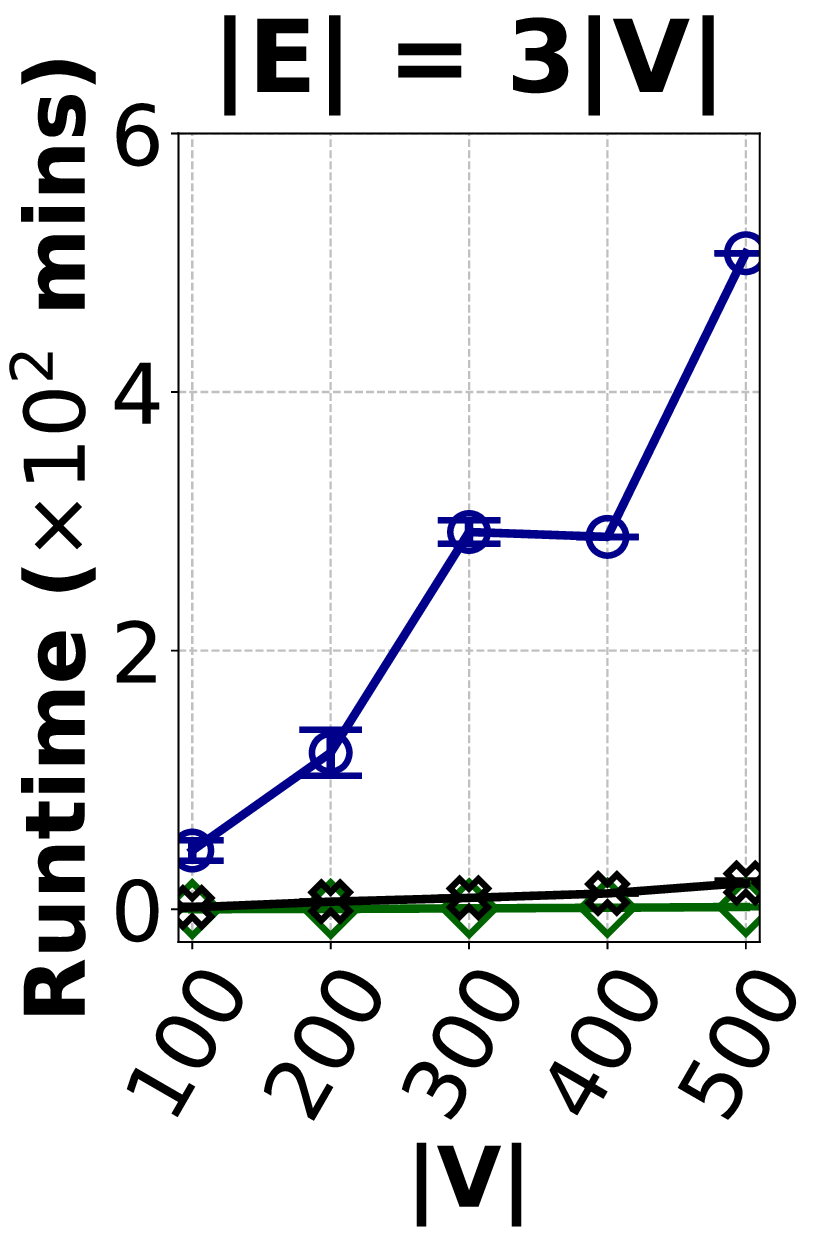}
\hfill
\includegraphics[width=0.16\linewidth]{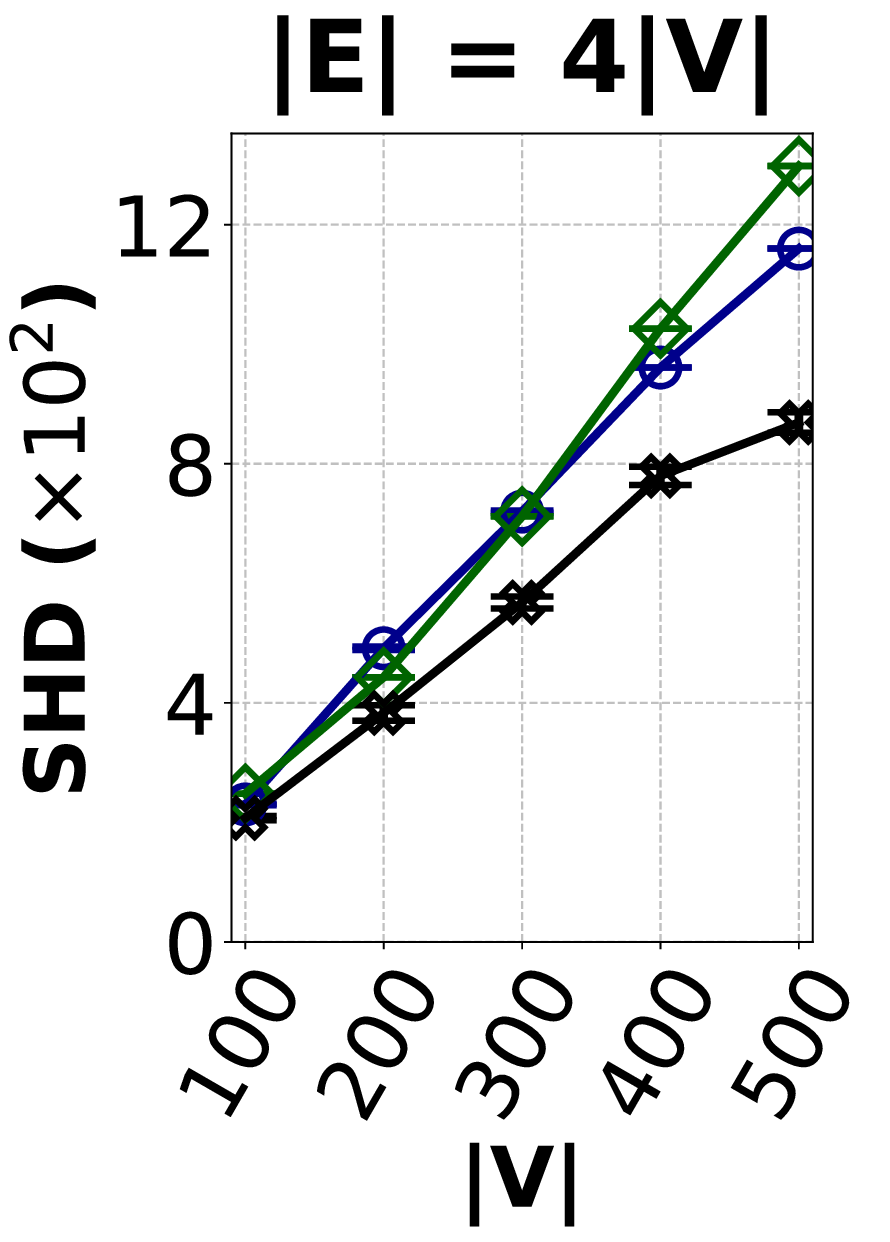}
\includegraphics[width=0.16\linewidth]{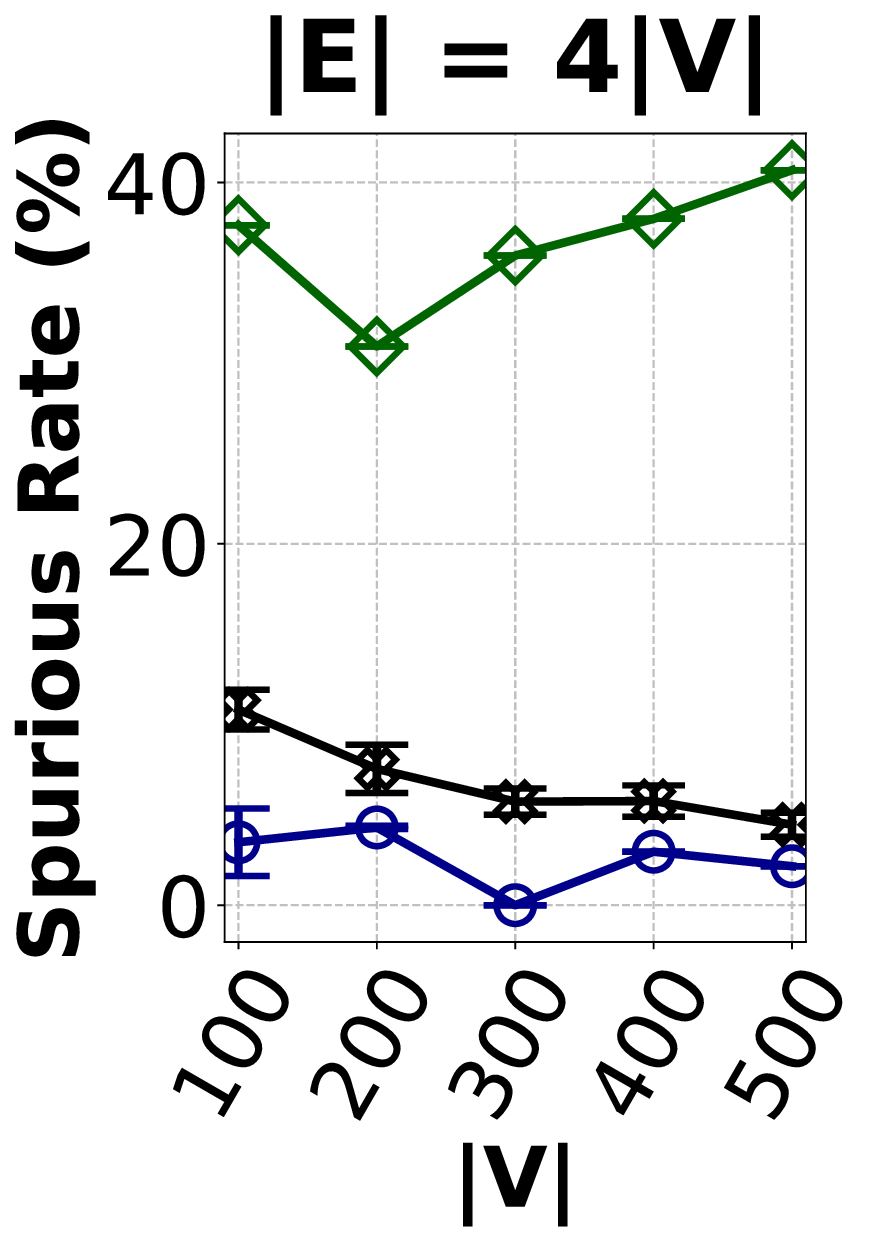}
\includegraphics[width=0.15\linewidth]{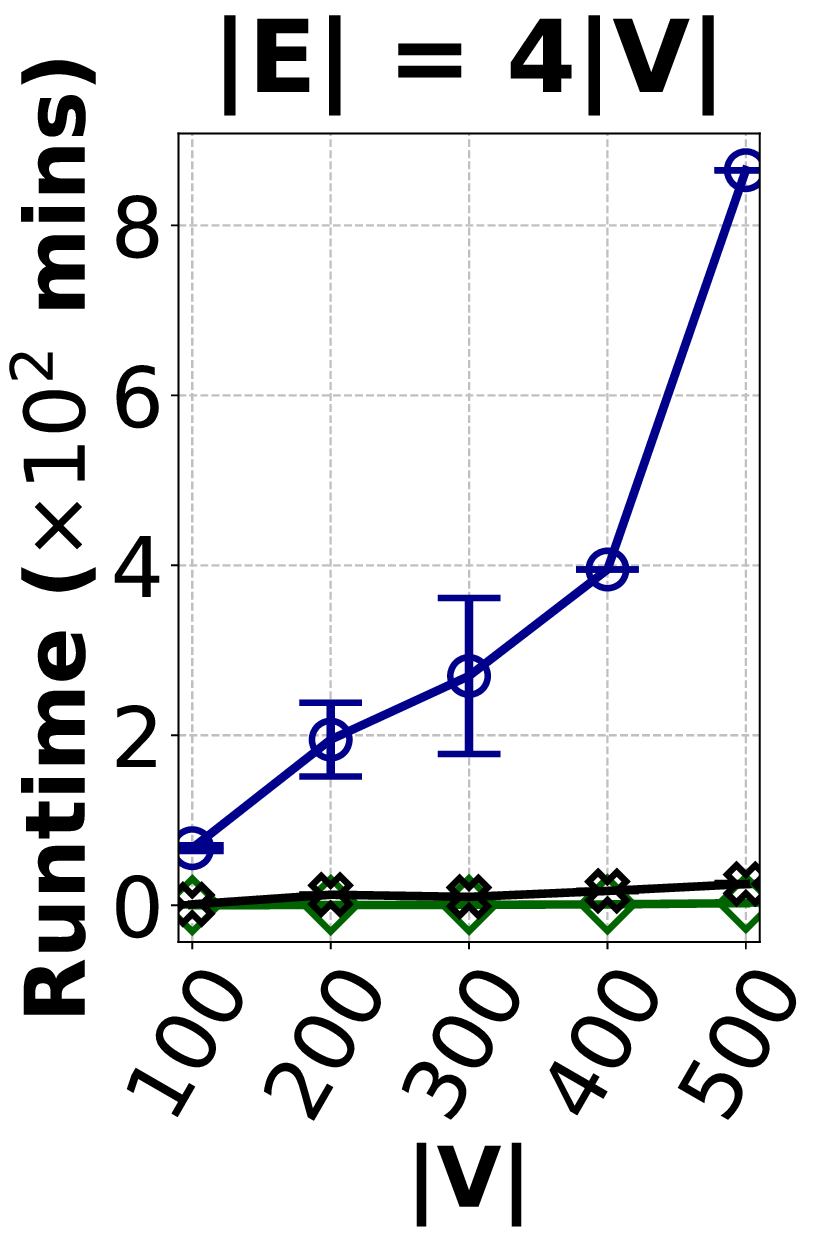}\vspace{-3mm}
\caption{Reported performance of the baselines across two cases: (1) $\mathbf{E} = 3\mathbf{V}$ and (2) $\mathbf{E} = 4\mathbf{V}$\vspace{-5mm}}
\label{fig:fig:categorical-data}
\end{figure*}

Categorical data are generated in the same manner as continuous data in Section~\ref{sec:synthetic-continuous}. The only difference is that instead of using Gaussian models, we first randomize conditional probabilities for each node and then use Gibbs sampling to simulate the data. The number of categories are randomly chosen from $2$ to $5$ for each variable. We compare our proposal with two competitive baselines, GIES and PC, while excluding the rest since they fail to produce meaningful results for comparison (i.e., very poor performance). To focus on settings where the PC baseline can produce results within the time limit of $48$ hours, we restrict our evaluation to two classes of graphs where the number of edges is (i) $3\times$, and (ii) $4\times$ the number of nodes, which ranges between $100$ and $500$.

The results are reported in Figure~\ref{fig:fig:categorical-data} which shows that \ourmethod~consistently outperforms the baselines in terms of SHD with substantial gaps of $10.49\%$ and $27.95\%$ in the ($\mathbf{E} = 3\mathbf{V}$) case; and $11.56\%$ and $13.3\%$ in the ($\mathbf{E} = 4\mathbf{V}$) case. In both cases, \ourmethod~is the second best in terms of spurious rate, increasing the false causal detection (spurious) rate of the best baseline (PC) by a mere margin of $4\%$. In exchange, \ourmethod~achieves a $30\times$ faster processing time than PC. In contrast, GIES achieves the fastest running time but incurs $40\%$ spurious rate (i.e., low reliability). Overall, \ourmethod~has the best trade-off between scalability (processing time) and performance (SHD, spurious rate).

\subsection{Main result: Real-world graphs}
\label{sec:real-world}

Table~\ref{tab:realworld} reported the performance of PC, GIES, and \ourmethod~on $7$ real-world graphs provided by the Bnlearn open library \cite{scutari2009learning}. Note that GIES is a deterministic method and has zero deviation across different seeds on the same test case. It is observed that \ourmethod~achieves the best SHD performance across $7/7$ datasets and also achieves the best spurious rate in $5/7$ datasets, especially on large datasets such as Barley, Pathfinder, and Munin. On the largest dataset (Munin), \ourmethod~achieves a spurious rate of $1.8\%$ which is remarkably better than GIES' ($42.36\%$). Again, \ourmethod~has the best balance between performance and scalability.

\subsection{Ablation studies}
\label{subsec:ablation_studies}
\label{supp:ablation}

\input{tab/ablation-K-gamma}
\subsubsection{Studies on the influence of $m$ and $\gamma$}
\label{ablation:k-and-gamma}
The number of prior distributions $m$ corresponds to the number of environments/sub-datasets that our proposal generates. The higher this figure, the better the performance of invariance test. However, this comes with a trade-off on time consumption: the invariance test loops through all generated sub-datasets; therefore, it consumes increasing time linearly with the increase of $K$. On the other hand, $\gamma$ factor virtually controls the error in estimations produced by sub-datasets because $\gamma$ bounds the volume of the downsampled datasets. 

In this study, we conduct experiments on the $100$-variable Erdos-Renyi categorical data graph and continuous data graph. In each data model, we examine different values of $\gamma \in \{0.2, 0.4, 0.6, 0.8\}$, along with increasing values of $m \in \{10, 20, 30, 40, 50\}$. The input data volume is $10000$ and we examine $10$ runs to report mean values and $95\%$ confidence intervals as in Table\ref{tab:ablation-params}. The results show a significant decrease (averagely, over $18\%$) in SHD when we increase the number of environments $m$ from $10$ to $50$. However, such performance comes at the expense of Runtime. As we discussed, the time complexity is linear with $m$, thus, we see that the runtime is about $5$ times higher when $m$ is $5$ times higher. As for $\gamma$. The increase of $\gamma$ does not guarantee better performance, as can be seen in settings $m=10$, $m=30$, and $m=50$ where the best $\gamma$ is neither the smallest nor the biggest. 
Despite the effect of $\gamma$ is secondary to that of $m$, a good $\gamma$ can boost the performance roughly $6-8\%$ in term of SHD without incurring additional time complexity.

\subsubsection{Studies on different topologies}
\label{ablation:topology}
In this section, we test the performance of baselines and our proposal on Scale-free graphs and Bipartite graphs \citep{zheng2018dags}. The code used to generate the datasets for experiments is introduced by \cite{zheng2018dags}. We also evaluate our proposal and baselines in normal and extreme cases - the former has the number nodes increasing from $100$ to $500$ and the number of edges equals the number of nodes, the latter has the number of nodes fixed at $500$ while the number of edges increases from $600$ to $1000$. In each case, we also generate linear Gaussian data and non-linear non-Gaussian data scenarios. The results are depicted in Figures~\ref{fig:bipartite} and~\ref{fig:notears-SF} for Bipartite graphs and Scale-free graphs, respectively. 

\begin{figure*}[t]
    \centering
    \includegraphics[width=0.8\linewidth]{figs/legend.eps}
    \subfigure[\textbf{Evaluation on linear Gaussian (upper) and non-linear non-Gaussian (lower) data models.} The number of edges equals the number of nodes.\label{fig:notears-BP}]{\includegraphics[width=0.78\linewidth]{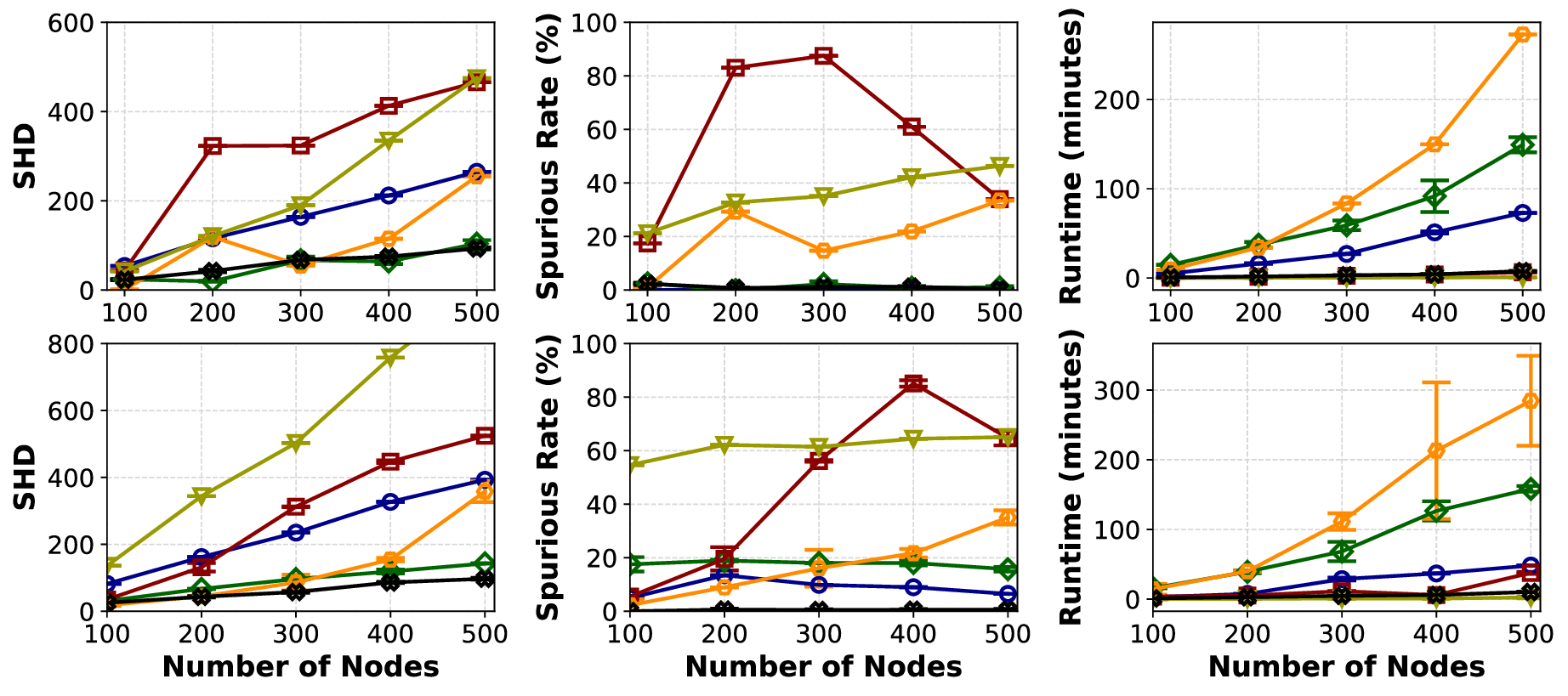}}
    \subfigure[\textbf{Evaluation on linear Gaussian (upper) and non-linear non-Gaussian (lower) data models at extremes.} The number of nodes is fixed at $500$. Best baselines are selected to compare with our proposal.\label{fig:notears-extreme-BP}]{\includegraphics[width=0.78\linewidth]{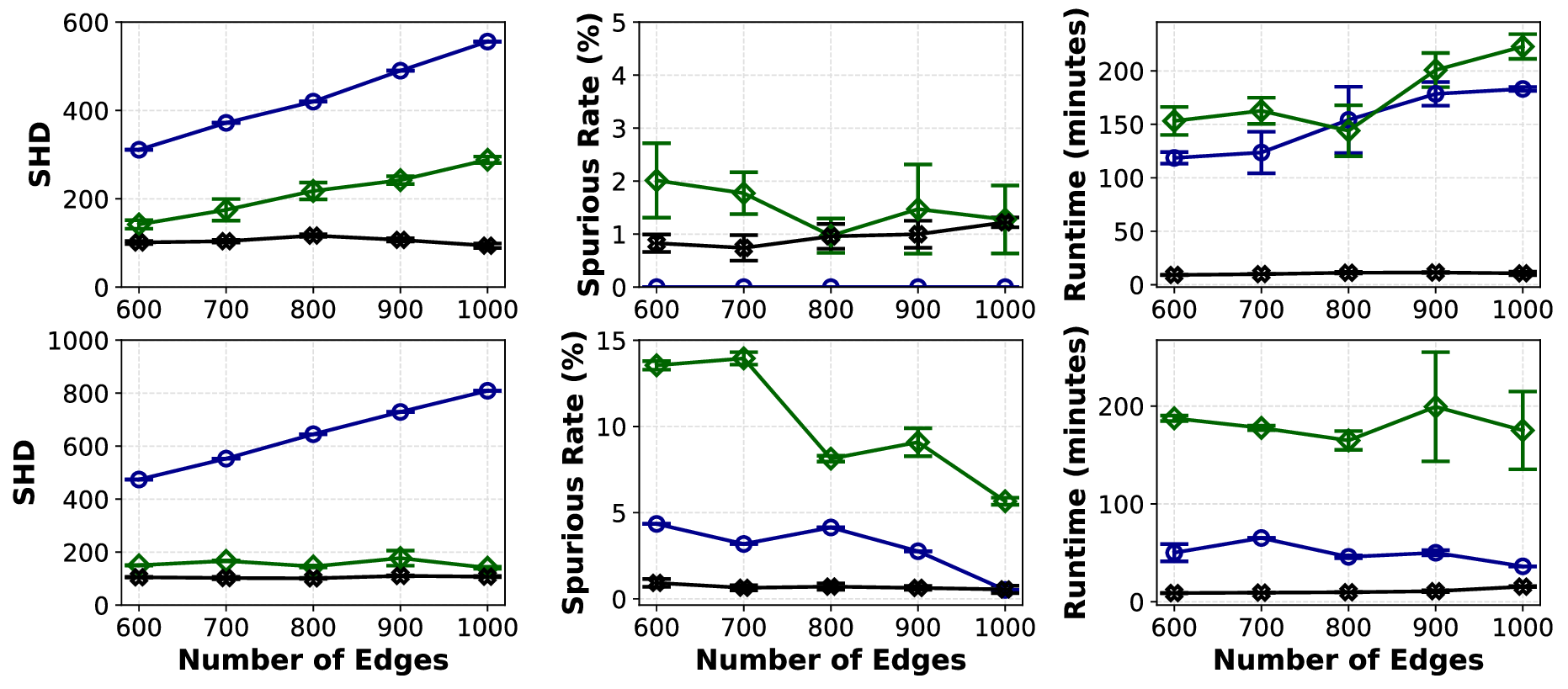}}
    \vspace{-0.2cm}
    \caption{\textbf{Performance on continuous data models in normal and extreme cases on Bipartite graphs.} Apply for all metrics: Lower is better.}
    \label{fig:bipartite}
\end{figure*}

\textbf{Bipartite graphs.}
Regarding Bipartite graphs, our proposal shows superior performance in almost all cases and data generation scenarios. As can be seen in Figure~\ref{fig:notears-BP}-upper, our proposal achieves relatively similar to the strongest baseline (MLP-Notears) in terms of SHD and spurious rate, but costs over an order of magnitude smaller in runtime, about $15.68$ times. 

The performance gap is noticeable when it comes to the non-linear non-Gaussian data scenario -- Figure~\ref{fig:notears-BP}-lower. Our proposal now has a clear advantage in terms of SHD (roughly $10\%$ performance gap) and spurious rate ($17.21$\% performance gap) against MLP-Notears, while the runtime difference is as significant as the previous scenario. It is worth noticing that, none of the baselines can achieve both fast runtime and relatively high performance. For example, GIES and DAS have the same runtime as our proposal, but their performance in terms of SHD and spurious rate are significantly worse (at an aaverage of $61.55\%$ and $46.22\%$, respectively).

As for the extreme cases in Bipartite graphs, we isolate best baselines, Notears and MLP-Notears, to compare with our proposal. The overall result in Figure~\ref{fig:notears-extreme-BP} is that our proposal outperforms both baselines in both scenarios and in all metrics, albeit with an exception. Figure~\ref{fig:notears-extreme-BP}-upper shows the result in linear Gaussian data scenario. We can see that our proposal has a noticeable gap to the other 2 baselines in both SHD ($12.81\%$ and $27.18\%$ lower than MLP-Notears and Notears, respectively) and runtime ($16.62$ and $14.18$ times less than MLP-Notears and Notears, respectively). However, Notears marginally outperforms our proposal in term of spurious rate, about $1\%$. 

Figure~\ref{fig:notears-extreme-BP}-lower shows that our proposal returns relatively stable performance even in the non-linear non-Gaussian data scenario with an average of $0.69\%$ spurious rate. Furthermore, our proposal consistently outperforms both Notears and MLP-Notears in all cases in term of SHD. In contrast, Notears and MLP-Notears seem to be heavily affected by non-linearity in term of spurious rate. MLP-Notears has an average of $10.06\%$ spurious rate -- about $5$ times higher than linear Gaussian data scenario. Notears also suffers an average $3\%$ higher spurious rate. Interestingly, both baselines seem to benefit from the increasing number of edges in the graph as the results show an downward trend in spurious rate. However, it is worth noticing that Notears incurs a linearly increasing in SHD as the number of edges increases.

\begin{figure*}[t]
    \centering
    \includegraphics[width=0.8\linewidth]{figs/legend.eps}
    {\includegraphics[width=0.78\linewidth]{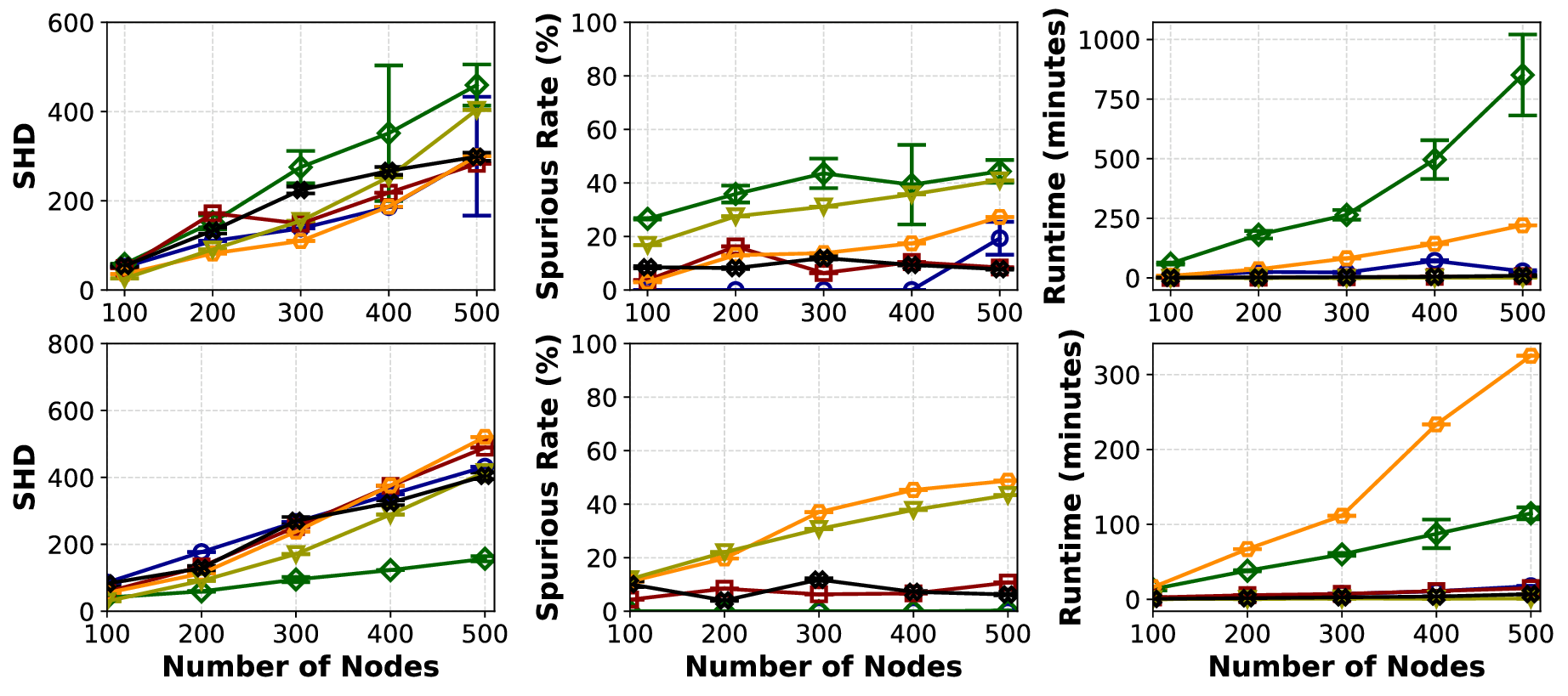}}
    \vspace{-0.2cm}
    \caption{\textbf{Evaluation on linear Gaussian (upper) and non-linear non-Gaussian (lower) data models on Scale-free graphs.} The number of edges equals the number of nodes.\label{fig:notears-SF}}
    \vspace{-2mm}
\end{figure*}

\textbf{Scale-free graphs.}
Regarding Scale-free graphs, Figure~\ref{fig:notears-SF} shows the results of algorithms in normal cases. 
Overall, despite not being the best performer in all metrics, \ourmethod~maintains a good trade-off between all metrics. 
Regarding the {\bf L-G} case (Figure~\ref{fig:notears-SF}-upper) shows a different trends in performance of baselines compared to \ourmethod~in term of SHD. In details, the SHD of our proposal gradually increases until plateaus out in the $400$ and $500$-node graphs contrasting to other baselines (e.g., SCORE, and MLP-NOTEARS) that show an accelerating rate in SHD towards $500$-node graphs. Furthermore, these baselines are noticeably worse than \ourmethod~when it comes to spurious rate (\ourmethod~produces an average of $5.52\%$ less spurious relationships than SCORE, $28.57\%$ than MLP-NOTEARS while being comparable to NOTEARS and DAS). As for the {\bf nL-nG} case (Figure~\ref{fig:notears-SF}-lower), we have the same observation where our proposal consistently being comparable with other baselines in term of SHD, having a low spurious rate and runtime, simultaneously.

As regarding the extreme cases on the Scale-free graphs, when the number of edges exceeds $700$, we encounter the following problem: the average number of parents for each node becomes exponentially large due to the nature of the graph. Such that the performance of MLP-Notears is heavily impacted: MLP-Notears cannot produce meaningful results within $48$ hours. Our proposal - {\bf GLIDE} is also impaired by this setting. The reason is that the data set is not sufficiently large to perform invariance test with adequate accuracy due to the exponentially large number of parents. Therefore, we do not include the report on performance of {\bf GLIDE} as well as other baselines on this particular setting. 

\begin{figure}[t]
    \centering
    \subfigure[Erdos Renyi graphs]{\includegraphics[width=0.25\linewidth]{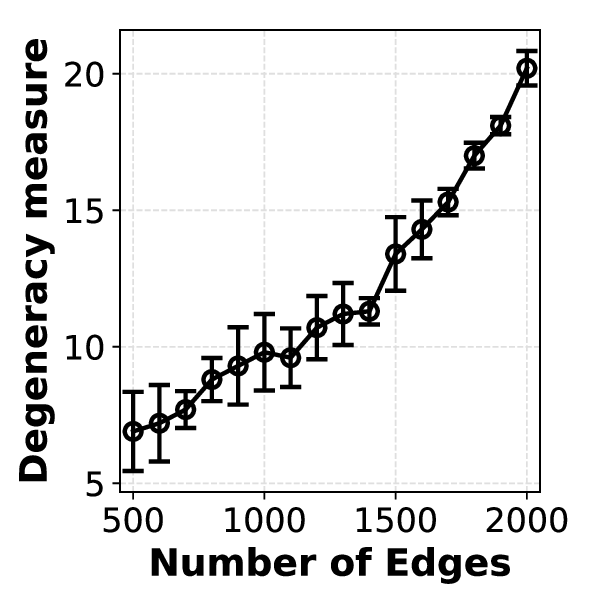}}
    \subfigure[Bipartite graphs]{\includegraphics[width=0.25\linewidth]{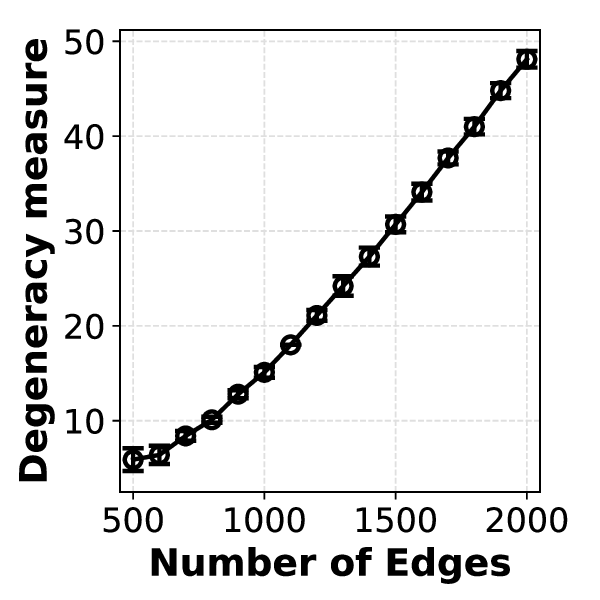}}
    \subfigure[Scale-free graphs]{\includegraphics[width=0.25\linewidth]{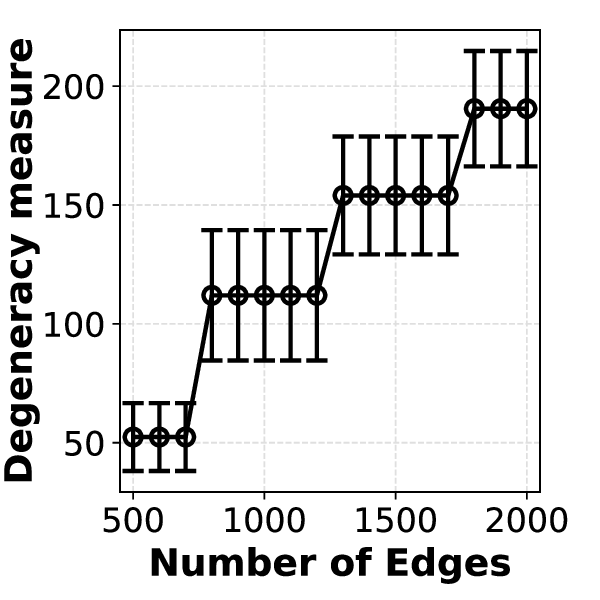}}
    \caption{\textbf{Degeneracy measure on different topologies.} The number of nodes is fixed at $500$.}
    \label{fig:ablation-degen}
    \vspace{-2mm}
\end{figure}

\subsubsection{Studies on the Degeneracy measure}
\label{ablation:degeneracy}
\def\arraystretch{1.2}
\begin{table}[t]
\small
\centering
\caption{\textbf{Degeneracy measures on real-world causal graphs.}}
\vspace{1mm}
\label{tab:degen-real-world}
\begin{tabular}{|l|c|c|c|c|c|c|c|}
\hline
Datasets        & Sachs & Insurance & Water & Alarm & Barley & Pathfinder & Munin  \\ \hline
Number of Nodes & $11$  & $27$      & $32$  & $37$  & $48$   & $186$      & $1041$ \\ \hline
Degeneracy $p$  & $3$   & $4$       & $6$   & $4$   & $5$    & $5$        & $4$    \\ \hline
\end{tabular}
\end{table}
In this ablation study, we investigate the degeneracy measure $p$ on different topologies and connect them to the performance of {\bf GLIDE}. As mentioned in Section~\ref{subsec:plausible}, the time complexity that {\bf GLIDE} requires to find the plausible parent sets of all $d$ variables is $O(pd \cdot 3^{p/3})$. As such, we want to assert that, the degeneracy $p$ of most graphs is indeed insignificant compared to the number of variables $d$ of those graphs. To show this, we design the following ablation study: we generate DAG with a fixed number of nodes ($500$) and an increasing number of edges ($500$ to $2000$). For each setting of the number of edges, we randomly generate $10$ graphs and record their degeneracy measure. Notice that most baselines presented in this research are incapable of running on graphs with $500$ nodes and more than $1000$ edges -- as we presented in Main text's Section~\ref{sec:eval} and Appendix~\ref{supp:ablation}. However, for the sake of the argument of this section, we increases the number of edges to $2000$ to investigate the range value of the degeneracy measure $p$. Figure~\ref{fig:ablation-degen} shows the mean and confidence interval $95\%$ of the degeneracy measure on different topologies (Erdos-Renyi, Bipartite, and Scale-free graphs) as the number of edges increases. 

Clearly, Bipartite graphs return the most stable degeneracy measure out of the three topologies. With very small variations, the degeneracy measures on increasingly dense Bipartite graphs demonstrate an almost linear growth. We speculate that this stable behavior of the degeneracy measure of the Bipartite graph benefits the performance of {\bf GLIDE}, as we can see in Figure~\ref{fig:notears-extreme-BP} where {\bf GLIDE} consistently and noticeably outperforms the other two prominent baselines, especially in term of runtime. It is worth noting that at the ($500$-node, $1000$-edge) setting, the degeneracy measure is roughly $15$, which is much less than the number of nodes. This shows that, at cases where most baselines take hours to solve, {\bf GLIDE} can still achieve almost quadratic time complexity (Section~\ref{subsec:plausible}) and thus only takes minutes to recover the causal graph.

Regarding the Erdos-Renyi graphs, we can see it random nature manifest in a noticeable -- yet not too significant -- variation of the degeneracy measure. Nonetheless, these values (even at maximum, e.g., $p = 13$ at $1000$-edge or $p=22$ at $2000$-edge graphs) are insignificant compared to the number of nodes. This partially explains the scalability of {\bf GLIDE} on the Erdos-Renyi graphs as we reported in Main text's Section~\ref{sec:eval}. Contrasting to the previous two, the degeneracy measure on Scale-free graphs shows an unstable behavior and a wide range of fluctuation. The degeneracy measure on $500$-edge graphs upto $700$-edge graphs has an average of $52.4 \pm 14.27$, which is highly unstable and generally much higher than that of the same setting but on other topologies. As the number of edges gradually reaches $700$, the degeneracy has a sudden leap upto an average of $112 \pm 27.43$. This unusual (and perhaps, non-linear) behavior may stem from the implementation~\citep{zheng2018dags} or may need further studies to address. Regardless, we can see that the range for degeneracy in Scale-free graphs are significantly higher than that in Erdos-Renyi or Bipartite graphs, which potentially affect the runtime of Algorithm~\ref{algo:build-tree}. However, as we will see in Appendix~\ref{ablation:plausible}, the number of plausible sets in Scale-free graphs is still negligible compared to the number of nodes.

Finally, we also investigate the degeneracy of real-world causal graphs. These graphs are the same as the one presented in the Main text's Section~\ref{sec:real-world}. As it turns out, real-world causal graphs are indeed sparse (see Table~\ref{tab:degen-real-world}). In details, the degeneracy of the causal graph of these dataset are: (Sachs: $3$, Insurance: $4$, Water: $6$, Alarm: $4$, Barley: $5$, Pathfinder: $5$, Munin: $4$). In which the Munin dataset has over $1000$ variables. This shows that, in practice, we indeed often encounter large causal graphs that are sparse.

\begin{figure}[t]
    \centering
    \begin{minipage}{0.78\linewidth}
        \centering
        \subfigure[Erdos-Renyi graphs]{\includegraphics[width=0.3\linewidth]{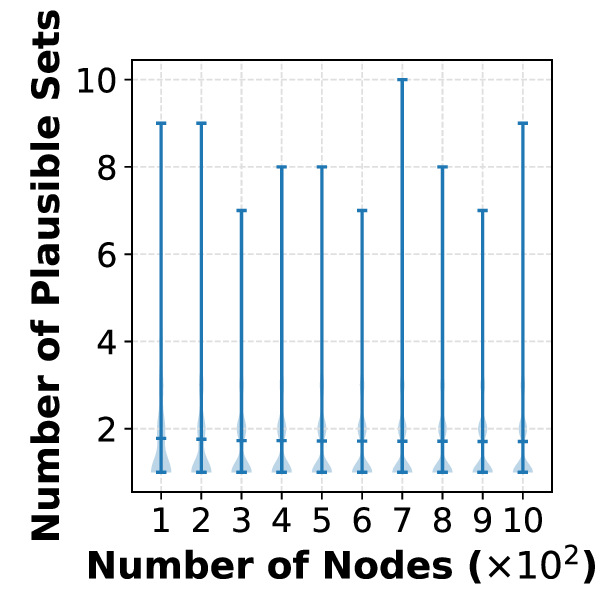}}
        \subfigure[Bipartite graphs]{\includegraphics[width=0.3\linewidth]{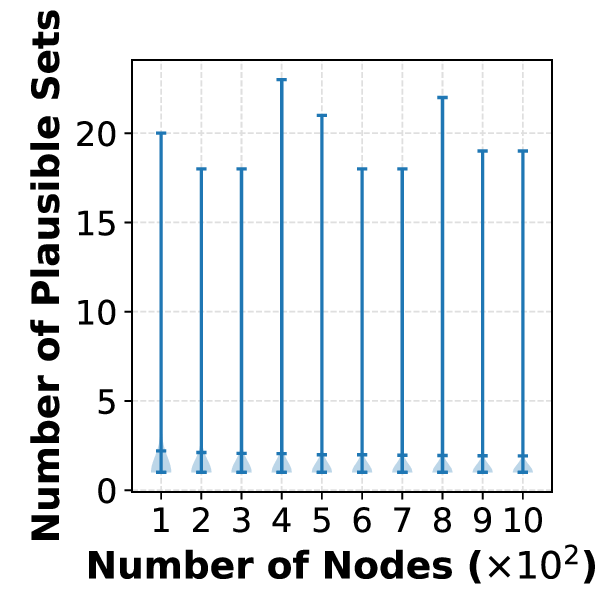}}
        \subfigure[Scale-free graphs]{\includegraphics[width=0.3\linewidth]{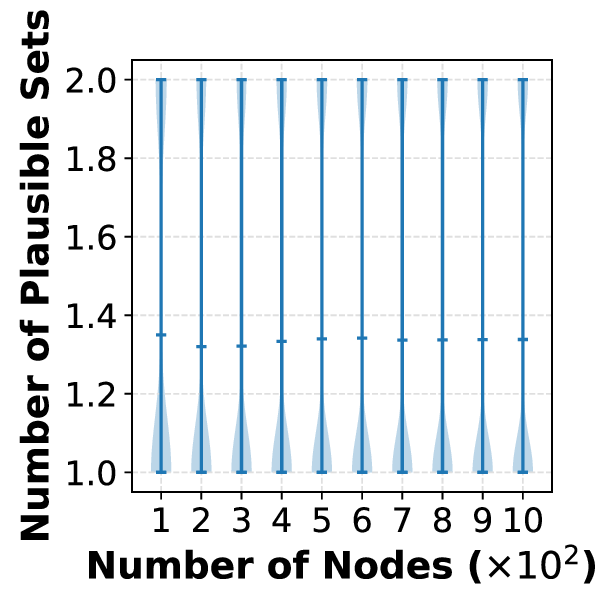}}
        \caption{\textbf{Investigation on the number of plausible parent sets in different topologies.} The number of edges equals the number of nodes.}
        \label{fig:ablation-plausible-1}
    \end{minipage}
    \hfill
    \begin{minipage}{0.78\linewidth}
        \centering
        \subfigure[Erdos-Renyi graphs]{\includegraphics[width=0.3\linewidth]{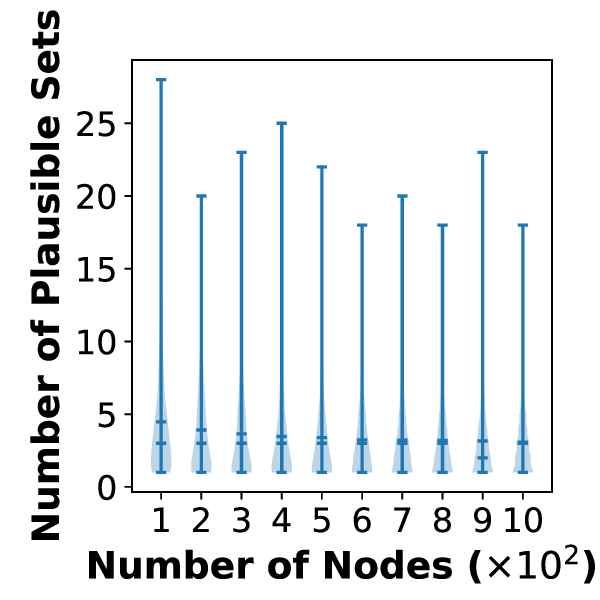}}
        \subfigure[Bipartite graphs]{\includegraphics[width=0.3\linewidth]{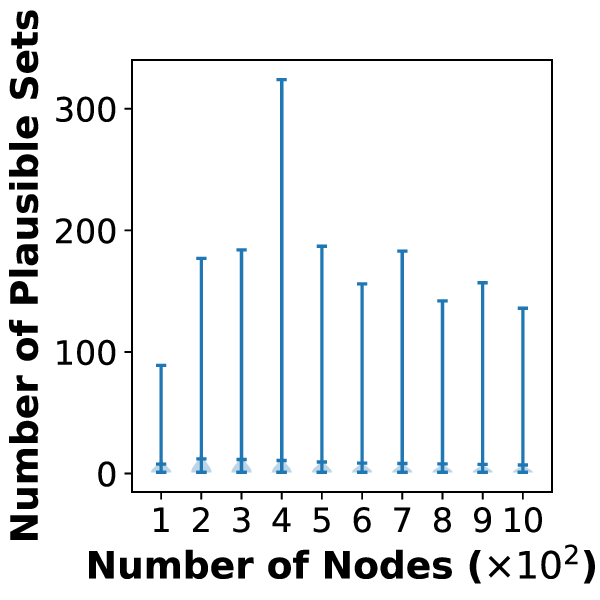}}
        \subfigure[Scale-free graphs]{\includegraphics[width=0.3\linewidth]{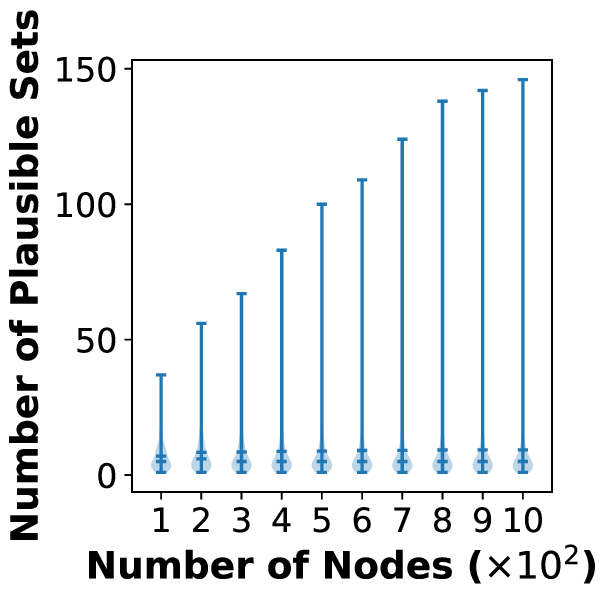}}
        \caption{\textbf{Investigation on the number of plausible parent sets in different topologies.} The number of edges doubles the number of nodes.}
        \label{fig:ablation-plausible-2}
    \end{minipage}
    \begin{minipage}{0.78\linewidth}
        \centering
        \subfigure[Erdos-Renyi graphs]{\includegraphics[width=0.3\linewidth]{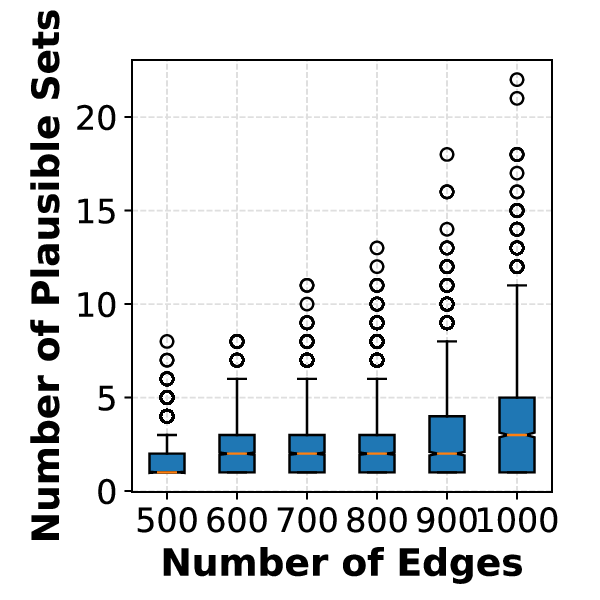}}
        \subfigure[Bipartite graphs]{\includegraphics[width=0.3\linewidth]{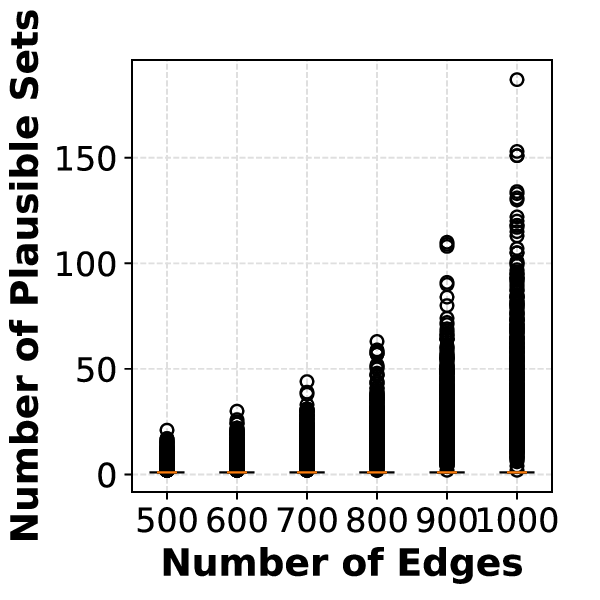}}
        \subfigure[Scale-free graphs]{\includegraphics[width=0.3\linewidth]{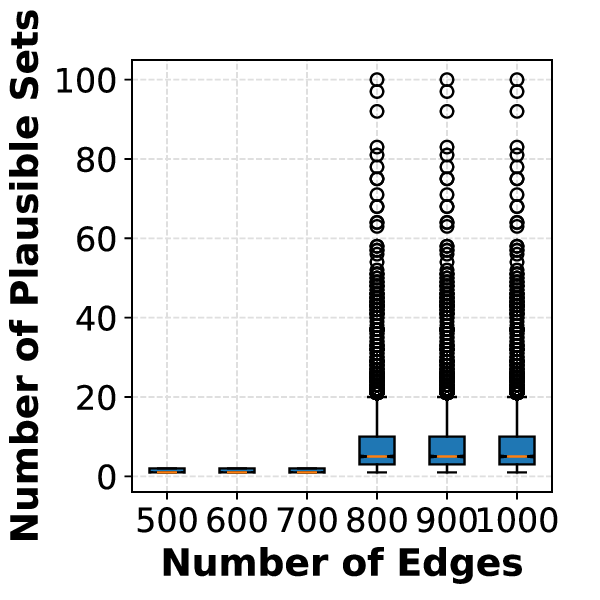}}
        \caption{\textbf{Investigation on the number of plausible parent sets in different topologies.} The number of nodes is fixed at $500$.\vspace{3mm}}
        \label{fig:ablation-plausible}
    \end{minipage}
    \vspace{-2mm}
\end{figure}

\subsubsection{Studies on the number of plausible parent sets}
\label{ablation:plausible}
The number of plausible parent sets plays an essential role in the time complexity of our proposal. As per Algorithm~\ref{algo:build-tree} -- an enhanced Bron-Kerbosch algorithm~\citep{bron1973algorithm}, each plausible set corresponds to a maximal clique. Theoretical results from~\cite{bron1973algorithm} give us the bound for the number of maximal clique, which is $O((d - p) \cdot 3^{p/3})$ where $p$ is the degeneracy of the graph. As we have shown in Appendix Section~\ref{ablation:degeneracy}, $p$ is indeed insignificant compared to $d$ in the common Erdos-Renyi and Bipartite class of graphs. In this section, we empirically investigate the number of plausible parent sets in different graph scenarios. For each scenarios, we randomly generate $10$ graphs using the source code provided by~\citep{zheng2018dags} and perform Algorithm~\ref{algo:build-tree}, then record the number of plausible parent sets of each node. We use the violin plot to show the distribution of the number of plausible sets at each setting. The maximum, minimum, mean, and median are reported.

Figure~\ref{fig:ablation-plausible-1} shows the number of plausible sets on graphs whose number of nodes equals that of edges, which gradually increases from $100$ to $1000$. Notice that this setting mimics that of experiments that we reported in normal cases in Main text's Section~\ref{sec:synthetic-continuous} but the number of nodes grows from $100$ to $500$. As we can see in Figure~\ref{fig:ablation-plausible-1}(a), the number of plausible sets on Erdos-Renyi graphs peaks at $10$, with mean being roughly $2$. This supports the excellent runtime of {\bf GLIDE} as we reported. When it comes to Bipartite graphs, the number of plausible sets is approximately double that in the same settings on the Erdos-Renyi graphs. However, in all cases, this figure is much lower than the number nodes. Noticably, in the case of Scale-free graph, we see an almost similar distribution of the number of plausible sets across graphs with different nodes. In details, most nodes have only $1$ plausible parent set, and other have $2$. This observation combined with nature of the Scale-free graphs, suggest that each plausible set contains a large number of nodes. Consequently, given that the observational data volume is limited at $10000$, the accuracy of the invariance test of {\bf GLIDE} reduces significantly, which explains the performance that we reported in Appendix Section~\ref{supp:ablation}.

We also conduct experiments on graphs that are twice as dense (the number of edges is doubled that of nodes) as in the previous setting, and the results are reported in Figure~\ref{fig:ablation-plausible-2}. Interestingly, while the number of plausible sets on Erdos-Renyi roughly doubles (both maximum and medium) compared to the previous setting, the that on Bipartite graphs increases significantly. In details, in most setting, the maximum number of plausible ranges from under $100$ (on $100$-node graphs) to over $300$ (on 400-node graphs), which is innegligible compared to the number of nodes. Nontheless, these figures are still strictly less than the number of nodes. It is worth noting that most nodes in the graphs have about $10$ to $15$ plausible parent sets. As for the Scale-free graphs (Figure~\ref{fig:ablation-plausible-2}(c)), despite the median number of plausible sets is $10$, the maximum increases linearly with the number of nodes and then plateaus out at about $150$ when the number of nodes reaches $900$.

Lastly, we study the number of plausible sets at extreme scenarios (as in Main text's Section~\ref{sec:synthetic-continuous}): we fix the number of nodes at $500$, and increase the number edges from $500$ to $1000$. The results are depicted by box plot in Figure~\ref{fig:ablation-plausible}. The number of plausible sets on the Erdos-Renyi graphs is strictly less than $25$ even in the hardest setting ($1000$-edge graphs) while the mean value is roughly $3.8$ and $75\%$-percentile is at $5$. In contrast, the number of plausible sets on the Bipartite graphs grows non-linearly, reaching a maximum (outlier) $180$ on $1000$-edge graphs. However, since most nodes do not have more than $5$ plausible parent sets, we see that the outliers have little effect on the overall performance of {\bf GLIDE}, as can be seen in Figure~\ref{fig:notears-extreme-BP}. On the other hand, the number of plausible sets on Scale-free graphs is very unstable, and (interestingly) has the same trend as the degeneracy (reported in Figure~\ref{fig:ablation-degen}(c)). In details, when the number of edges is below $700$, we have the number of plausible sets ranges from $1$ to $3$ with the mean $1.4$. But when the number of edges exceeds $700$, despite the mean of $5$ and $75\%$-percentile of $10$, the maximum number of plausible sets may reach to over $100$. 

\subsubsection{\hung{Independence of varsortability}}
\begin{table}[t]
\small
\caption{\textbf{A case study on sortability.} Comparison with Varsort and R2sort.}\vspace{0.2cm}
\label{tab:sortability}
\begin{tabular}{@{ }l@{ }c@{ }c@{ }c|@{ }c@{ }c@{ }c|@{ }c@{ }c@{ }c|@{ }c@{ }c@{ }c|}
\cline{5-13}
 & \multicolumn{1}{l}{} & \multicolumn{1}{l}{} & \multicolumn{1}{l|}{} & \multicolumn{3}{c|}{\textbf{SHD}} & \multicolumn{3}{c|}{\textbf{Spurious rate (\%)}} & \multicolumn{3}{c|}{\textbf{Time (min)}} \\ \cline{2-13} 
\multicolumn{1}{l|}{} & \multicolumn{1}{c|}{\textbf{|E|}} & \textbf{\begin{tabular}[c]{@{}c@{}}Varsort-\\ability\end{tabular}} & \textbf{\begin{tabular}[c]{@{}c@{}}R2sort-\\ ability\end{tabular}} & \textbf{Varsort} & \textbf{R2sort} & \textbf{Ours} & \textbf{Varsort} & \textbf{R2sort} & \textbf{Ours} & \textbf{Varsort} & \textbf{R2sort} & \textbf{Ours} \\ \hline
\multicolumn{1}{|l|}{L-G} & \multicolumn{1}{c|}{500} & 0.98 & 0.76 & \textbf{92} & 814 & 249.12 & 12.74 & 55.28 & \textbf{1.98} & 11.01 & \textbf{8.56} & 27.25 \\
\multicolumn{1}{|l|}{L-G} & \multicolumn{1}{c|}{1000} & 0.98 & 0.92 & 739 & 4600 & \textbf{539.6} & 40.83 & 81.43 & \textbf{5.78} & 10.76 & \textbf{10.21} & 15.19 \\ \hline
\multicolumn{1}{|l|}{nL-nG} & \multicolumn{1}{c|}{500} & 0.88 & 0.32 & 587 & 1316 & \textbf{340.66} & 50.99 & 68.02 & \textbf{4.18} & 27.57 & 26.21 & \textbf{7.25} \\
\multicolumn{1}{|l|}{nL-nG} & \multicolumn{1}{c|}{1000} & 0.87 & 0.23 & 1642 & 4157 & \textbf{710.66} & 59.35 & 78.41 & \textbf{7.82} & 22.28 & 21.99 & \textbf{5.43} \\ \hline
\end{tabular}
\end{table}

\begin{table}[t]
\centering
\caption{Varsort, R2sort, and GLIDE on real-world graphs.}\vspace{0.2cm}
\label{tab:sortability-realworld}
\begin{tabular}{lccccc}
\multicolumn{1}{c}{\textbf{Datasets}} & \textbf{Baselines} & \textbf{Sortability} & \textbf{SHD} & \textbf{\begin{tabular}[c]{@{}c@{}}Spurious\\ rate (\%)\end{tabular}} & \textbf{\begin{tabular}[c]{@{}c@{}}Runtime\\ (min)\end{tabular}} \\ \hline
\multicolumn{1}{c|}{\multirow{3}{*}{Insurance}} & \multicolumn{1}{c|}{\textbf{Varsort}} & 0.64 & 105 & 62.01 & \textbf{2.54} \\
\multicolumn{1}{c|}{} & \multicolumn{1}{c|}{\textbf{R2sort}} & 0.18 & 154 & 69.93 & 2.71 \\
\multicolumn{1}{c|}{} & \multicolumn{1}{c|}{\textbf{GLIDE}} & \multicolumn{1}{l}{} & \textbf{18} & \textbf{3.6} & 34.2 \\ \hline
\multicolumn{1}{l|}{\multirow{3}{*}{Water}} & \multicolumn{1}{c|}{\textbf{Varsort}} & 0.69 & 93 & 51.18 & 4.03 \\
\multicolumn{1}{l|}{} & \multicolumn{1}{c|}{\textbf{R2sort}} & 0.41 & 145 & 62.96 & \textbf{3.39} \\
\multicolumn{1}{l|}{} & \multicolumn{1}{c|}{\textbf{GLIDE}} & \multicolumn{1}{l}{} & \textbf{41.6} & \textbf{23} & 49.1 \\ \hline
\multicolumn{1}{l|}{\multirow{3}{*}{Alarm}} & \multicolumn{1}{c|}{\textbf{Varsort}} & 0.55 & 108 & 65.87 & \textbf{5.29} \\
\multicolumn{1}{l|}{} & \multicolumn{1}{c|}{\textbf{R2sort}} & 0.59 & 118 & 68.57 & 5.38 \\
\multicolumn{1}{l|}{} & \multicolumn{1}{c|}{\textbf{GLIDE}} & \multicolumn{1}{l}{} & \textbf{27.8} & \textbf{13.1} & 43.3 \\ \hline
\multicolumn{1}{l|}{\multirow{3}{*}{Barley}} & \multicolumn{1}{c|}{\textbf{Varsort}} & 0.76 & 127 & 54.33 & 7.78 \\
\multicolumn{1}{l|}{} & \multicolumn{1}{c|}{\textbf{R2sort}} & 0.36 & 181 & 61.05 & \textbf{7.5} \\
\multicolumn{1}{l|}{} & \multicolumn{1}{c|}{\textbf{GLIDE}} & \multicolumn{1}{l}{} & \textbf{45.8} & \textbf{8.2} & 61.7 \\ \hline
\multicolumn{1}{l|}{\multirow{3}{*}{Pathfinder}} & \multicolumn{1}{c|}{\textbf{Varsort}} & 0.79 & 1496 & 89.29 & \textbf{43.91} \\
\multicolumn{1}{l|}{} & \multicolumn{1}{c|}{\textbf{R2sort}} & 0.05 & 2433 & 92.94 & 44.94 \\
\multicolumn{1}{l|}{} & \multicolumn{1}{c|}{\textbf{GLIDE}} & \multicolumn{1}{l}{} & \textbf{59.1} & \textbf{1.9} & 197 \\ \hline
\end{tabular}
\end{table}

\hung{In this section, we include baselines Varsort~\citep{reisach2021beware} and R2sort~\citep{reisach2023scale} into experiments, to show that the performance of \textbf{GLIDE} is independent of sortability measures as proposed in these works. Sortability is the score defined in the interval $[0,1]$ where $0$ implies extreme difficulty to find the topological order over the variables, and $1$ implies extreme easiness. Following the same practice in the main text, we run Varsort and R2sort on Erdos-Renyi graphs, with $500$ nodes and $\{500,1000\}$ edges. The results are shown in Table~\ref{tab:sortability}. Generally, despite high var-sortability, Varsort is outperformed by \textbf{GLIDE} in $3/4$ cases. Specifically, \textbf{GLIDE} is evidently superior when it comes to extremely large graphs, as can be seen in a stark reduction in term of both SHD (up to $2.2\times$) and Spurious rate (up to $8\times$). R2-sortability, on the other hand, plummets as we move from Linear Gaussian models to non-Linear non-Gaussian models, which results in its poor performance. Overall, \textbf{GLIDE} is more preferable compared to both of these baselines in synthetic scenarios.}

\hung{We also conduct the experiments on real-world scenarios (graphs provided in the bnlearn package), and the results are shown in Table~\ref{tab:sortability-realworld}. It is clear that real-world graphs are much harder to solve as evidently shown by the low sortability of both Varsort and R2sort. Regardless, \textbf{GLIDE} outputs a relatively good result with respect to such difficulties. Especially, in the Pathfinder dataset, \textbf{GLIDE} achieves a remarked low spurious rate of $1.9\%$ whereas that of Varsort and R2sort are $89.29\%$ and $92.94\%$, respectively. Furthermore, we also record a significantly lower in SHD, almost $42\times$ lower than R2sort and $25\times$ lower than Varsort.}

\hung{In conclusion, the performance of \textbf{GLIDE} is relatively independent of sortability~\citep{reisach2021beware, reisach2023scale} as empirical evidences have shown. Furthurmore, these experiments also show that our experimental design is extensive and not biased.}

%% file: tab/realworld-data.tex
\begin{table*}[t]
\centering
\caption{\centering\textbf{Performance on real-world datasets.} N/A denotes no results within $48$ hours.}
\vspace{0.25cm}
\label{tab:realworld}
    \resizebox{\linewidth}{!}{
        \begin{tabular}{|c|l||r|r|r|r|r|r|r|}
        \toprule
        \textbf{Metrics} & \textbf{Method} 
        & \begin{tabular}[c]{@{}c@{}}\textbf{Sachs}\\  (11)\end{tabular}  
        & \begin{tabular}[c]{@{}c@{}}\textbf{Insurance} \\ (27)\end{tabular}  
        & \begin{tabular}[c]{@{}c@{}}\textbf{Water} \\ (32)\end{tabular}
        & \begin{tabular}[c]{@{}c@{}}\textbf{Alarm} \\ (37)\end{tabular}
        & \begin{tabular}[c]{@{}c@{}}\textbf{Barley} \\ (48)\end{tabular} 
        & \begin{tabular}[c]{@{}c@{}}\textbf{Pathfinder} \\ (186)\end{tabular}
        & \begin{tabular}[c]{@{}c@{}}\textbf{Munin} \\ (1041)\end{tabular}
        \\ \midrule
        \multirow{3}{*}{\textbf{SHD}} 
        & GIES &14.0 $\pm$ 0.0 &
36.0 $\pm$ 0.0 &
49.0 $\pm$ 0.0 &
44.0 $\pm$ 0.0 &
65.0 $\pm$ 0.0 &
1156.0 $\pm$ 0.0 &
1235.0 $\pm$ 0.0  \\
        & PC & 10.3 $\pm$ 0.7 &
33.7 $\pm$ 0.7 &
54.3 $\pm$ 0.7 &
35.7 $\pm$ 2.4 &
58.3 $\pm$ 1.7 &
N/A &
N/A \\
        & \textbf{GLIDE} & \textbf{5.2 $\pm$ 0.4} &
\textbf{18.0 $\pm$ 2.8} &
\textbf{41.6 $\pm$ 1.8} &
\textbf{27.8 $\pm$ 2.0} &
\textbf{45.8 $\pm$ 2.4} &
\textbf{59.1 $\pm$ 1.9} &
\textbf{883.2 $\pm$ 21.8}
        \\ \midrule
        \multirow{3}{*}{\begin{tabular}[c]{@{}c@{}}\textbf{Spurious} \\ \textbf{rate}\\  (\%)\end{tabular}} 
        & GIES & 34.8 $\pm$ 0.0 &
31.9 $\pm$ 0.0 &
35.8 $\pm$ 0.0 &
44.3 $\pm$ 0.0 &
35.8 $\pm$ 0.0 &
87.0 $\pm$ 0.0 &
42.4 $\pm$ 0.0\\
        & PC & 4.2 $\pm$ 4.1 &
\textbf{0.0 $\pm$ 0.0} & 
\textbf{17.1 $\pm$ 2.1} &
27.7 $\pm$ 3.6 &
17.8 $\pm$ 1.9 &
N/A &
N/A \\
        & \textbf{GLIDE} & \textbf{0.0 $\pm$ 0.0} &
3.6 $\pm$ 2.7 &
23.0 $\pm$ 0.9 &
\textbf{13.1 $\pm$ 0.9} &
\textbf{8.2 $\pm$ 1.1} &
\textbf{1.9 $\pm$ 0.5} &
\textbf{1.8 $\pm$ 0.2}
        \\ \midrule
        \multirow{3}{*}{\begin{tabular}[c]{@{}c@{}}\textbf{Runtime}\\  (seconds)\end{tabular} }
        & GIES & 
\textbf{1.5 $\pm$ 0.4} & 
\textbf{2.7 $\pm$ 0.6} & 
\textbf{3.1 $\pm$ 0.6} & 
\textbf{3.3 $\pm$ 0.4} & 
\textbf{3.4 $\pm$ 0.8}& 
\textbf{19.3 $\pm$ 0.4} & 
\textbf{61.5 $\pm$ 15.2}  \\
        & PC &  
52.0 $\pm$ 1.8 &
450.3 $\pm$ 19.4 &
573.8 $\pm$ 16.1 &
503.2 $\pm$ 21.2 &
761.1 $\pm$ 9.5 & 
N/A &
N/A \\
        & \textbf{GLIDE} & 
23.0 $\pm$ 1.1 & 
34.2 $\pm$ 0.9 &
49.1 $\pm$ 3.3 &
43.3 $\pm$ 0.4 &
61.7 $\pm$ 2.0 & 
197.0 $\pm$ 21.2 &
6200.3 $\pm$ 88.7
    \\ \bottomrule
    \end{tabular}
}
\end{table*}

%% file: tab/ablation-K-gamma.tex
\begin{table}[t]
\centering
\caption{\textbf{Influence of hyper-parameters.} Mean value and confidence interval $95\%$ are reported.}
\vspace{0.2cm}
\label{tab:ablation-params}
\begin{tabular}{|l|c|lll|}
\hline
$m$ &
  $\gamma$ &
  \multicolumn{1}{c}{SHD} &
  \multicolumn{1}{c}{\begin{tabular}[c]{@{}c@{}}Spurious rate \\ (\%)\end{tabular}} &
  \multicolumn{1}{c|}{\begin{tabular}[c]{@{}c@{}}Runtime \\ (minutes)\end{tabular}} \\ \hline \hline
\multirow{4}{*}{10} & 0.2 & 122.2 $\pm$ 9.788  & 5.91 $\pm$ 1.678 & 1.95 $\pm$ 0.037 \\
                    & 0.4 & 127.4 $\pm$ 10.31 & 5.41 $\pm$ 0.951 & 2.02 $\pm$ 0.127 \\
                    & 0.6 & 113.6 $\pm$ 2.525  & 4.64 $\pm$ 1.01  & 2.39 $\pm$ 0.113 \\
                    & 0.8 & 121.0 $\pm$ 7.717  & 4.19 $\pm$ 1.552 & 2.45 $\pm$ 0.114 \\ \hline
\multirow{4}{*}{20} & 0.2 & 116.8 $\pm$ 9.139  & 5.53 $\pm$ 0.841 & 4.54 $\pm$ 0.221  \\
                    & 0.4 & 109.8 $\pm$ 8.796  & 4.30 $\pm$ 0.792  & 4.18 $\pm$ 0.086 \\
                    & 0.6 & 123.0 $\pm$ 2.772  & 5.96 $\pm$ 0.957 & 3.88 $\pm$ 0.098 \\
                    & 0.8 & 112.8 $\pm$ 5.958  & 3.27 $\pm$ 1.448 & 3.93 $\pm$ 0.095 \\ \hline
\multirow{4}{*}{30} & 0.2 & 117.0 $\pm$ 4.679  & 5.19 $\pm$ 0.505 & 5.67 $\pm$ 0.068 \\
                    & 0.4 & 112.8 $\pm$ 7.874  & 4.37 $\pm$ 0.613 & 5.66 $\pm$ 0.012 \\
                    & 0.6 & 102.0 $\pm$ 6.263   & 3.85 $\pm$ 1.603 & 5.58 $\pm$ 0.218 \\
                    & 0.8 & 106.0 $\pm$ 9.461  & 4.15 $\pm$ 0.981  & 5.75 $\pm$ 0.194 \\ \hline
\multirow{4}{*}{40} & 0.2 & 114.0 $\pm$ 5.714  & 4.79 $\pm$ 0.753 & 7.44 $\pm$ 0.032 \\
                    & 0.4 & 112.8 $\pm$ 5.088  & 3.94 $\pm$ 0.529 & 7.50 $\pm$ 0.028  \\
                    & 0.6 & 108.6 $\pm$ 8.072  & 3.97 $\pm$ 1.536 & 7.30 $\pm$ 0.051   \\
                    & 0.8 & 108.8 $\pm$ 4.444  & 4.03 $\pm$ 1.279 & 7.36 $\pm$ 0.048 \\ \hline
\multirow{4}{*}{50} & 0.2 & 100.8 $\pm$ 2.432  & 5.03 $\pm$ 0.518 & 8.97 $\pm$ 0.024 \\
                    & 0.4 & 106.6 $\pm$ 5.834  & 4.64 $\pm$ 0.958 & 9.35 $\pm$ 0.061  \\
                    & 0.6 & 99.01 $\pm$ 6.261    & 3.68 $\pm$ 1.589 & 9.21 $\pm$ 0.021 \\
                    & 0.8 & 107.6 $\pm$ 7.606  & 4.61 $\pm$ 0.955 & 9.22 $\pm$ 0.039 \\ \hline
\end{tabular}
\end{table}

%% file: supp-weak-intervention.tex
\begin{figure}[H]
    \centering
    \subfigure[Illustration of the space of $C_r(\gamma_0)$, original prior $P(\boldsymbol{B})$, and point mass distributions $\delta(\boldsymbol{B})$.]{\includegraphics[width=0.3\linewidth]{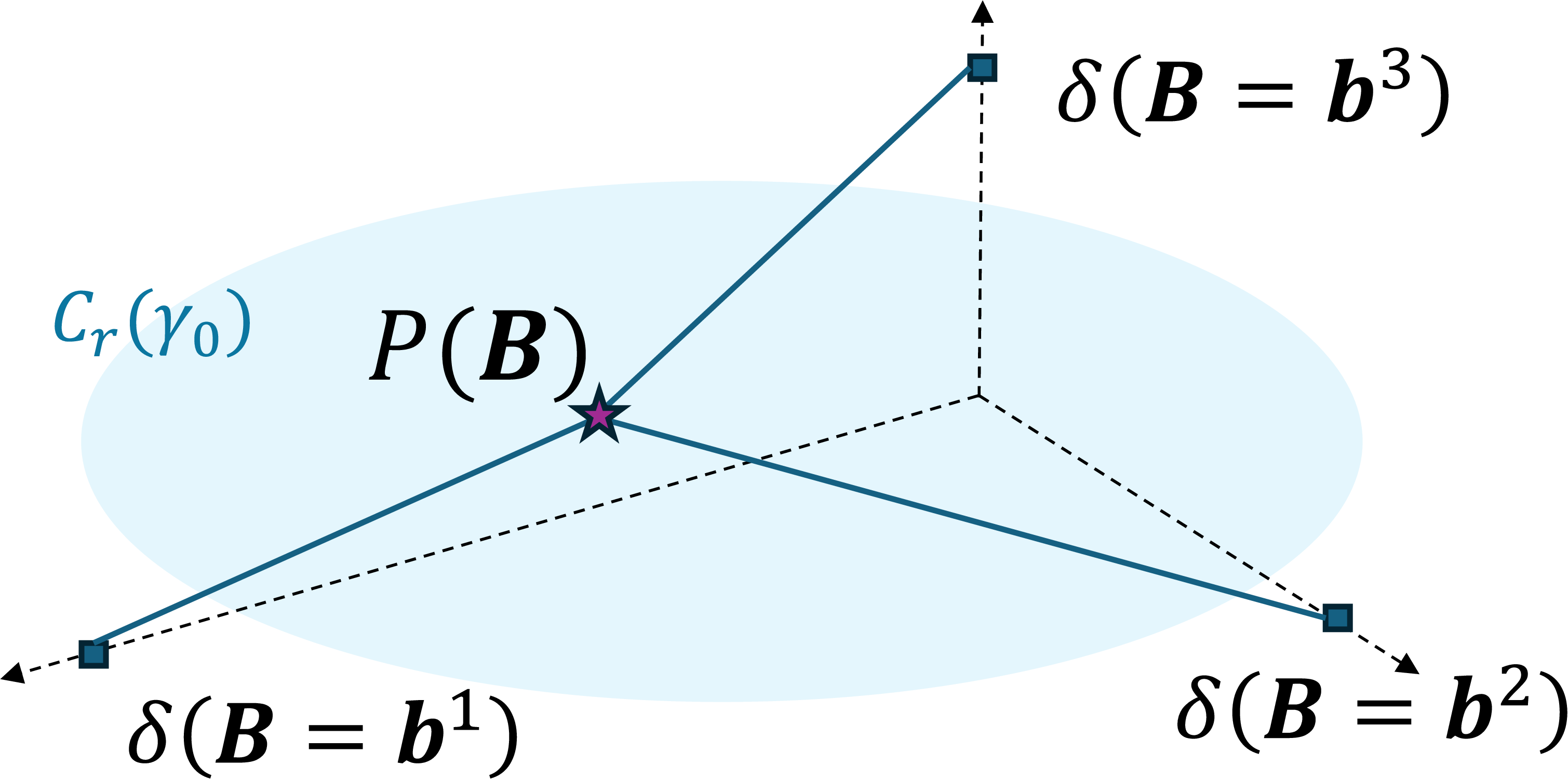}}
    \hfill
    \subfigure[Step 1: Computing the boundary points $P^{(i)}(\boldsymbol{B})$ via Eq.17, which forms a convex hull.]{\includegraphics[width=0.3\linewidth]{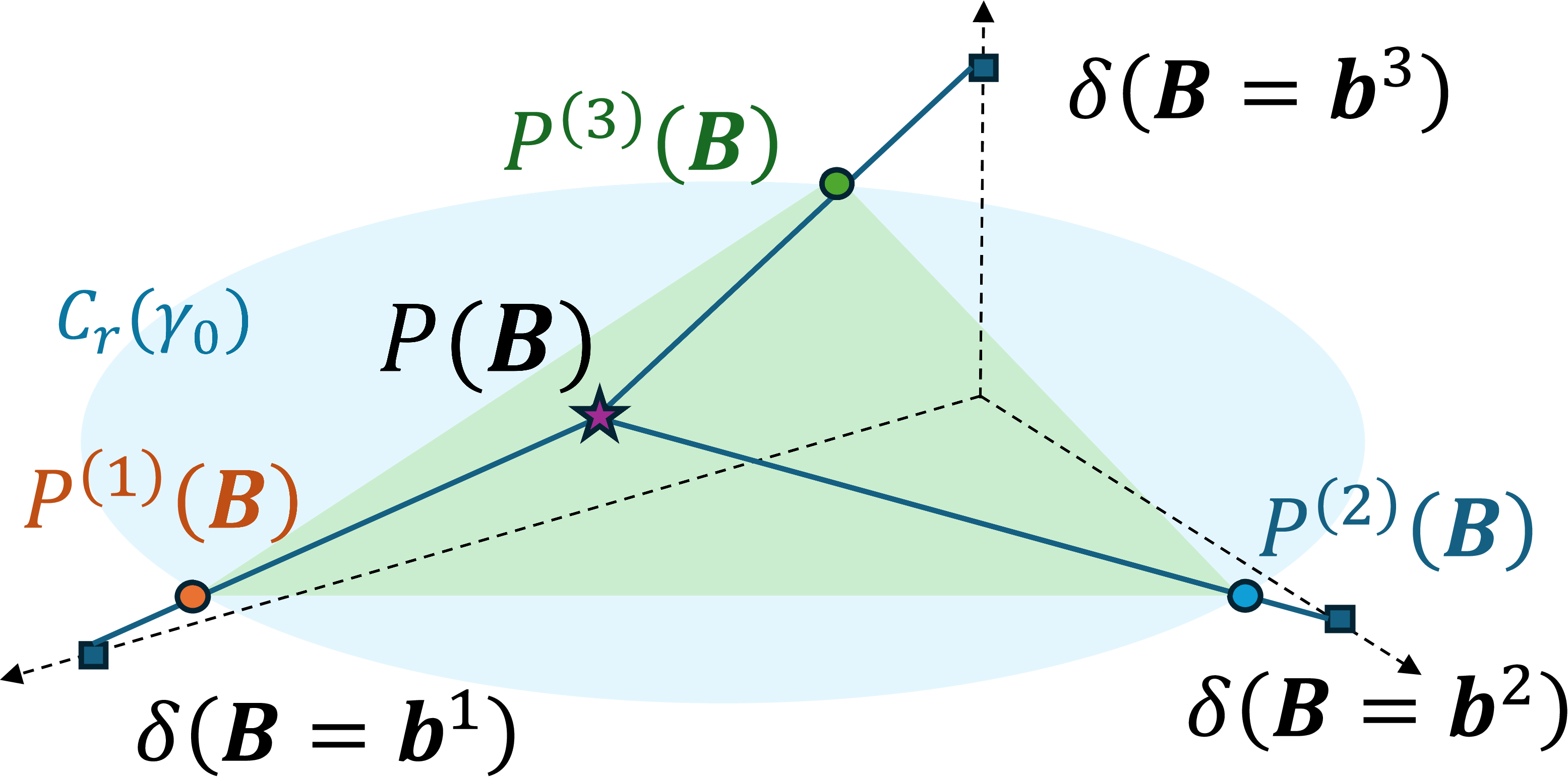}}
    \hfill
    \subfigure[Step 2: Performing weighted average of the boundary points to generate midpoints $P_i(\boldsymbol{B})$.]{\includegraphics[width=0.3\linewidth]{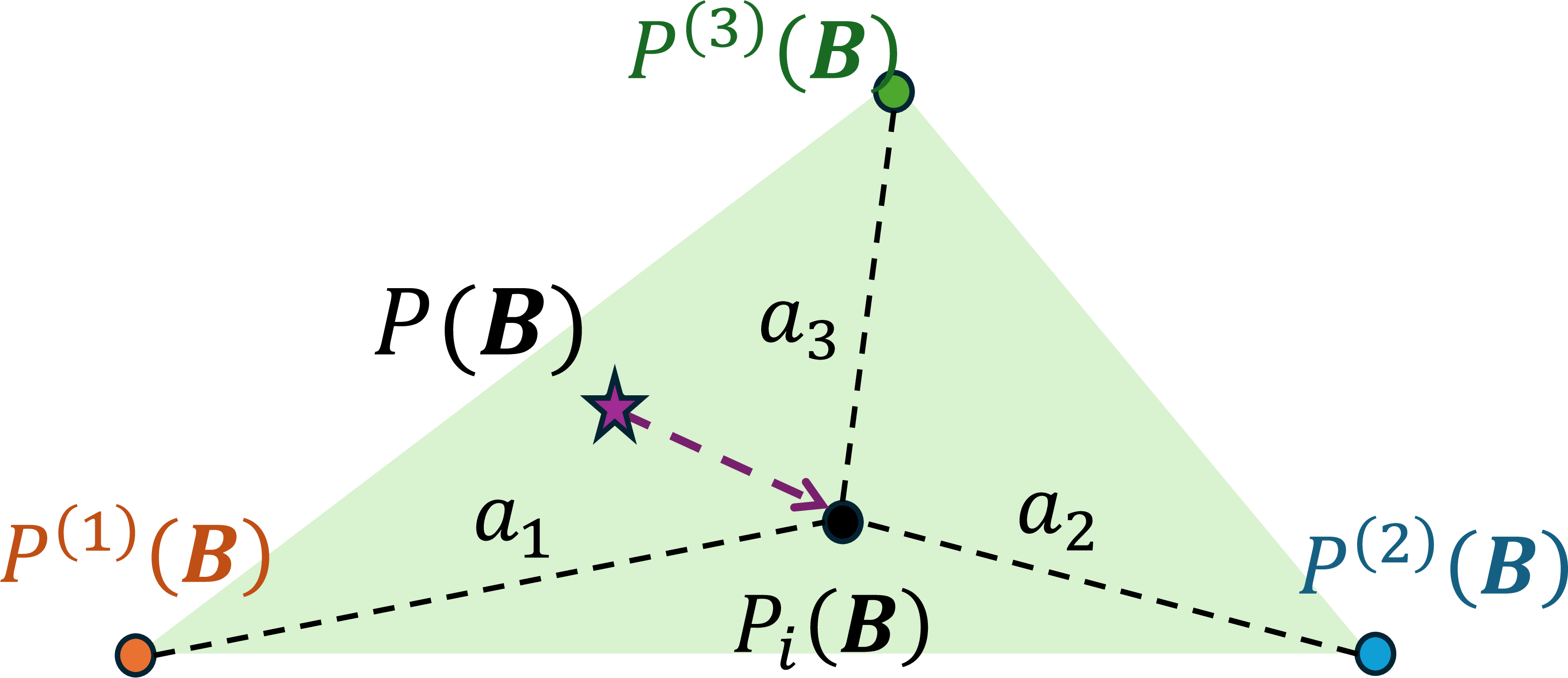}}
    \hfill
    \caption{The procedure of generating prior distributions.}
    \label{fig:gen-dis-demonstrate}
\end{figure}

\begin{figure}[H]
    \centering
    \subfigure[Before applying K-means.]{\includegraphics[width=0.35\linewidth]{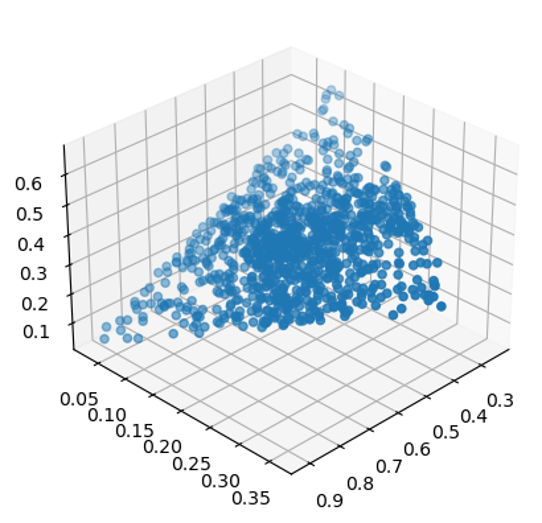}\label{fig:before-kmeans}}
    \subfigure[After applying K-means.]{\includegraphics[width=0.35\linewidth]{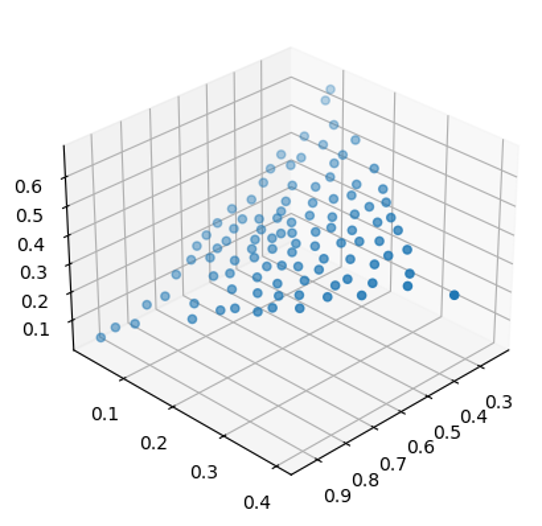}\label{fig:after-kmeans}}
    \caption{\centering\textbf{A mock-up experiment:} Representative distributions selection via K-means.}
    \label{fig:kmeans}
\end{figure}

\subsection{On the generation of prior distributions}
\label{supp:weak_intervention}

The fact that we can sample an infinite number of prior distributions $P_i(\boldsymbol{B})$ in $C_r(\gamma_0)$ is not helpful from the computational point of view. Because a larger number of prior distributions incurs a corresponding increase in the processing time of the invariance test. Furthermore, when the number of sampled prior distributions grows too large, it is inevitable that there exist similar prior distributions, which in fact does not improve the quality of the invariance test. On the other hand, an insufficient number of prior distributions reduces the reliability of the invariance test. Therefore, we wish to select only an adequate number of representative prior distributions within $C_r(\gamma_0)$. 

Inspired by the aforementioned logic, we perform the following:
\begin{enumerate}
    \item Sampling $P_i(\boldsymbol{B})$ where $i$ goes up to $10^4$ (Figure~\ref{fig:before-kmeans}) to overwhelmingly fill in the convex hull $C_r(\gamma_0)$. We use the Dirichlet distribution to sample weighted vectors $\boldsymbol{a}^{(i)}$.
    \item Using the K-means algorithm with parameter $K=m$ to cluster them into $m$ partitions, each corresponds to a representative centroid. These centroids are the output prior distributions of the procedure (Figure~\ref{fig:after-kmeans}). 
\end{enumerate}
To this end, we have sampled $m$ prior distributions that act as variants of the original $P(\boldsymbol{B})$ and can be used to fuel the invariance test by applying Theorem~\ref{thm:5}.

%% file: discussion.tex
\subsection{Further Discussion}

\textbf{Parallelism Prospect.} Our proposed framework \ourmethod~is readily applicable to scenarios where the data is distributed across multiple, private local devices such as the popular federated learning setting. In such scenarios, \ourmethod~will benefit directly from the fact that the distributed datasets, which are presumed to be governed by the same causal model, are already admitting identical underlying conditional distributions. As such, \ourmethod~can use those local datasets as synthetic augmented datasets which are essential to the effect-cause distributional invariance test. This can save time and help avoid unnecessary sampling errors.

\textbf{Limitations.} 
The proposed framework \ourmethod~has the following limitations:

\textbf{1.} Our performance partially relies on how well Algorithm~\ref{algo:find-basis} can find the source nodes for the basis. As previously mentioned, it is not guaranteed that Algorithm~\ref{algo:find-basis} will find all the basis variables. However, our extensive experiments have shown that when the number of data augmentations increases, the chance that non-source nodes are included in the basis set is lessen.

\textbf{2.} \ourmethod's performance might be hampered on scenarios where the observational data is insufficient to discern the true complexity of the underlying causal graph -- as can be seen with scale-free graphs. 


%% file: supp-reproduce.tex
\textbf{Software.} Our implementation is in Python. The requirements include the installation of the causal-learn package \citep{zheng2024causal} for the use of CITs, scikit-learn Python package, along with pandas and numpy, which are standard libraries commonly used in Python. 

\begin{minipage}{0.1\linewidth}
\end{minipage}
\hfill
\begin{minipage}{0.9\linewidth}
{\bf Support Module for Continuous Data.} For the use of our proposal for continuous data models, we use a Discretizer module provided by the Python standard scikit-learn library. The number of discretizing bins is fixed at $4$ and the width of bins is equal to capture the marginal distribution of each variable in the observational data. As for the categorical data models, this module is deactivated.
\end{minipage}

\textbf{Hardware.} All experiments are run on a machine with a 64-core Intel(R) Xeon(R) Gold 6242 CPU @ 2.80GHz. The running of GPU-based baselines is conducted on an NVIDIA GeForce RTX 4090.
